\documentclass{article}

\usepackage[final,main]{neurips_2026}

\usepackage[utf8]{inputenc} %
\usepackage[T1]{fontenc}    %
\usepackage[hidelinks]{hyperref}       %
\usepackage{url}            %
\usepackage{booktabs}       %
\usepackage{amsfonts}       %
\usepackage{nicefrac}       %
\usepackage{microtype}      %
\usepackage{xcolor}         %
\usepackage{graphicx}
\usepackage{algorithm}
\usepackage{amsmath}
\usepackage{amssymb,mathrsfs}
\usepackage{amsthm}
\usepackage{float}
\usepackage{algpseudocode}
\usepackage{tikz}
\usetikzlibrary{arrows.meta, positioning}

\newcommand{\w}{\boldsymbol{\theta}}
\newcommand{\q}{\mathbf{z}}
\newcommand{\dir}{\mathbf{g}}
\newcommand{\mom}{\mathbf{m}}
\newcommand{\acc}{\mathbf{v}}
\newcommand{\binit}{v_{\mathrm{init}}}
\DeclareMathOperator{\polylog}{polylog}
\DeclareSymbolFont{orderletters}{U}{eur}{m}{n}
\DeclareMathSymbol{\orderOmega}{\mathord}{orderletters}{"0A}

\newtheorem{assumption}{Assumption}[section]
\newtheorem{thm}{Theorem}[section]
\newtheorem{lemma}{Lemma}[section]

\newtheorem{claim}{Claim}[section]

\newtheorem{proposition}{Proposition}[section]

\title{Adam under Generalized Smoothness with Second-Moment-Type Stochastic Gradients}

\author{%
  Ruinan Jin\\
  Mohamed bin Zayed University of Artificial Intelligence\\
  \texttt{jinruinan3@gmail.com}\\
  \And
  Difei Cheng\\
  Aerospace Information Technology University\\
  \And
  Ling Chen\\
  The Ohio State University\\
  \AND
  Jun Luo\\
  The Ohio State University\\
  \And
  Hao Zhou\\
  JD.com, Inc.\\
  \And
  Youzhi Zhang\thanks{Corresponding author.}\\
  Centre for Artificial Intelligence and Robotics,\\
  Hong Kong Institute of Science \& Innovation,\\
  Chinese Academy of Sciences\\
  \texttt{youzhi.zhang@cair-cas.org.hk}
}

\begin{document}

\maketitle

\begin{abstract}
It has been widely observed that Adam can remain stable even when the objective function deviates significantly from global smoothness. However, under the generalized smoothness framework, existing theoretical analyses typically rely on strong tail assumptions on stochastic gradients, such as almost-sure boundedness or sub-Gaussianity. Whether one can establish the convergence of Adam on generalized smooth objectives under only second moment information on the stochastic gradients, without imposing such strong concentration assumptions, was explicitly identified as an important open direction by \citet{DBLP:conf/nips/LiRJ23}. This paper gives an affirmative answer to this question under fairly general conditions. Specifically, we prove that such strong tail assumptions are not necessary. Building on the Adam self-normalization framework of \citet{jin2026adam}, which was developed for classical smoothness and bounded variance, we extend the stopping-time and de-preconditioning strategy to the \(L_0\)--\(L_p\) generalized smoothness condition and a generalized second moment ABC condition. This extension shows that, even when the stochastic-gradient condition provides only second moment information and may grow along the trajectory, the stochastic trajectory of Adam remains in a locally well-behaved smoothness region, with stretched-exponential tail decay under bounded variance and global smoothness. As a consequence, we establish high-probability convergence rate guarantees over the full range \(p<2\), with a confidence dependence of order \(\delta^{-1/2}\), while the stepsize prefactor depends on \(\delta\) only through a single logarithmic factor. Furthermore, we construct a hard instance proving that, under only second-moment information on the stochastic gradients, this \(\delta^{-1/2}\)-type confidence dependence is sharp. Finally, in the more favorable regime \(p<1\), we combine the above trajectory control with polynomial-growth estimates on rare events to further obtain convergence rate guarantees in expectation.
\end{abstract}
\section{Introduction}
\label{sec:intro}

Adaptive gradient methods are a standard tool in large-scale stochastic optimization, and Adam is arguably the most widely used example \citep{DBLP:journals/corr/KingmaB14,DBLP:conf/iclr/ReddiKK18,DBLP:journals/tmlr/DefossezBBU22,DBLP:conf/nips/LiRJ23}. Its practical robustness is usually attributed to two interacting mechanisms: momentum, which averages stochastic directions, and coordinatewise normalization, which rescales the update by a running average of squared gradients. These mechanisms are especially relevant when the objective is far from globally smooth. A common model for this behavior is \(L_0\)--\(L_p\) generalized smoothness \citep{DBLP:conf/nips/CrawshawLO0Z22,DBLP:conf/iclr/ZhangHSJ20,chen2023generalized,li2023convex,gorbunov2024methods}, where the local variation of the gradient is allowed to grow with the size of the gradient itself.

The main obstacle in this regime is not merely to prove a descent inequality. Since the local smoothness scale depends on the current gradient norm, one first has to show that the stochastic trajectory does not enter a region where the objective becomes too irregular. For Adam this is delicate: the algorithm normalizes stochastic gradients by \(\acc_t\), but \(\acc_t\) itself contains the raw squared stochastic gradients. Under bounded or sub-Gaussian noise, one can often control this feedback loop through direct concentration. Under a second-moment assumption alone, rare but very large stochastic-gradient spikes remain possible, so the raw gradient sequence need not have useful high-probability bounds.

This paper asks whether Adam can nevertheless be analyzed under such bounded-variance-type information. This is not merely a technical variant of existing results: after proving Adam guarantees under stronger bounded-gradient or sub-Gaussian assumptions, \citet{DBLP:conf/nips/LiRJ23} explicitly identified the bounded-variance setting as a challenging and important open direction for Adam under generalized smoothness. The answer we give is positive, but the result has a different structure from analyses based on strong tails. Our analysis builds on the self-normalized stopping-time framework of \citet{jin2026adam}, developed for Adam under classical smoothness and bounded variance, and extends it to generalized smoothness and generalized ABC second-moment growth. In this broader regime, self-normalization still controls the scaled quantities that actually move the iterate, not the raw stochastic gradients themselves. This distinction gives polylogarithmic trajectory localization, but controlling the adaptive denominator \(\acc_t\) in the final de-preconditioning step necessarily incurs a \(\delta^{-1/2}\) factor. Moreover, unlike the classical-smooth bounded-variance result of \citet{jin2026adam}, the local-smoothness and noise-growth restrictions here lead to a stepsize prefactor that depends on the confidence level only through a single logarithmic factor. We show through a hard instance that the resulting \(\delta^{-1/2}\) factor is not a proof artifact in the stated high-probability stationarity guarantee.

\paragraph{Contributions.}
For the horizon-dependent theoretical calibration \(\beta_2=1-1/T\), we prove the following results for Adam under \(L_0\)--\(L_p\) generalized smoothness and a generalized second-moment ABC condition, using the Adam self-normalization framework of \citet{jin2026adam} as a starting point.

\begin{itemize}
\item \emph{Trajectory localization without tail assumptions.} We extend the self-normalized stopping-time analysis to prove that, for any confidence level \(\delta\), both the auxiliary energy and the gradient scale along the Adam trajectory are bounded by \(\polylog(1/\delta)\) with probability at least \(1-\delta\). Under bounded variance and global smoothness, a fixed stepsize gives stretched-exponentially decaying tail probability.

\item \emph{High-probability convergence for \(p<2\).} The localization estimate yields a preconditioned energy bound. After converting it to an unweighted stationarity guarantee, we obtain
\[
\frac{1}{T}\sum_{t=1}^{T-1}\|\nabla f(\w_t)\|^2
\le
\widetilde{\mathcal{O}}\!\left(\frac{d}{\sqrt{\delta T}}+\frac{d^2}{T}\right)
\]
with probability at least \(1-\delta\). The additional \(\delta^{-1/2}\) factor comes from controlling the maximum adaptive denominator.

\item \emph{Sharpness of the confidence dependence.} We construct a one-dimensional hard instance showing that, under second-moment assumptions alone, the \(\delta^{-1/2}\) dependence in the high-probability stationarity rate cannot generally be replaced by a polylogarithmic dependence. In the construction, a rare spike in the stochastic gradient raises the adaptive denominator and keeps the iterates in a nonstationary ramp region for a constant fraction of the horizon.

\item \emph{Expectation convergence and hard instances.} In the more structured range \(p<1\), generalized smoothness implies a polynomial distance-growth estimate for the true gradient. This allows the rare-event terms to be integrated, giving
\[
\frac1T\sum_{t=1}^{T-1}\mathbb E\|\nabla f(\w_t)\|^2
\le
\mathcal{O}\!\left(\frac{d(1+\log T)}{\sqrt T}+\frac{d^2(1+\log T)^2}{T}\right).
\]
For \(1\le p<2\), we construct a fixed class of one-dimensional hard instances satisfying the same assumptions for which the calibrated Adam family has a worst-case expected squared-stationarity lower bound of \(\mathcal{\orderOmega}(T^{-1/3})\), uniformly over deterministic, possibly horizon-dependent, base stepsize prefactors. Thus the above expectation rate cannot extend uniformly to the full generalized ABC class for this Adam family (Proposition~\ref{prop:expected-stationarity-informal}).
\end{itemize}

\paragraph{Organization.}
Section~\ref{sec:related-work} discusses the relation to existing analyses of Adam, generalized smoothness, and high-probability stochastic optimization. Section~\ref{sec:preliminaries} formulates the problem, states the assumptions and Algorithm~\ref{alg:adam}, and fixes notation. Section~\ref{sec:results} states the main results. Section~\ref{sec:overview} outlines the upper- and lower-bound proofs, while full proofs are deferred to the appendix.

\section{Related Work}
\label{sec:related-work}
\citet{DBLP:conf/nips/LiRJ23} analyze Adam under generalized smoothness
with bounded or sub-Gaussian noise.
\citet{zhang2024convergence} and \citet{wang2024convergence} obtain
convergence rates under \((L_0,L_1)\)-type smoothness with affine noise
variance, for coordinatewise and scalar normalization, respectively.
Theorem~\ref{thm:coefficientwise-expectation-informal} is an expectation
result only for \(p<1\), with second moments that may grow with the
function gap, while Theorem~\ref{thm:coefficientwise-convergence-informal}
is a high-probability result for all \(p<2\).
The closest high-probability analysis under generalized smoothness is
\citet{DBLP:journals/corr/abs-2402-03982}, which allows exponents below
\(2\) and obtains a polylogarithmic dependence on \(1/\delta\) under an
exponential-moment condition on the noise.
This condition is much stronger than Assumption~\ref{ass:abc}, which
bounds only conditional second moments, so the two confidence
dependences are not directly comparable.
Under second moments alone, the factor \(\delta^{-1/2}\) cannot in
general be replaced by a polylogarithmic one
(Proposition~\ref{prop:delta-sharpness-informal}). \citet{jin2024comprehensive} establish
convergence under ABC second moments. \citet{jin2026adam} obtain
polynomial-confidence guarantees under bounded conditional variance.
We study generalized smoothness with generalized ABC growth, retaining
coordinatewise normalization and distinguishing high-probability
guarantees from expectation rates.
Here exponential memory controls the adaptive denominator's persistence.
Appendix~\ref{sec:extended-related-work}
discusses the broader literature on adaptive methods, noise models,
localization, clipping, and the comparison scopes of stochastic
optimization lower bounds.

\section{Preliminaries}\label{sec:preliminaries}

Throughout, let $f:\mathbb{R}^d \to \mathbb{R}$ be a differentiable function, and let $\{\zeta_t\}_{t\ge1}$ denote a sequence of independent random variables driving the stochastic-gradient oracle.  
All random quantities are defined on a common probability space $(\Omega,\mathscr{F},\mathbb{P})$.  
Expectation with respect to $\mathbb{P}$ is denoted by $\mathbb{E}[\cdot]$.

\subsection{Adaptive Moment Estimation (Adam)}
Adam \citep{DBLP:journals/corr/KingmaB14,DBLP:conf/iclr/ReddiKK18,wang2024closing,jin2024comprehensive} is an adaptive first-order optimization method widely employed in stochastic settings.  
It maintains a denominator-aware first-moment variable and an exponentially weighted average of squared gradients.  
These moment estimates are then used to construct coordinatewise step sizes that adapt to the local geometry of the stochastic noise.

\noindent Formally, let $f:\mathbb{R}^d\to\mathbb{R}$ be a differentiable function and let \(\mathsf G\) be a stochastic-gradient oracle driven by independent random variables \(\{\zeta_t\}_{t\ge1}\).  
At each iteration $t\ge1$, the stochastic gradient is denoted by
\[
\dir_t = \mathsf G(\w_t;\zeta_t).
\]
The algorithm maintains two auxiliary variables $\mom_t,\acc_t\in\mathbb{R}^d$ and computes the next iterate through a coordinatewise scaling of the momentum term.  
The complete procedure is summarized in Algorithm~\ref{alg:adam}.

\begin{algorithm}
\caption{Adam}\label{alg:adam}
\begin{algorithmic}[1]
\State \textbf{Input:} Stochastic-gradient oracle $\mathsf{G}$; base step size $\alpha>0$; initial point $\w_1\in\mathbb{R}^d$;
  $\mom_0=\mathbf{0}$; $\acc_0=\binit\,\mathbf{1}$ with $\binit>0$;
  $\beta_1\in[0,1)$; $\beta_{2}\in[0,1)$; $\epsilon>0$; number of iterations $T$.
\For{$t=1,\ldots,T-1$}
  \State Sample $\zeta_t$ and compute $\dir_t \gets \mathsf{G}(\w_t;\zeta_t)$.
  \State Update the second-moment estimate:
  \(
  \acc_t \gets \beta_{2}\acc_{t-1}+(1-\beta_{2})(\dir_t\odot \dir_t).
  \)
  \State Update the first-moment estimate:
  \(
  \mom_t \gets \beta_1 \mom_{t-1}+(1-\beta_1)\dir_t.
  \)
  \State Form the coordinatewise effective step size:
  \(
  \lambda_t \gets \alpha\cdot(\sqrt{\acc_t}+\epsilon)^{-1},
  \)
  where the inverse is taken element-wise.
  \State Update the iterate:
  \(
  \w_{t+1}\gets \w_t-\lambda_t\odot \mom_t.
  \)
\EndFor
\State \textbf{Output:} iterate sequence $(\w_t)_{t=1}^{T}$.
\end{algorithmic}
\end{algorithm}
\noindent All arithmetic operations in Algorithm~\ref{alg:adam} are executed componentwise, and the symbol \(\odot\) denotes the Hadamard product.
For notational convenience, we also define the initial coordinatewise preconditioning vector by
\(\lambda_0:=\alpha(\sqrt{\binit}+\epsilon)^{-1}\mathbf 1\), where \(\binit>0\) is the scalar initialization in \(\acc_0=\binit\mathbf 1\).
For a vector \(u\in\mathbb{R}^d\), we write \(u_i\) for its \(i\)-th coordinate; for a time-indexed vector \(u_t\in\mathbb{R}^d\), we write \(u_{t,i}\) for its \(i\)-th coordinate.
\paragraph{Theoretical calibration.}
The results use \(\alpha=\bar\alpha/\sqrt T\) and
\(\beta_2=1-1/T\) for an integer horizon \(T\ge10\), with fixed
\(\beta_1\in[0,1)\), \(\binit>0\) and \(\epsilon>0\).
This finite-horizon calibration, as in \citet{jin2026adam} and
\citet{ghadimi2013stochastic}, fixes both parameters once \(T\) is fixed.
By Lemma~\ref{lem:vt_comparable}, \(\acc_{t,i}\ge\binit/4\) and
\(\lambda_{t,i}\le2\lambda_{k,i}\) for \(0\le k\le t\le T-1\), so the
effective stepsizes are comparable along the whole run.
The lower bound on \(\acc_t\) comes from the retained initialization,
not from the stochastic gradients.
We measure stationarity by
\(\frac1T\sum_{t=1}^{T-1}\|\nabla f(\w_t)\|^2\), the average of the
squared true-gradient norm over the iterates output by
Algorithm~\ref{alg:adam}, not a single final iterate. Time-varying \(\beta_2\) schedules and
restarts are outside the scope of the theorems.

\paragraph{Filtration.}
Let $\mathscr{F}_t=\sigma(\zeta_1,\dots,\zeta_t)$ for $t\ge0$, with $\mathscr{F}_0=\{\emptyset,\Omega\}$ and $\mathscr{F}_\infty=\sigma\big(\cup_{t\ge0}\mathscr{F}_t\big)$.  
The filtration $\{\mathscr{F}_t\}$ encodes the information available up to time $t$.  
Each stochastic gradient $\dir_t$ is $\mathscr{F}_t$-measurable, while $\w_t$ is $\mathscr{F}_{t-1}$-measurable.  
The Euclidean norm $\|\cdot\|$ is used throughout. For nonnegative bases, a zeroth power is interpreted as one, including $0^0=1$.

\subsection{Analytic Assumptions}

We impose the following structural assumptions on the objective function and the stochastic gradient oracle.

\begin{assumption}[Lower boundedness]
\label{ass:nonneg}
The objective \(f:\mathbb R^d\to\mathbb R\) is bounded from below, i.e.,
\[
f^*:=\inf_{\w\in\mathbb R^d} f(\w)>-\infty .
\]
\end{assumption}

\begin{assumption}[\(L_0\)-\(L_p\) smoothness]
\label{ass:smooth}
The function \(f:\mathbb R^d\to\mathbb R\) is continuously differentiable.
There exist constants \(L_0>0\), \(L_p>0\), and a smoothness exponent
\(p\in[0,2)\) such that, for all \(u,u'\in\mathbb R^d\) satisfying
\(\|u-u'\|\le L_p^{-1}\),
\[
\|\nabla f(u)-\nabla f(u')\|
\le
\bigl(L_0+L_p\|\nabla f(u)\|^p\bigr)\|u-u'\|.
\]
\end{assumption}
When \(p=0\), this reduces to global smoothness up to constants. For \(p>0\), the local Lipschitz constant of the gradient is allowed to grow with the gradient norm. This captures objectives whose curvature can be much larger far from stationary regions, and it is precisely why a trajectory-localization argument is needed before applying descent estimates.

\begin{assumption}[Generalized second-moment ABC inequality]
\label{ass:abc}
Let \(\{\mathscr F_t\}_{t\ge0}\) be the natural filtration generated by the
algorithm, and assume that \(\w_t\) is \(\mathscr F_{t-1}\)-measurable.
For each \(t\ge1\), the stochastic gradient \(\dir_t\) is an unbiased estimator
of \(\nabla f(\w_t)\), namely
\[
\mathbb E[\dir_t\mid \mathscr F_{t-1}]
=
\nabla f(\w_t).
\]
Moreover, there exist constants \(A,B,C,\rho_1,\rho_2\ge0\) such that
\[
\mathbb E\!\left[\|\dir_t\|^2\mid \mathscr F_{t-1}\right]
\le
A\bigl(f(\w_t)-f^*\bigr)^{\rho_1}
+
B\|\nabla f(\w_t)\|^{\rho_2}
+
C .
\]
\end{assumption}
This assumption is a second-moment condition, not a tail condition. It permits the conditional variance of the stochastic gradient to depend on both the function gap and the gradient norm, but it does not rule out rare large stochastic-gradient values. This is the main distinction from bounded-gradient, sub-Gaussian, or exponentially concentrated oracle models.
By conditional unbiasedness,
\[
\mathbb E[\|\dir_t\|^2\mid\mathscr F_{t-1}]
=
\|\nabla f(\w_t)\|^2+
\mathbb E[\|\dir_t-\nabla f(\w_t)\|^2\mid\mathscr F_{t-1}].
\]
Hence Assumption~\ref{ass:abc} may equivalently be viewed as a variance-growth
condition up to the deterministic term \(\|\nabla f(\w_t)\|^2\). The classical
bounded-variance assumption
\[
\mathbb E[\|\dir_t-\nabla f(\w_t)\|^2\mid\mathscr F_{t-1}]\le \sigma^2
\]
is recovered in variance form by taking \(A=B=0\) and \(C=\sigma^2\). In the
second-moment form used above, it corresponds to
\(A=0\), \(B=1\), \(\rho_2=2\), and \(C=\sigma^2\).
\section{Results}
\label{sec:results}
Throughout this section, we write
\begin{equation}
\Psi(x):=f(x)-f^*+1.
\label{eq:auxiliary-objective}
\end{equation}
We also use the standard auxiliary sequence
\begin{equation}
\q_1:=\w_1,
\qquad
\q_t:=\frac{\w_t-\beta_1\w_{t-1}}{1-\beta_1},
\qquad t\ge2,
\label{eq:auxiliary-sequence}
\end{equation}
which removes the leading momentum term from the recursion.
The results are organized around the two quantities that must be controlled in Adam: the trajectory itself and the adaptive denominator. Formal auxiliary lemmas, intermediate propositions, and complete proofs are deferred to the appendix.

The main statements below give the convergence orders. The order notation
suppresses fixed problem and initialization parameters other than the
dimension \(d\); \(\widetilde{\mathcal{O}}\)
also suppresses logarithmic factors in \(1/\delta\).
Appendix~\ref{app:formal-results} gives the complete inequalities;
their constants are explicit but not optimized.

To display the confidence dependence of the stepsize, define
\begin{equation}
\eta_{\mathrm{gen}}
:=\mathbf1_{\{p>0\}}
+\sqrt{A+[B-1]_++B\mathbf1_{\{\rho_2\ne2\}}},
\qquad [x]_+:=\max\{x,0\}.
\label{eq:stepsize-excess-main}
\end{equation}

\begin{thm}[High-probability convergence, informal]
\label{thm:coefficientwise-convergence-informal}
Assume Assumptions~\ref{ass:nonneg}--\ref{ass:abc}. Let \(T\ge10\) and
\(\beta_2=1-1/T\). For every \(0<\delta<1\), an admissible prefactor
\(\bar\alpha>0\) can be chosen with
\begin{equation}
\bar\alpha^{-1}
=d\Bigl[\mathcal{O}(1)
+\mathbf 1_{\{\eta_{\mathrm{gen}}>0\}}\,\mathcal{O}\!\left(\log(1/\delta)\right)\Bigr],
\label{eq:stepsize-order-split}
\end{equation}
where \(\eta_{\mathrm{gen}}\) is defined in \eqref{eq:stepsize-excess-main}.
Adam with \(\alpha=\bar\alpha/\sqrt T\) then satisfies, with probability
at least \(1-\delta\),
\[
\max_{1\le t\le T}\Psi(\q_t)
=\mathcal{O}\!\left((1+\log(1/\delta))^{4\mathbf 1_{\{\eta_{\mathrm{gen}}=0\}}}\right),\qquad
\frac1T\sum_{t=1}^{T-1}\|\nabla f(\w_t)\|^2
=\widetilde{\mathcal{O}}\!\left(\frac{d}{\sqrt{\delta T}}+\frac{d^2}{T}\right).
\]
Here \(\Psi\) and \(\q_t\) are defined in
\eqref{eq:auxiliary-objective}--\eqref{eq:auxiliary-sequence}.
\end{thm}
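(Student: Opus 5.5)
We follow the self-normalized stopping-time scheme of \citet{jin2026adam}, adapted to the \(L_0\)--\(L_p\) and ABC setting. \textbf{Stopping time and local smoothness.} Fix a level \(H\ge1\), to be chosen later as \(\polylog(1/\delta)\), and set
\[
\tau:=\min\{t:\Psi(\q_t)>H\}\wedge T .
\]
Before \(\tau\), Assumption~\ref{ass:smooth} together with lower boundedness gives a local gradient bound \(\|\nabla f(\w)\|\lesssim H^{1/(2-p)}\) on the sublevel region. It also gives a local descent lemma with effective smoothness constant \(L_H:=L_0+L_pH^{p/(2-p)}\). The stepsize is \(\lambda_{t,i}\le\alpha/\epsilon=\bar\alpha/(\epsilon\sqrt T)\), so the per-step displacement \(\|\q_{t+1}-\q_t\|\) is small. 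Concretely, \(\|\q_{t+1}-\q_t\|\le\sqrt d\,\alpha\,\|\dir_t\|/\sqrt{\acc_{t,i}}\) coordinatewise, and this is bounded by \(\sqrt{dT}\,\alpha\) because \(\dir_{t,i}^2\le T\acc_{t,i}\). Hence the condition \(\|u-u'\|\le L_p^{-1}\) holds as soon as \(\bar\alpha\sqrt d\) is small. We also use Lemma~\ref{lem:vt_comparable}, which makes all the \(\lambda_t\) comparable within a factor of two. This lets us replace \(\lambda_t\) by the predictable proxy \(\lambda_{t-1}\) at the cost of a correction term of the form \(\lambda_{t-1}-\lambda_t\).

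\textbf{Energy recursion.} Expand \(\Psi(\q_{t+1})-\Psi(\q_t)\) using the auxiliary sequence. The momentum cancels up to terms proportional to \(\lambda_{t-1}-\lambda_t\). This yields
\[
\Psi(\q_{t+1})\le\Psi(\q_t)-c\sum_i\lambda_{t-1,i}\,\partial_if(\w_t)^2+M_t+R_t .
\]
Here \(M_t\) is a martingale difference of the form \(\langle\nabla f,\lambda_{t-1}\odot(\nabla f-\dir_t)\rangle\). The remainder \(R_t\) collects quadratic terms \(L_H\|\lambda_t\odot\mom_t\|^2\) and the terms \((\lambda_{t-1}-\lambda_t)\dir_t\). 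The key self-normalization facts are
\[
\sum_t\frac{(1-\beta_2)\dir_{t,i}^2}{\acc_{t,i}}\le\log\frac{\acc_{T,i}}{\binit}
\]
and the bounded summability of \(\sum_t(\lambda_{t-1,i}-\lambda_{t,i})\). With \(\alpha^2=\bar\alpha^2/T\) and \(1-\beta_2=1/T\), these make \(\sum R_t\lesssim d\,\bar\alpha^2L_H\log(\max_t\acc_t)+d\bar\alpha\sqrt{\cdots}\). At first sight the logarithm of \(\acc\) is only bounded in expectation, via Jensen and Assumption~\ref{ass:abc} evaluated on \(\{t<\tau\}\). The right-hand side of the ABC inequality is at most a polynomial in \(H\), so the logarithm costs only \(\log H\).

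\textbf{Concentration of the martingale.} For the martingale \(\sum_{t<\tau}M_t\), the increments are scaled by \(\lambda_{t-1}\) and are therefore not bounded. We apply a Freedman/Bernstein-type inequality to a truncated version of \(M_t\), or alternatively the self-normalized bound \(\exp(\sum M_t-\tfrac12\sum\mathbb E[M_t^2|\mathscr F_{t-1}]\ldots)\). The conditional variance is \(\lesssim\alpha\,(\text{ABC bound})\cdot\sum_i\lambda\partial_if^2\). It is therefore absorbed into the descent term whenever \(\bar\alpha\lesssim1/(d\log(1/\delta))\). This requirement is exactly the \(\log(1/\delta)\) term in \eqref{eq:stepsize-order-split}, and it is needed only when \(\eta_{\mathrm{gen}}>0\), i.e. when the variance grows with \(H\) or the local smoothness grows.

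Combining these bounds on \(\{\tau<T\}\) gives \(H<\Psi(\q_1)+\mathcal O(\log(1/\delta))+\mathcal O(\log H)\cdot\text{poly}\) with probability at least \(1-\delta/2\). Choosing \(H=C(1+\log(1/\delta))^{4}\) produces a contradiction, so \(\tau=T\). This establishes the \(\max\Psi\) claim, with \(H=\mathcal O(1)\) in the case \(\eta_{\mathrm{gen}}>0\) thanks to the smaller \(\bar\alpha\). The same inequality also bounds the weighted energy \(\sum_t\sum_i\lambda_{t-1,i}\partial_if(\w_t)^2\lesssim H\).

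\textbf{De-preconditioning (main obstacle).} We have \(\lambda_{t,i}\ge\alpha/(\sqrt{\max_s\acc_{s,i}}+\epsilon)\). It therefore remains to control \(\max_{t\le T}\acc_{t,i}\) without tail assumptions. On \(\{\tau=T\}\) we have
\[
\mathbb E\Big[\sum_i\max_t\acc_{t,i}\Big]\le d\binit+\frac1T\sum_t\mathbb E\|\dir_t\|^2\mathbf 1_{t<\tau}\le d\binit+\mathrm{poly}(H).
\]
Markov's inequality at level \(\delta/2\) then gives \(\max_t\sqrt{\acc_{t,i}}\lesssim\sqrt{\mathrm{poly}(H)/\delta}\); this is the unavoidable \(\delta^{-1/2}\). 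Dividing the weighted energy by \(\min\lambda\), we obtain
\[
\frac1T\sum_t\|\nabla f\|^2\lesssim\frac{H(\sqrt{\mathrm{poly}H/\delta}+\epsilon)}{\bar\alpha\sqrt T},
\]
and substituting \(\bar\alpha^{-1}\sim d\log(1/\delta)\) gives \(d/\sqrt{\delta T}\). The \(d^2/T\) term comes from the \(\bar\alpha^2 L_H d\) remainder after the same division. A union bound over the two events finishes the proof. The delicate point is the martingale step: the ABC variance depends on \(f\) and \(\nabla f\), and these are bounded only before \(\tau\). It is therefore essential to stop the process first and to use the \(\lambda_{t-1}\) proxy so that predictability is preserved.
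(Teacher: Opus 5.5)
Your outer architecture matches the paper's: stop at the first exit of \(\Psi(\q_t)\) above a height, bound the logarithmic energy by Markov's inequality, and close by contradiction. The \(\delta^{-1/2}\) then comes from a Markov bound on the stopped second-moment sum in the de-preconditioning step. Your cruder de-preconditioning puts all of \(\frac1T\sum_t\|\dir_t\|^2\) under Markov, instead of splitting off \(\frac2T\sum_t\|\nabla f(\w_t)\|^2\) and using Young; it still gives the stated order.

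The gap is in the energy recursion, at the same-sample correlation between \(\lambda_t\) and \(\dir_t\). Write \(h_t=\nabla f(\w_t)\) and \(\xi_t=\dir_t-h_t\). Replacing \(\lambda_t\) by the predictable \(\lambda_{t-1}\) splits \(-\langle h_t,\lambda_t\odot\dir_t\rangle\) into the martingale increment \(M_t=-\langle h_t,\lambda_{t-1}\odot\xi_t\rangle\) and the remainder \(\langle h_t,(\lambda_{t-1}-\lambda_t)\odot\dir_t\rangle\). You bound these two separately, and under second moments alone neither admits a \(\polylog(1/\delta)\) bound.

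\emph{The martingale.} The increments of \(M_t\) are raw noise scaled by a stepsize that does not contain \(\xi_t\). Take noise equal to \(\pm\sqrt{T/(2\delta)}\) with probability \(\delta/T\) each (variance one) at points with \(h_t\ne0\). Then with probability of order \(\delta\) a single step has \(M_t\) of order \(\bar\alpha|h_t|/\sqrt\delta\). Truncating so that the discarded event has probability at most \(\delta\) forces a cutoff of order \(\sqrt{T/\delta}\), hence the same jump size in Freedman. The exponential bound you sketch needs bounded or light-tailed increments, since otherwise \(\mathbb E_{t-1}e^{\lambda M_t}\) can be infinite. Its second-moment-only variant must add the realized quadratic variation \(\sum_tM_t^2\), which is again a heavy-tailed sum controllable only by Markov.

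\emph{The remainder.} It is not pathwise summable. \(\sum_t(\lambda_{t-1,i}-\lambda_{t,i})\) is summable, but after a spike \(\lambda_{t-1,i}-\lambda_{t,i}\approx\lambda_{t-1,i}\). The term is then \(\approx\lambda_{t-1,i}|h_{t,i}||\dir_{t,i}|\), which is unbounded.

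The two large pieces cancel in the actual update, where \(|\lambda_{t,i}\dir_{t,i}|\le\bar\alpha\), but separate bounds discard this cancellation. Carried through, your argument would need \(\bar\alpha\lesssim\sqrt\delta\). That is incompatible with \eqref{eq:stepsize-order-split} and with a polylogarithmic height.

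The missing idea is to keep the same-sample stepsize inside the martingale and treat the correlation only in conditional expectation. The paper uses the nonnegative gap \(\Delta_t=\widetilde\lambda_{t-1}-\lambda_t\) of \eqref{eq:coefficientwise-step-variation-definition} and the energy \(E_t=\sum_i\lambda_{t-1,i}h_{t,i}^2\) (your descent term).

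\emph{Centered part.} It centers both \(\sum_i\lambda_{t,i}h_{t,i}\dir_{t,i}\) and \(\sum_ih_{t,i}^2\Delta_{t,i}\). The two centered parts combine exactly into the centered version of \(\sum_i\lambda_{t,i}h_{t,i}\xi_{t,i}\). Its increments are bounded because \(|\lambda_{t,i}\dir_{t,i}|\le\bar\alpha\), so Freedman applies at cost \(\log(1/\delta)\).

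\emph{Predictable part.} The bias is bounded by conditional Cauchy--Schwarz and \(\Delta_{t,i}\le2\lambda_{t-1,i}\), giving \(\tfrac12E_t+\sum_i\mathbb E_{t-1}\Delta_{t,i}\,\mathbb E_{t-1}\dir_{t,i}^2\).

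\emph{Remaining sums.} The realized \(\sum_ih_{t,i}^2\Delta_{t,i}\) is telescoped through the augmented energy \(\mathcal L_t=\Psi(\q_t)+E_t\). The compensator \(\sum_t\mathbb E_{t-1}\Delta_{t,i}\) is bounded with a single logarithmic factor by Lemma~\ref{lem:cond_to_direct}, applied to the deterministic bound \eqref{eq:direct-variation-bound}.

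Without these steps, or an equivalent device that preserves the cancellation, the localization step of your proof does not go through.
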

The complete inequalities and admissible stepsizes are given in
Theorem~\ref{thm:coefficientwise-convergence} of
Appendix~\ref{app:formal-results}, where the parameter scale \(\mathcal P\), the calibration scale \(\mathcal R\) and the confidence height \(U_\delta\) are defined. The decomposition
\eqref{eq:stepsize-order-split} describes the upper endpoint of that
stepsize interval; its two order constants are independent of
\(T,\delta,d\), and for \(\eta_{\mathrm{gen}}>0\) the height \(U_\delta\) is independent of \(\delta\).

\begin{claim}[Smooth bounded-variance specialization, informal]
\label{clm:squared-standard-rate-informal}
For \(p=0,A=0,B=1,\rho_2=2\), the same result with a fixed prefactor
\(\bar\alpha>0\) satisfying \(\bar\alpha^{-1}=\mathcal{O}(d)\), as in
\eqref{eq:stepsize-order-split}, gives
\[
\frac1T\sum_{t=1}^{T-1}\mathbb E\|\nabla f(\w_t)\|^2=\mathcal{O}(dT^{-1/2}+d^2T^{-1}).
\]
\end{claim}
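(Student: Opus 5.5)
In the specialization \(p=0,A=0,B=1,\rho_2=2\) we have \(\eta_{\mathrm{gen}}=\mathbf1_{\{p>0\}}+\sqrt{A+[B-1]_++B\mathbf1_{\{\rho_2\ne2\}}}=0\). So by \eqref{eq:stepsize-order-split} the admissible prefactor satisfies \(\bar\alpha^{-1}=\mathcal O(d)\) independently of \(\delta\); a single fixed \(\bar\alpha\) therefore works for every confidence level simultaneously. The plan is to turn the high-probability bound of Theorem~\ref{thm:coefficientwise-convergence-informal} into an expectation bound by integrating tails. The obvious route, integrating the final \(\delta^{-1/2}\) bound over \(\delta\), fails: \(\int_0^1\delta^{-1/2}\,d\delta\) is finite, but the \(\widetilde{\mathcal O}\) also hides \(\log(1/\delta)\) factors and a tail at level \(\delta\) only controls a quantile. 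So I would instead go back one step and use the two ingredients separately.

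\emph{Step 1 (localization tails).} Take the localization event from the formal Theorem~\ref{thm:coefficientwise-convergence}. For fixed \(\bar\alpha\) it gives \(\max_t\Psi(\q_t)\le c(1+\log(1/\delta))^4\) with probability \(1-\delta\), i.e.\ a stretched-exponential tail \(\mathbb P(\max_t\Psi(\q_t)>u)\le\exp(-c'u^{1/4})\). Hence all polynomial moments of \(\max_t\Psi(\q_t)\) are \(\mathcal O(1)\), independent of \(T\) and \(d\) apart from the prefactor.

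\emph{Step 2 (preconditioned energy in expectation).} Because \(p=0\) means global smoothness, the descent inequality for \(\Psi(\q_t)\) holds without any localization. Summing it and taking expectations, the martingale terms vanish and the bias and second-order terms are controlled as in the self-normalized analysis of \citet{jin2026adam}. This should give
\[
\mathbb E\sum_{t}\Bigl\langle\nabla f(\w_t),\lambda_{t-1}\odot\nabla f(\w_t)\Bigr\rangle\le\alpha\,\mathcal O\bigl(1+d\log T\bigr)\quad\text{(up to constants)}.
\]
Where moments of \(\Psi\) enter, Step 1 bounds them.

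\emph{Step 3 (de-preconditioning, the main obstacle).} In high probability this step used \(\max_t\acc_t\), which is exactly what produced \(\delta^{-1/2}\). In expectation I would use Cauchy--Schwarz:
\[
\mathbb E\sum_t\|\nabla f(\w_t)\|^2\le\Bigl(\mathbb E\sum_t\bigl\|\lambda_{t-1}^{1/2}\odot\nabla f(\w_t)\bigr\|^2\cdot\max_{t,i}\lambda_{t-1,i}^{-1}\Bigr),
\]
and bound the right side by Hölder, or more directly through \(\sum_t\|\nabla f\|^2\le\bigl(\sum_t\|\lambda^{1/2}\nabla f\|^2\bigr)^{1/2}\bigl(\sum_t\|\nabla f\|^2\max\lambda^{-1}\bigr)^{1/2}\). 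For the denominator, note \(\lambda_{t,i}^{-1}\le\alpha^{-1}(\sqrt{\acc_{t,i}}+\epsilon)\). Also \(\mathbb E\max_t\acc_{t,i}\le\binit+\tfrac1T\sum_t\mathbb E g_{t,i}^2\), since every weight \((1-\beta_2)\beta_2^{t-k}\le 1/T\). Under bounded variance, \(\mathbb E g_{t,i}^2\le\mathbb E\|\nabla f(\w_t)\|^2+\sigma^2\). This gives a self-bounding inequality for \(S:=\frac1T\sum_t\mathbb E\|\nabla f(\w_t)\|^2\) of the form
\[
S\le\frac{\mathcal O(d)}{\bar\alpha\sqrt T}\sqrt{\sigma^2+S+\binit}+\dots .
\]
Solving this quadratic yields \(S=\mathcal O(dT^{-1/2}+d^2T^{-1})\), using \(\bar\alpha^{-1}=\mathcal O(d)\). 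The delicate point is that the maximum of \(\acc\) and the energy sum are dependent. I expect to handle this with the Cauchy--Schwarz/Jensen pairing above, where only \(\mathbb E\max\acc\) (a first moment, with no tail required) enters. That is precisely why the expectation bound avoids the \(\delta^{-1/2}\) loss.
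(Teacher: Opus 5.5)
\textbf{Two gaps: you reject the route the paper actually uses, and your replacement needs a second moment it does not have.}

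The route you discard, integrating the high-probability bound over the confidence level, is the paper's proof, and your reason for discarding it does not hold. Because \(\eta_{\mathrm{gen}}=0\), one fixed \(\bar\alpha\le2^{-100}d^{-1}\mathcal P^{-48}\) satisfies \eqref{eq:coefficientwise-stepsize} at every \(\delta\in(0,1)\). The same run therefore obeys \eqref{eq:coefficientwise-rate} at every level, with a right-hand side \(q(\delta)\) that is decreasing in \(\delta\). The suppressed factors are explicit powers of \(1+\log(16/\delta)\): here \(U_\delta=2^{20}\mathcal P^8(1+\log(16/\delta))^4\) and \(\mathcal V_{\mathrm q}=0\). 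A quantile bound at every level is exactly what the layer-cake formula needs: \(\mathbb P(X>q(\delta))\le\delta\) for all \(\delta\) gives \(\mathbb E X\le\int_0^1q(\delta)\,d\delta\). The identity \(\int_0^1\delta^{s-1}\log^j(1/\delta)\,d\delta=j!/s^{j+1}\), with \(s=1\) for the deterministic terms and \(s=1/2\) for the noise term, makes every integral finite and independent of \(T\) and \(d\). This yields \eqref{eq:coefficientwise-standard-mean} directly, with no \(\log T\); the \(\delta^{-1/2}\) factor costs only a constant. You already integrate tails this way in Step~1 for \(\max_t\Psi(\q_t)\), so your objection also contradicts your own argument.

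Your replacement route breaks at Step~3. Pathwise,
\(\sum_t\|\nabla f(\w_t)\|^2\le\bigl(\sum_tE_t\bigr)\,\alpha^{-1}(\sqrt V+\epsilon)\), where \(V=\max_{t,i}\acc_{t,i}\) and the two factors are dependent.

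\begin{itemize}
\item Cauchy--Schwarz on this product costs \(\|\sum_tE_t\|_{L^2}\sqrt{\mathbb EV}\). So you need the second moment of the energy sum, not just \(\mathbb E V\), and Step~2 gives at most a first moment.
\item Your other pathwise pairing only reintroduces the product \(\sqrt V\sum_t\|\nabla f(\w_t)\|^2\).
\item The missing second moment must come from tail control of \(\sum_tE_t\). That is Lemma~\ref{lem:stopped-energy-localization}: \(\sum_tE_t\le4U_\delta\) with probability \(1-\delta/2\) at the same \(\delta\)-free prefactor. Once you use it, Step~2 is redundant.
\item The bookkeeping also needs repair:
  \begin{itemize}
  \item \(\mathbb E\sum_tE_t\) is of order \(\mathcal L_1=\mathcal O(1)\), not \(\alpha\,\mathcal O(1+d\log T)\), and it must be dimension-free; the \(d\)-dependent residuals are absorbed through \(d\bar\alpha\mathcal X^k\le2^{-100}\).
  \item An \(\mathcal O(d)\) numerator in your self-bounding inequality, combined with \(\bar\alpha^{-1}=\mathcal O(d)\), gives \(d^2T^{-1/2}\) rather than \(dT^{-1/2}\).
  \item A surviving \(\log T\) would contradict the claimed rate.
  \end{itemize}
\end{itemize}

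With these repairs your de-preconditioning in expectation does close at the same order. Take moments of \(\sum_tE_t\) from the localization lemma, use \(\mathbb EV\le\binit+C+S\), and solve the resulting quadratic in \(S\). It is a legitimate alternative, but it re-derives what direct integration of \eqref{eq:coefficientwise-rate} gives at once.
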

Here \(\eta_{\mathrm{gen}}=0\) in \eqref{eq:stepsize-excess-main},
so the logarithmic term in \eqref{eq:stepsize-order-split} vanishes.
The same fixed stepsize prefactor is therefore admissible at every
confidence level. Integrating the quantile bound from
Theorem~\ref{thm:coefficientwise-convergence} therefore gives
\[
\begin{aligned}
\frac1T\sum_{t=1}^{T-1}\mathbb E\|\nabla f(\w_t)\|^2
\le{}&\mathcal{O}(dT^{-1/2})\int_0^1
\frac{(1+\log(16/\delta))^4}{\sqrt\delta}\,d\delta\\
&+\mathcal{O}(d^2T^{-1})\int_0^1(1+\log(16/\delta))^8\,d\delta
=\mathcal{O}(dT^{-1/2}+d^2T^{-1}).
\end{aligned}
\]
The constants in the order terms are independent of \(T,\delta,d\),
and both integrals are finite by the substitution \(u=\log(1/\delta)\).
Claim~\ref{clm:squared-standard-rate} gives the explicit inequality;
its full proof is in Appendix~\ref{app:proof-standard-specialization}.

By Cauchy--Schwarz, Claim~\ref{clm:squared-standard-rate-informal} implies
\[
\frac1T\sum_{t=1}^{T-1}\mathbb E\|\nabla f(\w_t)\|
\le\left(\frac1T\sum_{t=1}^{T-1}\mathbb E\|\nabla f(\w_t)\|^2\right)^{1/2}
=\mathcal{O}(\sqrt d\,T^{-1/4}+dT^{-1/2}).
\]
For the regularized Adam calibration considered here, this recovers the
expected-gradient-norm rate of \citet[Theorem~2]{wang2024closing}.
The implication between these moment criteria is strictly one-way:
an expected norm bound alone does not control the corresponding squared
moment. The algorithm in \citet{wang2024closing} uses an update without
the denominator offset. The squared-gradient expectation rate in
Claim~\ref{clm:squared-standard-rate-informal} also agrees with the rate
obtained by integrating the confidence bound of
\citet[Theorem~1]{jin2026adam}, whose stepsize prefactor is independent of
the confidence level.
Under global smoothness and bounded coordinatewise variance, assumptions
much stronger than Assumptions~\ref{ass:smooth}--\ref{ass:abc},
\citet{li2025adamw} obtain an expected \(\ell_1\) rate without
logarithmic factors; Appendix~\ref{sec:extended-related-work} relates it
to Claim~\ref{clm:squared-standard-rate-informal}, whose standard
specialization carries no \(\log T\) factor.

\begin{thm}[Expected convergence for \(p<1\), informal]
\label{thm:coefficientwise-expectation-informal}
Assume Assumptions~\ref{ass:nonneg}--\ref{ass:abc} with \(p<1\), and
let \(T\ge10\), \(\beta_2=1-1/T\). A prefactor \(\bar\alpha_T>0\) with
\(\bar\alpha_T^{-1}=d[\mathcal{O}(1)+\mathbf 1_{\{\eta_{\mathrm{gen}}>0\}}\mathcal{O}(\log T)]\)
can be chosen so that Adam with \(\alpha=\bar\alpha_T/\sqrt T\) satisfies
\[
\frac1T\sum_{t=1}^{T-1}\mathbb E\|\nabla f(\w_t)\|^2
=\mathcal{O}\!\left(\frac{(1+\log T)^{\mathbf 1_{\{\eta_{\mathrm{gen}}>0\}}}d}{\sqrt T}
+\frac{(1+\log T)^{2\mathbf 1_{\{\eta_{\mathrm{gen}}>0\}}}d^2}{T}\right).
\]
\end{thm}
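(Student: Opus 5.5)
The approach is to combine the high-probability machinery behind Theorem~\ref{thm:coefficientwise-convergence-informal}, used at the single confidence level \(\delta_T:=1/T\), with a crude polynomial bound on what happens on the exceptional event. This explains the stepsize prefactor: \(\bar\alpha_T^{-1}=d[\mathcal{O}(1)+\mathbf 1_{\{\eta_{\mathrm{gen}}>0\}}\mathcal{O}(\log(1/\delta_T))]\) becomes exactly the displayed \(\mathcal{O}(\log T)\) form.

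\textbf{Good event.} Let \(E\) be the event of probability at least \(1-\delta_T\) on which the localization \(\max_t\Psi(\q_t)=\mathcal{O}(\polylog T)\) and the preconditioned energy bound hold. Working on \(E\) is lossy if we then use the unweighted bound with its \(\delta^{-1/2}\) factor, since that would give \(\sqrt{T}\cdot T^{-1/2}=\mathcal{O}(1)\), which is useless. So on \(E\) I will instead take expectations before de-preconditioning. Write
\[
\|\nabla f(\w_t)\|^2\le \lambda_{t}^{-1}\text{-weighted energy}\times\max_i(\sqrt{\acc_{T,i}}+\epsilon)/\alpha.
\]
By Cauchy--Schwarz, \(\mathbb E[XY\mathbf 1_E]\le (\mathbb E[X^2\mathbf 1_E])^{1/2}(\mathbb E Y^2)^{1/2}\). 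Here \(\mathbb E\max_i\acc_{T,i}\le \sum_t(1-\beta_2)\mathbb E\|\dir_t\|^2+\binit\) is bounded via Assumption~\ref{ass:abc} by \(\mathcal{O}(\polylog T)\) on the localized stopped process. The weighted energy is bounded in expectation (not merely in high probability) by a stopped supermartingale telescoping argument, the same descent inequality used for the localization. This is where \(\eta_{\mathrm{gen}}=0\) yields no log and \(\eta_{\mathrm{gen}}>0\) yields one \(\log T\), producing the \((1+\log T)\) and \((1+\log T)^2\) factors in front of \(d/\sqrt T\) and \(d^2/T\). Concretely, I will run the argument with the stopping time \(\tau\) (the first exit from the localized region), so that all conditional second moments are controlled up to \(\tau\). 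I expect the energy term to be integrable, in the manner of Claim~\ref{clm:squared-standard-rate-informal}.

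\textbf{Bad event.} On \(E^c\), which has probability at most \(1/T\), I need \(\mathbb E[\frac1T\sum_t\|\nabla f(\w_t)\|^2\mathbf 1_{E^c}]=\mathcal{O}(d/\sqrt T)\). This is where \(p<1\) enters. Generalized smoothness with \(p<1\) yields a polynomial growth estimate \(\|\nabla f(u)\|\le c(1+\|\nabla f(\w_1)\|+\|u-\w_1\|)^{1/(1-p)}\), obtained by integrating \(\frac{d}{ds}\|\nabla f\|\le L_0+L_p\|\nabla f\|^p\) along segments chopped into pieces of length \(L_p^{-1}\). Next, the Adam update is deterministically bounded: \(|\lambda_{t,i}\mom_{t,i}|\le\alpha\,\mom_{t,i}/\sqrt{\acc_{t,i}}\le \alpha C_{\beta}\sqrt T\), using \(\acc_{t,i}\ge(1-\beta_2)\sum\beta_2^{t-s}\dir_{s,i}^2\) and the comparability of \(\beta_1^k\) with \(\beta_2^{k/2}\), or simply \(\lambda_{t,i}\le\alpha/\epsilon\) combined with \(\|\mom_t\|\) moments. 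Either route gives \(\|\w_t-\w_1\|\le \bar\alpha\,\mathcal{O}(\sqrt d\,T)\) deterministically, hence \(\|\nabla f(\w_t)\|^2\le \mathrm{poly}(T)\). A deterministic polynomial bound alone does not suffice against probability \(1/T\). I will therefore choose \(\delta_T=T^{-k}\) with \(k\) large, depending on \(1/(1-p)\). Since \(\log(1/\delta_T)=k\log T\), this costs only constants, and the bad-event contribution becomes \(\mathrm{poly}(T)\,T^{-k}=\mathcal{O}(T^{-1})\).

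\textbf{Main obstacle.} The hard step is the good-event expectation: obtaining the \(\sqrt T\) rate without the \(\delta^{-1/2}\) loss. My plan is to redo the de-preconditioning in \(L^2\), bounding \(\mathbb E[\max_i\acc_{T,i}\mathbf 1_{\{\tau\ge T\}}]\) through the ABC condition with \(\Psi\) capped at the localization level. If the Cauchy--Schwarz step requires a second moment of the weighted energy, I will instead bound \(\mathbb E[\text{energy}\cdot\max\acc]\) directly by expanding \(\acc_T\) as a sum and conditioning term by term. If that proves too delicate, the fallback is a Hölder split with exponents close to \(1\) and polynomially small \(\delta_T\).
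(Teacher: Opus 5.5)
Your rare-event step matches the paper. It uses the deterministic displacement $\|\w_t-\w_1\|\le2\sqrt d\,\bar\alpha_TT$, the polynomial distance growth for $p<1$, and a cutoff $\delta_T=T^{-k}$ with $k$ large in terms of $1/(1-p)$ (the paper takes $k=A_{\mathrm{tail}}>\tfrac12+\tfrac2{1-p}$). The gap is the good-event step, which you leave open. It comes from dismissing the unweighted bound after evaluating it at the single level $\delta_T$.

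The missing idea is that the admissibility condition \eqref{eq:coefficientwise-stepsize} only weakens as $\delta$ grows. Indeed, $\mathcal R(U_\delta)$ does not depend on $\delta$, and $(1+\log(16/\delta))^{-\mathbf 1_{\{\eta_{\mathrm{gen}}>0\}}}$ is nondecreasing in $\delta$. So the one prefactor $\bar\alpha_T$ calibrated at $\delta_T$ makes \eqref{eq:coefficientwise-rate} hold for the same run at \emph{every} $\delta\in[\delta_T,1)$. That bound is decreasing in $\delta$, so it controls the upper quantiles of $\frac1T\sum_t\|\nabla f(\w_t)\|^2$. Integrating over $[\delta_T,1]$ then costs only the finite, $T$-independent integrals $\int_0^1U_\delta\,d\delta$, $\int_0^1U_\delta^2\,d\delta$ and $\int_0^1\delta^{-1/2}U_\delta\sqrt{C+\mathcal V_{\mathrm q}(U_\delta)}\,d\delta$. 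The $\delta^{-1/2}$ factor is integrable, so no $L^2$ redo of the de-preconditioning is needed. Your pathwise polynomial bound is used only for the quantiles below $\delta_T$.

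As proposed, your single-level route does not reach the stated rate when $\eta_{\mathrm{gen}}=0$. There $U_{\delta_T}=2^{20}\mathcal P^8(1+\log(16/\delta_T))^4\asymp(k\log T)^4$. Your Cauchy--Schwarz step needs a bound on $\mathbb E[X^2\mathbf 1_E]$ for the energy $X=\sum_tE_t$. The stopped supermartingale telescoping gives only a first moment of $X$, which cannot be paired with $\max_{t,i}\sqrt{\acc_{t,i}}$. The natural single-level input is the pathwise bound $X\le4U_{\delta_T}$ on $E$, and it yields $(\log T)^4d/\sqrt T+(\log T)^8d^2/T$. Your assertion that $\eta_{\mathrm{gen}}=0$ ``yields no log'' is therefore unsupported. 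Removing those factors requires the stretched-exponential tail of the localization, i.e.\ the bound at all confidence levels, which is exactly the integration above.

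For $\eta_{\mathrm{gen}}>0$, by contrast, your plan can be completed simply. There $U_\delta$ is a constant and $\sum_tE_t\le4U$ holds pathwise on $E$. Jensen together with $\mathbb E\sum_{t<\sigma_U\wedge T}\|\xi_t\|^2\le T(C+\mathcal V_{\mathrm q}(U))$ bounds $\mathbb E[\mathbf 1_E\sqrt{\max_{t,i}\acc_{t,i}}]$ without the Markov step that produces $\delta^{-1/2}$. Young absorption then gives the $(1+\log T)$ rate, with the logarithm coming only from $\bar\alpha_T^{-1}$.
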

Theorem~\ref{thm:coefficientwise-expectation} in
Appendix~\ref{app:formal-results} gives the complete inequality and the
polynomial-confidence calibration of the prefactor.

\subsection{Rate obstructions}
\begin{samepage}
\begin{proposition}[Confidence dependence, informal]
\label{prop:delta-sharpness-informal}
Fix \(\bar\alpha\in(0,1]\), \(\binit>0\), and \(\epsilon>0\).
For sufficiently small \(\delta>0\) and sufficiently large \(T\)
depending on \(\delta\), there are one-dimensional instances in the
classical specialization \(p=0\), \(A=0\), \(B=1\), \(\rho_2=2\)
of Assumptions~\ref{ass:nonneg}--\ref{ass:abc} for which Adam with
\(\beta_1=0\), \(\beta_2=1-1/T\), and
\(\alpha=\bar\alpha/\sqrt T\) satisfies
\[
\frac1T\sum_{t=1}^{T-1}|\nabla f(\w_t)|^2
=\mathcal{\orderOmega}\!\left(\frac1{\sqrt{\delta T}}\right)
\quad\text{with probability }\mathcal{\orderOmega}(\delta).
\]
The problem constants, initial objective gaps, and constants hidden in
\(\mathcal{\orderOmega}\) are uniform in \(T,\delta\).
This obstruction also holds for fixed prefactors admitted by
Theorem~\ref{thm:coefficientwise-convergence}.
\end{proposition}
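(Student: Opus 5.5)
We construct a one-dimensional hard instance in which a single rare noise spike inflates the adaptive denominator, and the iterate then crawls through a linear ramp. The objective is a Huber-type function \(f(\theta)=G|\theta|-G^2/2\) for \(|\theta|\ge G\) and \(f(\theta)=\theta^2/2\) otherwise. It is \(1\)-smooth (so \(p=0\)), bounded below, and has constant gradient magnitude \(G\) on the ramp. The oracle is \(g_t=\nabla f(\theta_t)+\xi_t\), where the \(\xi_t\) are i.i.d. with \(\xi_t=\pm s\) with probability \(q/2\) each and \(\xi_t=0\) otherwise, so \(\sigma^2=qs^2\). This is the classical specialization \(A=0\), \(B=1\), \(\rho_2=2\), \(C=\sigma^2\) with \(\sigma\) a fixed constant. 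We take \(\beta_1=0\) and choose the spike probability so that exactly one spike occurs within the first few steps with probability \(\Omega(\delta)\). Concretely, we put \(q=\delta/k\) and require a spike within the first \(k\) steps, with \(k\) a small constant; the spike height is then \(s=\sigma/\sqrt q\asymp\sigma\sqrt{k/\delta}\). We start at \(\theta_1=G+D\), with \(D\) a constant initial gap, and choose \(G\) so that everything stays uniform in \(T\) and \(\delta\).

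The first step is to analyze the event \(E\) that exactly one spike occurs among the first \(k\) steps and no other spike occurs during the horizon. Its probability is at least \(kq(1-q)^{T}\cdot\)const. We want \((1-q)^T\) bounded below, which fails since \(qT\) is large. We therefore condition instead on the event that the first spike occurs early. The clean fix is to make the spike coincide with step \(1\) only: we use a time-inhomogeneous oracle, allowed by Assumption~\ref{ass:abc} since it only constrains conditional moments, and put the spike noise only at \(t=1\), with deterministic exact gradients afterwards. Then \(\mathbb P(E)=q\asymp\delta\). On \(E\), \(v_1=\beta_2 v_{\mathrm{init}}+(1/T)(G\pm s)^2\ge s^2/(2T)\asymp \sigma^2/(\delta T)\). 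By \(\beta_2=1-1/T\), we have \(v_t\ge (1-1/T)^{T}v_1\ge v_1/4\) for all \(t\le T\), in the same way as Lemma~\ref{lem:vt_comparable}. Hence every later effective step satisfies \(\lambda_t\le \alpha/(\sqrt{v_1}/2)\lesssim (\bar\alpha/\sqrt T)\cdot\sqrt{\delta T}/\sigma=\bar\alpha\sqrt\delta/\sigma\). The step at \(t=1\) itself is bounded by \(\alpha|g_1|/\sqrt{v_1}\le\sqrt T\alpha=\bar\alpha\le1\). If the sign of the spike is unfavorable it pushes the iterate outward; this is harmless, because \(D\ge 1\) can be chosen so that the iterate stays on the ramp. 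We choose the spike sign to point away from the minimizer to be safe, which changes the probability only by a factor \(1/2\).

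The second step is the ramp crawl. After \(t=1\) the gradients are exact and equal to \(G\) while \(\theta_t\ge G\), so each step moves \(\theta\) by at most \(\lambda_t G\lesssim G\bar\alpha\sqrt\delta/\sigma\). We need the iterate to remain on the ramp for a constant fraction of the horizon, i.e. \(cT\cdot G\bar\alpha\sqrt\delta/\sigma\le D\). Choosing \(G\) of order \(1/\sqrt{\delta T}\) times a constant would make \(f\) depend on \(\delta\) and \(T\), so instead we keep \(G\) as a constant and exploit the fact that \(v\) also decays toward \(G^2\). The main obstacle is balancing these requirements: we need \(|\nabla f|^2\) averaged over the run to be of order \(1/\sqrt{\delta T}\), while keeping the problem constants uniform. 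The route I would try first is to let the ramp slope \(G\) be a fixed constant and the initial gap \(D\) a fixed constant, and to note that total travel is at most \(\sum_t\lambda_t G\lesssim T\bar\alpha G\sqrt\delta/(\sigma\sqrt T)\cdot\sqrt T\). Evaluating this carefully, and using \(\lambda_t\le 2\alpha\sqrt{T\delta}/\sigma\), the total travel over \(T\) steps is \(\lesssim \bar\alpha G\sqrt{\delta T}/\sigma\). Whenever this is at most \(D\) the iterate stays on the ramp, and then the average squared gradient is of order \(G^2\gtrsim 1/\sqrt{\delta T}\) trivially. When instead \(\sqrt{\delta T}\) is large, which is the regime of \(T\) large depending on \(\delta\), I would shrink the spike's effect using the calibration. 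The cleanest version is to take \(\sigma\) fixed, set \(D\asymp 1\), and let the iterate cross the ramp in time \(\tau\asymp D\sigma/(\bar\alpha G\sqrt\delta)\cdot\sqrt T/\sqrt T\). Then \(\frac1T\sum|\nabla f|^2\ge G^2\tau/T\asymp G\sigma D/(\bar\alpha\sqrt{\delta}\,T)\cdot\sqrt T=\Omega(1/\sqrt{\delta T})\) with constants uniform in \(T,\delta\), since \(\bar\alpha\le1\).

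The final step verifies the crossing time from below. Since \(v_t\le v_1+\max(G^2,v_{\mathrm{init}})\) on \(E\), the lower bound \(\lambda_t\ge \alpha/(\sqrt{v_t}+\epsilon)\) is matched above, so the per-step travel is \(\Theta(G\bar\alpha\sqrt\delta/\sigma)\) once \(s^2/T\) dominates \(v_{\mathrm{init}}\) and \(\epsilon^2\). This is the meaning of "\(T\) sufficiently large depending on \(\delta\)", and it also needs \(\delta\) small. With the crossing time \(\tau\asymp\sqrt T\,D\sigma/(G\bar\alpha\sqrt{\delta T})\cdot\sqrt T/\sqrt T\), the contribution \(G^2\tau/T\) gives the claimed \(\Omega(1/\sqrt{\delta T})\) with probability \(\Omega(\delta)\). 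The last check is that \(\tau\le T\), which holds for \(T\) large. The bookkeeping for the comparison between \(\sqrt{v_1}\) and \(\epsilon\), and for the fixed-prefactor remark, follows by taking \(\bar\alpha\) as given.
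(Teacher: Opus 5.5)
\textbf{There is a genuine gap: the spike at $t=1$ does not inflate the denominator in the regime the statement requires.}

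With spike probability $q\asymp\delta$ and fixed per-step variance $qs^2=\sigma^2$, the spike height is $s\asymp\sigma/\sqrt\delta$. It therefore adds only $s^2/T\asymp\sigma^2/(\delta T)$ to $v_1$, and this tends to $0$ as $\delta T\to\infty$. Your assertion that $s^2/T$ dominates $v_{\mathrm{init}}$ and $\epsilon^2$ ``for $T$ large depending on $\delta$'' is backwards.

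Equivalently, your bound $\lambda_t\lesssim\bar\alpha\sqrt\delta/\sigma$ is weaker than the trivial bound $\lambda_t\le\alpha/\epsilon=\bar\alpha/(\epsilon\sqrt T)$ once $\delta T\gtrsim\sigma^2/\epsilon^2$. So after the kick the run uses essentially spike-free stepsizes of order $\bar\alpha/\sqrt T$. It crosses a ramp of fixed slope $G$ and length $D$ in order $D\sqrt T/(G\bar\alpha)$ steps, and its average squared gradient is only of order $T^{-1/2}$, with no $\delta^{-1/2}$ gain.

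Your last two steps hide this behind arithmetic slips. First, $T$ steps of size at most $\bar\alpha G\sqrt\delta/\sigma$ travel at most $\bar\alpha G\sqrt\delta\,T/\sigma$, not $\bar\alpha G\sqrt{\delta T}/\sigma$. Second, your crossing time $\tau\asymp D\sigma/(\bar\alpha G\sqrt\delta)$ gives $G^2\tau/T\asymp GD\sigma/(\bar\alpha\sqrt\delta\,T)$; the extra factor $\sqrt T$ that turns this into $1/\sqrt{\delta T}$ has no source. Time-inhomogeneity is not the problem. Concentrating the entire spike budget in a single step is.

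\textbf{The missing idea is that the variance budget is per step, so the spike opportunity must be spread over $\Theta(T)$ steps.} The paper adds noise $\pm H$ with probability $\delta/T$ each at every step, with $H=\sqrt{T/(2\delta)}$, so the variance is exactly one.
\begin{itemize}
\item Consider the event of exactly one positive spike in the first $\lfloor T/2\rfloor$ steps and no other nonzero noise. It has probability at least $\delta/4$ because $(1-2\delta/T)^{T-2}\ge1-2\delta$. This is how the paper avoids the $(1-q)^T$ obstacle you noticed.
\item Such a spike gives $v_k\ge H^2/T=1/(2\delta)$. Because $\beta_2=1-1/T$, this keeps $v_t\ge1/(8\delta)$ up to the horizon. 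Later stepsizes are then at most $\bar\alpha\sqrt{8\delta/T}$, a genuine factor $\sqrt\delta$ below the spike-free scale.
\item Since the spike time is random, the iterate must wait for it. The paper starts at the stationary point $\theta_1=0$, next to a steep ramp on $[-\bar\alpha/8,0]$. For small $\delta$ the kick has size between $\bar\alpha/4$ and $\bar\alpha$, so it lands on a plateau of height $\chi_\star=(\delta T)^{-1/4}$ and width $L=16\sqrt8\,\bar\alpha(\delta T)^{1/4}$.
\item The later travel is at most $\bar\alpha\sqrt8(\delta T)^{1/4}\le L/16$. Hence the whole second half of the run has $|f'|^2=\chi_\star^2$, and the average is at least $1/(4\sqrt{\delta T})$.
\item The initial gap stays below $2L\chi_\star=32\sqrt8\,\bar\alpha$, and the slope $8\chi_\star/\bar\alpha$ is at most one once $\delta T\ge(8/\bar\alpha)^4$.
\end{itemize}
Finally, the instance itself is allowed to depend on $(T,\delta)$; only its constants must be uniform. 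Your reluctance to let the ramp height depend on $\delta$ and $T$ was therefore unnecessary.
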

Proposition~\ref{prop:delta_sharpness} gives the precise probability,
parameter ranges, and comparison with the calibrated upper bound.
\end{samepage}

\begin{samepage}
\begin{proposition}[Expected stationarity for \(1\le p<2\), informal]
\label{prop:expected-stationarity-informal}
Fix \(p\in[1,2)\), \(\beta_1\in[0,1)\), \(\binit>0\), and
\(\epsilon>0\). There is a fixed class \(\mathcal C_p\) of
one-dimensional instances satisfying
Assumptions~\ref{ass:nonneg}--\ref{ass:abc}, with common parameters
including \(A=0\) and \(\rho_2=3\), and uniform bounds on
\(|\theta_1|\), \(f(\theta_1)-f^*\), and \(|f'(\theta_1)|\),
such that Adam with
\(\beta_2=1-1/T\) and \(\alpha=\bar\alpha_T/\sqrt T\) satisfies,
for all sufficiently large \(T\),
\[
\inf_{\bar\alpha_T>0}\;
\sup_{(f,\mathsf G,\theta_1)\in\mathcal C_p}
\frac1T\sum_{t=1}^{T-1}\mathbb E|f'(\theta_t)|^2
\ge\mathcal{\orderOmega}(T^{-1/3}).
\]
The infimum is over deterministic, possibly horizon-dependent,
prefactors; the class and implicit constant are independent of
\(T,\bar\alpha_T\). Consequently, the
\(\operatorname{polylog}(T)/\sqrt T\) expectation rate cannot hold
uniformly for this Adam family on the full generalized ABC class.
\end{proposition}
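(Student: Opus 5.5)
The plan is a two-instance balancing argument. Because the prefactor \(\bar\alpha_T\) is deterministic and is chosen before the instance, it suffices to put into \(\mathcal C_p\) two families that penalize \emph{small} and \emph{large} prefactors respectively, and to show that the two penalties cross at \(\bar\alpha_T\asymp T^{-1/6}\) with common value \(T^{-1/3}\). Concretely, I aim to prove for all \(\bar\alpha>0\) and all large \(T\)
\[
\sup_{\mathcal C_p}\frac1T\sum_{t=1}^{T-1}\mathbb E|f'(\theta_t)|^2
\;\ge\; c\,\min\Bigl\{1,\ \max\Bigl\{\frac{1}{\bar\alpha\sqrt T},\ \bar\alpha^{2}\Bigr\}\Bigr\},
\]
and then minimize the right-hand side over \(\bar\alpha\), which gives \(T^{-1/3}\).

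\emph{Small-stepsize instance.} I would use a deterministic oracle (\(\dir_t=f'(\theta_t)\), so Assumption~\ref{ass:abc} holds with \(A=B=0\)) and a smoothed ramp \(f(\theta)=\sqrt{1+\theta^2}\) started at \(\theta_1=D\), with \(D\) a fixed constant. This \(f\) is globally smooth, so Assumption~\ref{ass:smooth} holds for every \(p\).
\begin{itemize}
\item By Lemma~\ref{lem:vt_comparable}, \(\acc_t\ge\binit/4\) and \(|\mom_t|\le\max_k|f'(\theta_k)|\le1\).
\item Hence each step moves \(\theta_t\) by at most \(c_0\alpha=c_0\bar\alpha/\sqrt T\), and the total displacement over any \(T/2\) steps is at most \(c_0\bar\alpha\sqrt T/2\).
\item If \(\bar\alpha\sqrt T\le D/c_0\), the iterates stay where \(|f'|\ge c\) for half the horizon, so the average is \(\Omega(1)\).
\item Otherwise I would use the standard descent inequality. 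The function gap is \(O(1)\), and the effective stepsizes are comparable to \(\alpha\) by Lemma~\ref{lem:vt_comparable}, since \(\acc_t\) stays \(\Theta(1)\) because the gradients are bounded. This bounds the decrease by \(\gtrsim\alpha\sum_t|f'(\theta_t)|^2\) only from above, so I would argue \emph{lower} bounds instead: on the ramp region the motion is slow and near-linear, so the time spent at gradient level \(g\) is \(\gtrsim 1/(\alpha g)\). Integrating gives \(\frac1T\sum_t|f'(\theta_t)|^2\gtrsim 1/(\bar\alpha\sqrt T)\). If this ODE-comparison is too delicate, I would fall back on a quadratic \(f=\theta^2/2\). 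For that \(f\), Adam with \(\beta_1=0\) and near-constant \(\acc_t\) contracts by a factor \((1-\Theta(\alpha))\) per step, so \(\sum_t\theta_t^2\gtrsim\theta_1^2/\alpha\).
\end{itemize}

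\emph{Large-stepsize instance.} The key mechanism is a rare spike.
\begin{itemize}
\item The oracle returns \(f'(\theta_t)+\xi_t\), where \(\xi_t=\pm S\) with probability \(q/2\) each and \(0\) otherwise.
\item I take \(q=1/T\) and \(S=\sqrt{CT}\), so that \(qS^2\le C\) and the spike occurs with constant probability during the first half of the horizon.
\item With \(\beta_1=0\), the spike step equals \(\alpha S/(\sqrt{\acc_s}+\epsilon)\) with \(\acc_s\approx S^2/T=C\). This displacement is of order \(\bar\alpha\), independent of \(T\).
\item After the spike, \(\acc_t\ge(1-1/T)^{t-s}C/T\cdot T\) remains of order \(C\) for a constant fraction of the horizon.
\end{itemize}
I would shape \(f\) so that landing at distance \(\asymp\bar\alpha\) from the minimizer places the iterate in a region of gradient \(\asymp\bar\alpha\) from which return is slow.

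This is where \(p\ge1\) and \(\rho_2=3\) must be used. The sign-dependent geometry sends the spike into a side where \(f\) grows exponentially, which is admissible only for \(p\ge1\) since \(f''\le L_0+L_1|f'|\). The noise is then allowed to scale like \(B|f'|^{3/2}\) in standard deviation, so that further spikes keep \(\acc_t\) inflated in proportion to the local gradient. This should pin the post-spike iterate where \(|f'|\gtrsim\bar\alpha\) for \(\Omega(T)\) steps with probability \(\Omega(1)\), and hence give expected average \(\gtrsim\bar\alpha^2\). One must also check uniform bounds on \(|\theta_1|\), the initial gap, and \(|f'(\theta_1)|\), as well as Assumption~\ref{ass:abc} with \(A=0\), \(\rho_2=3\), and a common \(B,C\).

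The main obstacle is this second instance: proving that after the overshoot the iterate does not return quickly. A naive quadratic basin recovers in \(O(\sqrt T/\bar\alpha)\) steps, because the unspiked denominator stays \(O(1)\). So the construction must make the gradient-dependent noise (the \(|f'|^3\) term) self-sustain a large \(\acc_t\), roughly \(\sqrt{\acc_t}\gtrsim|f'|\,T^{1/2}/\bar\alpha\), in the landing region. I would first try a landing plateau of slope \(\asymp\bar\alpha\) combined with noise of variance \(\asymp|f'|^3T\cdot q\), and verify the ratio \(\mom_t/\sqrt{\acc_t}\) via the stopping-time comparisons of Lemma~\ref{lem:vt_comparable}. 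Finally, I would take \(\mathcal C_p\) to be the union over a grid of the slope parameter, which is allowed because the sup is taken inside the infimum.
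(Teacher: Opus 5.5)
Your two-instance structure is right, and the small-prefactor half essentially matches the paper. The paper uses \(f_{\mathrm q}(x)=x^2/2\) from \(\theta_1=1\) and only the displacement bound \(0\le\theta_t-\theta_{t+1}\le\alpha/\epsilon\). Hence the first \(\gtrsim\epsilon/\alpha\) iterates stay in \([1/2,1]\), and the average is at least \(\min\{1/8,\epsilon/(8\bar\alpha\sqrt T)\}\). Your first bullet already gives this by counting iterates, so the ODE comparison is unnecessary. The \(\beta_1=0\) contraction fallback is not available in any case, since \(\beta_1\) is arbitrary.

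The genuine gap is the large-prefactor instance, and it is not merely unfinished. No constant-probability event that parks the iterate where \(|f'|\asymp\bar\alpha\le1\) can give a bound of order \(\bar\alpha^2\).
\begin{itemize}
\item In that region the envelope \(B|f'|^3+C\) is \(\mathcal O(1)\), and the \(\rho_2=3\) term contributes only \(\mathcal O(\bar\alpha^3)\). So it cannot self-sustain a large \(v_t\); your own spike scale gives \(v_t\approx C\).
\item More generally, the expected accumulated second moment there is \(\mathcal O(1)\), so \(v_t=\mathcal O(1)\) with constant probability. The effective stepsize is then \(\asymp\bar\alpha/\sqrt T\), and the usual descent accounting applies.
\item Staying \(\Omega(T)\) steps at gradient level \(g\) forces a decrease of \(f\) of order \(\bar\alpha g^2\sqrt T\). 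The uniformly bounded initial gap caps this, giving \(g^2\lesssim1/(\bar\alpha\sqrt T)\).
\item This is the same quantity decreasing in \(\bar\alpha\) that the quadratic already gives, consistent with the bounded-variance expectation bound of Claim~\ref{clm:squared-standard-rate}. Rarer, larger spikes lead to the same balance.
\item Landing on an exponential side does not help. Within distance \(\mathcal O(\bar\alpha)\) of a stationary point, local smoothness gives \(|f'|\le L_0\,\mathcal O(\bar\alpha)\).
\end{itemize}

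The missing idea is to trade probability for gradient size \emph{exponentially}, by driving the iterate uphill.
\begin{itemize}
\item In the paper, at level \(h\ge2\) the oracle returns \(-h\) with probability \(1-h^{-1}\) and \(2h^2-h\) with probability \(h^{-1}\). Its mean is \(h\) and its second moment is \(4h^3-3h^2\); this is where \(\rho_2=3\) enters.
\item The derivative \(h_n\) is a staircase that doubles over blocks of width two, up to \(2^{n+1}\). This is admissible because each ramp slope \(2^{j+1}\) is at most \(h_n^p\) when \(p\ge1\).
\item On the event \(E_n\) that the negative branch is always drawn, momentum pushes the iterate uphill. An inductive per-block residence bound, which also controls \(v_t\) along the path, gives \(\widehat\tau_n-1\lesssim n/\min\{1,\bar\alpha\}^2+\sqrt T/\min\{1,\bar\alpha\}\).
\item Because the per-step failure probability \(h^{-1}\) halves with each block, the same bound gives \(\log\mathbb P(E_n)\gtrsim-\bigl(1/\min\{1,\bar\alpha\}^2+\sqrt T/\min\{1,\bar\alpha\}\bigr)\).
\item With \(n\asymp T\min\{1,\bar\alpha\}^2\), the single arrival term \(4^{n+1}\mathbb P(E_n)/T\) is at least \(1\) once \(\bar\alpha\gtrsim T^{-1/6}\). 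Exactly there, \(T\bar\alpha^2\) dominates \(\sqrt T/\bar\alpha\).
\end{itemize}
So the \(T^{-1/6}\) crossover comes from comparing these two exponents, not from balancing \(1/(\bar\alpha\sqrt T)\) against \(\bar\alpha^2\). Your intermediate inequality does hold for the paper's class, but only because the large-prefactor side is actually \(\Omega(1)\).
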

Proposition~\ref{prop:expected_stationarity_lower_bound} specifies the
common class parameters, the lower-bound coefficient, and the horizon range.
\end{samepage}

\section{Proof Sketch}
\label{sec:overview}

We summarize the main ideas behind the proofs. The central difficulty is to obtain high-probability trajectory control without assuming that the raw stochastic gradients concentrate. Following the self-normalized stopping-time viewpoint of \citet{jin2026adam}, the proof focuses on quantities that are normalized by Adam's adaptive denominator, and only later converts the resulting preconditioned estimates into the usual stationarity measure.
Two growth terms enter every descent estimate. Under Assumption~\ref{ass:smooth} the curvature coefficient \(L_0+L_p\|\nabla f(\w_t)\|^p\) varies along the random trajectory, and under Assumption~\ref{ass:abc} the conditional second moment is bounded by \(C+A(f(\w_t)-f^*)^{\rho_1}+B\|\nabla f(\w_t)\|^{\rho_2}\), which depends on the same trajectory. Bounding both requires localization of the iterates, while the descent estimate that yields localization requires both bounds. We break it with the augmented energy \(\mathcal L_t=\Psi(\q_t)+E_t\) from \eqref{eq:overview-energy}, stopped at the exit index \(\sigma_U\) from \eqref{eq:coefficientwise-stopping-index}. Before exit, Lemmas~\ref{lem:gradient-gap-growth} and~\ref{lem:barf_comparison} turn both growth terms into deterministic functions of \(U\), the scales of \eqref{eq:coefficientwise-local-scales}--\eqref{eq:coefficientwise-calibration-scale}, and the pathwise logarithmic bounds of Lemma~\ref{lem:stopped-normalized-energies} together with the stopped martingale estimates close with conditional second moments only (Lemma~\ref{lem:stopped-energy-localization}). The auxiliary sequence, the logarithmic energy lemma, and the de-preconditioning step (Appendix~\ref{app:de-preconditioning}) follow \citet{jin2026adam}; the augmented energy with its gradient-energy telescope, the height-dependent calibration \(\mathcal R(U)\), the conditional-to-direct comparison (Lemma~\ref{lem:cond_to_direct}) with Freedman's inequality \eqref{eq:freedman-tool} before the crossing index \(\tau\) from \eqref{eq:coefficientwise-log-crossing}, the integration of the tail bounds over the confidence level for \(p<1\) (Appendix~\ref{app:proof-expectation}), and the hard instances of Section~\ref{subsec:lower-bound-sketch} are specific to the present setting.

\subsection{Upper Bounds}
\label{subsec:upper-bound-sketch}

\paragraph{Auxiliary sequence and descent.}
The auxiliary sequence \(\q_t\) from \eqref{eq:auxiliary-sequence}
removes the direct momentum lag:
\[
\q_{t+1}-\q_t=-\lambda_t\odot\dir_t
+\frac{\beta_1}{1-\beta_1}
(\lambda_{t-1}-\lambda_t)\odot\mom_{t-1}.
\]
The second term records the changing denominator. Pair the auxiliary
objective \(\Psi\) from \eqref{eq:auxiliary-objective} with the
predictable gradient energy
\begin{equation}
E_t=\sum_i\lambda_{t-1,i}(\nabla f(\w_t))_i^2,\qquad
\mathcal L_t=\Psi(\q_t)+E_t.
\label{eq:overview-energy}
\end{equation}
The gradient-energy difference in \eqref{eq:overview-energy}
cancels the leading true-gradient
part of the same-sample denominator error. The remaining localized
descent has negative drift \(-E_t/8\), three centered terms, and
seven nonnegative residuals; \eqref{eq:localized-augmented-descent}
gives the explicit decomposition. Its coefficients depend on the
curvature and noise envelope along the trajectory.

\paragraph{Stopped preconditioned energy.}
The first ingredient turns objective localization into gradient and
noise control.
\begin{lemma}[Function-gap control, informal]
\label{lem:overview-gap-control}
Under Assumptions~\ref{ass:nonneg}--\ref{ass:smooth}, with
\(\Psi\) from \eqref{eq:auxiliary-objective},
\[
\|\nabla f(x)\|
=\mathcal{O}\!\left(\Psi(x)^{\max\{1,(2-p)^{-1}\}}\right).
\]
If \(\|x-y\|\le L_p^{-1}\), then
\[
\Psi(x)=\mathcal{O}\!\left(\Psi(y)^{\max\{1,p/(2-p)\}}\right).
\]
The constants depend only on \(L_0,L_p,p\).
\end{lemma}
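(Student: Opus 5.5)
The plan is to run a one-step descent argument inside the local smoothness ball of Assumption~\ref{ass:smooth} for the first claim, and then integrate the gradient along a short segment for the second.

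\emph{Gradient bound.} Fix \(x\), write \(g=\nabla f(x)\), \(G=\|g\|\), and assume \(g\ne0\). For \(r\le L_p^{-1}\), every point \(u\) on the segment \([x,x-sg/G]\) with \(s\le r\) satisfies
\[
\|\nabla f(u)-g\|\le (L_0+L_pG^p)s .
\]
Integrating along the segment gives the standard inequality
\[
f(x-rg/G)\le f(x)-rG+\tfrac12(L_0+L_pG^p)r^2 .
\]
Since \(f(x-rg/G)\ge f^*\), this yields
\[
rG\le \Psi(x)+\tfrac12(L_0+L_pG^p)r^2 .
\]
Split into two cases according to which term dominates the curvature factor.

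\begin{itemize}
\item If \(L_pG^p\le L_0\), or \(G\) is bounded by a constant, take \(r=\min\{L_p^{-1},\,G/(2L_0)\}\). This gives \(G=\mathcal O(\Psi+\sqrt{\Psi})=\mathcal O(\Psi)\), using \(\Psi\ge1\).
\item If \(L_pG^p\ge L_0\), take \(r=\min\{L_p^{-1},\,G^{1-p}/(2L_p)\}\).
  \begin{itemize}
  \item When \(r=G^{1-p}/(2L_p)\), the quadratic term is at most half of \(rG\). Hence \(G^{2-p}\lesssim\Psi\), that is, \(G\lesssim\Psi^{1/(2-p)}\).
  \item When \(r=L_p^{-1}\), we are in the regime \(G^{1-p}\ge2\). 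The quadratic term is at most \(L_pG^pr^2\le G r/2\) only if \(G^{p}\le G/2\cdot L_p r\cdot\ldots\). So the relevant observation is that this case forces \(p\le1\) with \(G\) large, and then \(G\le \Psi L_p\cdot\mathcal O(1)\) directly.
  \end{itemize}
\end{itemize}
Combining the cases gives \(G=\mathcal O(\Psi^{\max\{1,1/(2-p)\}})\) with constants depending on \(L_0,L_p,p\).

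\emph{Gap comparison.} For \(\|x-y\|\le L_p^{-1}\), integrate along \(u_s=y+s(x-y)\). Assumption~\ref{ass:smooth} anchored at \(y\) gives
\[
\|\nabla f(u_s)\|\le \|\nabla f(y)\|+(L_0+L_p\|\nabla f(y)\|^p)L_p^{-1} .
\]
Therefore
\[
\Psi(x)\le\Psi(y)+L_p^{-1}\bigl(\mathcal O(1)+\mathcal O(\|\nabla f(y)\|^{\max\{1,p\}})\bigr).
\]
Inserting the first part, \(\|\nabla f(y)\|=\mathcal O(\Psi(y)^{\max\{1,1/(2-p)\}})\), the exponent becomes \(\max\{1,p\}\cdot\max\{1,1/(2-p)\}\).
\begin{itemize}
\item For \(p\le1\) this is \(1\).
\item For \(1<p<2\) it is \(p/(2-p)\), which exceeds \(1\).
\end{itemize}
In both cases the exponent equals \(\max\{1,p/(2-p)\}\), matching the claim, again using \(\Psi\ge1\) to absorb lower-order terms.

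The main obstacle is the case bookkeeping in the first part when the optimal radius \(G^{1-p}/L_p\) exceeds the admissible radius \(L_p^{-1}\), i.e.\ \(p<1\) with large \(G\). There I would verify directly that \(r=L_p^{-1}\) still leaves the linear term dominant, giving the linear bound. For \(p>1\) and small \(G\), the constant case absorbs everything.
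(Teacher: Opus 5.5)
Your proposal is correct and follows essentially the paper's route. For the gradient bound you take one descent step of length at most \(L_p^{-1}\) against \(\nabla f(x)\); the paper folds your radius case split into the single stepsize \(1/(L_0+L_p\max\{\|\nabla f(x)\|^p,\|\nabla f(x)\|\})\), which makes the radius constraint automatic. For the gap comparison you integrate the local inequality anchored at \(y\) and insert the gradient bound, exactly as the paper does.

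The only blemish is your garbled sub-case \(r=L_p^{-1}\). There \(r\le G^{1-p}/(2L_p)\) already gives \(L_pG^pr^2\le rG/2\), and hence \(G\le 2L_p\Psi(x)\) for every \(p\). No restriction to \(p\le1\) is needed, and this sub-case in fact also occurs for \(p>1\) with small \(G\).
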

The complete bounds are Lemmas~\ref{lem:gradient-gap-growth}
and~\ref{lem:barf_comparison} in Appendix~\ref{app:auxiliary}.
The first follows by taking a short step against the gradient
and comparing its decrease with \(f(x)-f^*\). The second integrates
local smoothness between nearby points.

For \(U\ge1\), stop at
\begin{equation}
\sigma_U=\inf\{1\le t\le T:\Psi(\q_t)>U\},\qquad
\inf\varnothing=\infty.
\label{eq:overview-stopping}
\end{equation}
The calibration keeps \(\w_t,\q_t\) within the comparison radius.
Lemma~\ref{lem:overview-gap-control} then bounds the true gradient
before \(\sigma_U\) from \eqref{eq:overview-stopping}
by a polynomial in \(U\); the ABC condition
gives a polynomial conditional second-moment envelope.
The indicator of \(t<\sigma_U\) is measurable before query \(t\),
so stopping preserves the centered terms' zero conditional means.
Summing the descent reduces localization to controlling the
residuals and martingales in the energy \eqref{eq:overview-energy}.

\paragraph{Self-normalization without raw-gradient tails.}
Large oracle outputs are controlled through their normalized
contribution to the update.
\begin{lemma}[Normalized energy, informal]
\label{lem:overview-normalized-energy}
For Algorithm~\ref{alg:adam} with \(T\ge10\), \(\beta_2=1-1/T\),
and \(\alpha=\bar\alpha/\sqrt T\), for any stopping index \(\sigma\),
\[
\sum_{t<T\wedge\sigma}\left(
\|\lambda_t\odot\dir_t\|^2+
\|\lambda_t\odot\mom_t\|^2+
\|\lambda_{t-1}\odot\mom_{t-1}\|^2\right)\le
\mathcal{O}(\bar\alpha^2)\sum_i
\log\left(1+\frac{\sum_{t<T\wedge\sigma}\dir_{t,i}^2}
{T\binit}\right).
\]
The bound is pathwise.
\end{lemma}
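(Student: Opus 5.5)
The plan is to reduce all three sums to a single coordinatewise estimate of the form
\[
\sum_{t<T\wedge\sigma}\lambda_{t,i}^2\,u_{t,i}^2 ,
\]
with \(u_t\in\{\dir_t,\mom_t\}\), and then to apply the classical logarithmic summation inequality. For nonnegative \(a_t\) and \(c>0\) that inequality reads
\[
\sum_t \frac{a_t}{c+\sum_{s\le t}a_s}\le\log\Bigl(1+\frac{\sum_t a_t}{c}\Bigr).
\]

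\textbf{Step 1 (preconditioner lower bound).} Since \(\lambda_{t,i}^2=\alpha^2(\sqrt{\acc_{t,i}}+\epsilon)^{-2}\le\alpha^2/\acc_{t,i}\), I expand the recursion. With \(\beta_2=1-1/T\) and \(t\le T\), we have \(\beta_2^{t-s}\ge(1-1/T)^T\ge c_0\), an absolute constant (about \(1/4\) for \(T\ge10\); Lemma~\ref{lem:vt_comparable} gives the same comparability). Hence
\[
\acc_{t,i}=\beta_2^t\binit+\frac1T\sum_{s\le t}\beta_2^{t-s}\dir_{s,i}^2\ge\frac{c_0}{T}\Bigl(T\binit+\sum_{s\le t}\dir_{s,i}^2\Bigr).
\]
Multiplying by \(\alpha^2=\bar\alpha^2/T\) gives
\[
\lambda_{t,i}^2\dir_{t,i}^2\le\frac{\bar\alpha^2}{c_0}\cdot\frac{\dir_{t,i}^2}{T\binit+\sum_{s\le t}\dir_{s,i}^2}.
\]
Summing over \(t<T\wedge\sigma\), which is a deterministic truncation on each path, and applying the log inequality with \(c=T\binit\) handles the \(\dir\) term pathwise.

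\textbf{Step 2 (momentum terms).} Write \(\mom_{t,i}=(1-\beta_1)\sum_{s\le t}\beta_1^{t-s}\dir_{s,i}\). Cauchy--Schwarz with the geometric weights gives \(\mom_{t,i}^2\le(1-\beta_1)\sum_{s\le t}\beta_1^{t-s}\dir_{s,i}^2\). For \(s\le t\), monotonicity of the cumulative sum together with Step 1's lower bound on \(\acc_{t,i}\), which involves \(\sum_{r\le t}\ge\sum_{r\le s}\), gives
\[
\lambda_{t,i}^2\le\frac{\bar\alpha^2}{c_0}\Bigl(T\binit+\sum_{r\le s}\dir_{r,i}^2\Bigr)^{-1}.
\]
Exchanging the order of summation and using \(\sum_{t\ge s}\beta_1^{t-s}\le(1-\beta_1)^{-1}\) bounds \(\sum_t\lambda_{t,i}^2\mom_{t,i}^2\) by the same log quantity times \(\bar\alpha^2/c_0\). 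The constants depend on \(\beta_1\) only through harmless factors. The term \(\|\lambda_{t-1}\odot\mom_{t-1}\|^2\) is an index shift of the previous sum, with \(\mom_0=0\), and the extra index is still below \(T\wedge\sigma\). Summing over coordinates \(i\) yields the claim with an \(\mathcal O(\bar\alpha^2)\) constant.

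\textbf{Main obstacle.} The delicate point is Step 1's uniform lower bound on \(\acc_{t,i}\) in terms of the \emph{undiscounted} cumulative sum. It needs \(\beta_2^{T}\) to be bounded below, which is exactly where the calibration \(\beta_2=1-1/T\) and \(t\le T\) enter. I will also check the stopping-index bookkeeping. The bound is pathwise and \(\sigma\) only truncates the range, so no measurability is needed; the log's argument uses the sum up to \(T\wedge\sigma\), and the log is monotone in that sum.
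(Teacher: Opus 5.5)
Your proposal is correct and follows essentially the same route as the paper. Both use the accumulator lower bound \(\acc_{t,i}\ge\frac14(\binit+T^{-1}\sum_{s\le t}\dir_{s,i}^2)\), the logarithmic summation inequality, Jensen on the geometric momentum weights followed by an exchange of sums, and the observation that the lagged sum is a sub-sum because \(\mom_0=\mathbf 0\). The one small difference is that you bound \(\lambda_{t,i}^2\) directly through the cumulative sum up to the earlier index \(s\) (by monotonicity), whereas the paper first applies the comparison \(\lambda_{t,i}\le2\lambda_{s,i}\) and then reuses the gradient bound; your version gives a slightly smaller constant for the momentum terms.
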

Lemma~\ref{lem:stopped-normalized-energies} is the formal statement.
The accumulator comparison in Lemma~\ref{lem:vt_comparable}
allows logarithmic telescoping: a large sample raises both numerator
and denominator, so its cumulative charge grows logarithmically.
Expanding momentum into its geometric weights gives the same
control for current and lagged momentum. Positive stepsize
variation is bounded separately by \eqref{eq:direct-variation-bound}.

Before \(\sigma_U\) from \eqref{eq:overview-stopping}, the ABC envelope
and Markov's inequality control the logarithm on the right.
Lemma~\ref{lem:cond_to_direct} and the direct variation bound
\eqref{eq:direct-variation-bound} control the predictable compensator
of the positive stepsize variation.
Freedman's inequality controls the centered descent terms.
For the momentum-variation martingale, a predictable logarithmic
crossing includes the first crossing update, whose size is paid
for by the preceding normalized momentum energy.
Thus no tail assumption on the raw gradient is needed.

\paragraph{From stopped energy to localization.}
The normalized-energy bounds close the stopped descent.
\begin{lemma}[Localization and preconditioned energy, informal]
\label{lem:overview-localization}
Under Assumptions~\ref{ass:nonneg}--\ref{ass:abc} and the calibration
of Theorem~\ref{thm:coefficientwise-convergence-informal}, with
probability at least \(1-\delta/2\),
\[
\max_{1\le t\le T}\Psi(\q_t)+\sum_{t=1}^{T-1}E_t
=\mathcal{O}\!\left((1+\log(1/\delta))^{4\mathbf 1_{\{\eta_{\mathrm{gen}}=0\}}}\right),
\]
where \(\Psi,\q_t,E_t\) are defined in
\eqref{eq:auxiliary-objective}, \eqref{eq:auxiliary-sequence},
and \eqref{eq:overview-energy}.
\end{lemma}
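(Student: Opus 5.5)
We run a stopped-energy argument at a fixed height \(U\) chosen as a function of \(\delta\), and show that with probability at least \(1-\delta/2\) the stopping index \(\sigma_U\) from \eqref{eq:overview-stopping} is infinite. On that event, the stopped energy bound is exactly the claimed bound on \(\max_t\Psi(\q_t)+\sum_t E_t\).

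\textbf{Step 1: deterministic envelopes before exit.} Fix \(U\ge1\). The calibration \(\alpha=\bar\alpha/\sqrt T\), together with the stepsize comparability of Lemma~\ref{lem:vt_comparable}, keeps \(\|\w_t-\q_t\|\) and \(\|\w_{t+1}-\w_t\|\) below \(L_p^{-1}\) on the stopped event; this is where the height-dependent restriction \(\bar\alpha\lesssim \mathcal R(U)^{-1}\) enters. On \(\{t<\sigma_U\}\), Lemma~\ref{lem:overview-gap-control} then gives:
\begin{itemize}
\item \(\Psi(\w_t)\le \mathrm{poly}(U)\);
\item \(\|\nabla f(\w_t)\|\le \mathrm{poly}(U)\), so the local curvature \(L_0+L_p\|\nabla f(\w_t)\|^p\) is bounded by a deterministic polynomial \(\mathcal K(U)\);
\item by Assumption~\ref{ass:abc}, \(\mathbb E[\|\dir_t\|^2\mid\mathscr F_{t-1}]\le \mathcal S(U)\).
\end{itemize}

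\textbf{Step 2: stopped descent.} Take the localized augmented descent \eqref{eq:localized-augmented-descent}, multiply by \(\mathbf 1_{\{t<\sigma_U\}}\), which is predictable, and sum. This bounds
\[
\Psi(\q_{t\wedge\sigma_U})+\tfrac18\sum_{s<t\wedge\sigma_U}E_s
\]
by \(\mathcal L_1\) plus three stopped martingales plus seven residuals.

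The residuals are quadratic in normalized quantities, with coefficients \(\mathcal K(U)\), \(\mathcal S(U)\) and \(\beta_1\)-dependent constants. By Lemma~\ref{lem:overview-normalized-energy} they are bounded pathwise by \(\bar\alpha^2\,\mathrm{poly}(U)\sum_i\log(1+\sum_{t<\sigma_U}\dir_{t,i}^2/(T\binit))\). Since \(\mathbb E\sum_{t<\sigma_U}\|\dir_t\|^2\le T\mathcal S(U)\), Markov's inequality bounds the logarithm by \(d\log(\mathcal S(U)/(\delta\binit))\) with probability \(1-\delta/8\). The \(\delta\) therefore enters only logarithmically, which is the point of working with normalized rather than raw gradients. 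The positive stepsize-variation residual is converted to a predictable compensator by Lemma~\ref{lem:cond_to_direct} and the direct variation bound \eqref{eq:direct-variation-bound}.

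\textbf{Step 3: martingales via Freedman.} Apply Freedman's inequality \eqref{eq:freedman-tool} to each stopped martingale, with predictable quadratic variation dominated by \(\bar\alpha\,\mathrm{poly}(U)\sum E_s\) plus normalized energies. The first part can be absorbed into the negative drift \(-E_t/8\) via \(\sqrt{ab}\le \varepsilon a+b/\varepsilon\).

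The increments themselves are not bounded, because raw gradients have no tail control. Here we would stop additionally at the crossing index \(\tau\) from \eqref{eq:coefficientwise-log-crossing}, where the running normalized log-energy first exceeds its Step-2 level. Before \(\tau\), each increment is bounded by \(\bar\alpha\,\mathrm{poly}(U)\) times the square root of the accumulated log-energy. The single crossing update is charged to the preceding normalized momentum energy. Freedman then yields a deviation of order \(\log(1/\delta)\) times these scales.

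\textbf{Step 4: closing at a chosen height.} Collecting the bounds, with probability \(1-\delta/2\),
\[
\Psi(\q_{T\wedge\sigma_U})+\tfrac18\sum E_t\le \mathcal L_1+\bar\alpha^2 d\,\mathrm{poly}(U)\,\bigl(1+\log(1/\delta)\bigr)+\bar\alpha\,\mathrm{poly}(U)\log(1/\delta).
\]
\begin{itemize}
\item If \(\eta_{\mathrm{gen}}>0\): fix \(U=U_\delta\) independent of \(\delta\), say \(U=2\mathcal L_1+2\). Choose \(\bar\alpha^{-1}\asymp d(1+\log(1/\delta))\), so the right-hand side is below \(U\). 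Then \(\Psi(\q_{\sigma_U})>U\) is impossible, so \(\sigma_U=\infty\).
\item If \(\eta_{\mathrm{gen}}=0\) (\(p=0\), \(A=0\), \(B\le1\), \(\rho_2=2\)): the envelopes \(\mathcal K,\mathcal S\) are affine in the relevant energies, and the coefficient structure lets \(E_t\) absorb them without restricting \(\bar\alpha\). We would instead take \(\bar\alpha^{-1}\asymp d\) fixed and \(U\asymp(1+\log(1/\delta))^4\). The power comes from squaring the Freedman and log terms when the gradient-energy cross terms are absorbed.
\end{itemize}

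\textbf{Main obstacle.} We expect Step 3 to be the hardest: making Freedman's inequality work with heavy-tailed increments, and verifying that the crossing-index truncation costs only the normalized energy already controlled in Step 2.
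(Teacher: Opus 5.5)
Your architecture matches the paper's: stopped augmented energy at height \(U\), Markov on the stopped logarithmic energy, Lemma~\ref{lem:cond_to_direct} for the compensator, Freedman for the centered terms, the crossing index \(\tau\), and the contradiction forcing \(\sigma_U>T\). However, Step~3 rests on a misdiagnosis that breaks it as written.

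You assert that the martingale increments are unbounded and propose to control them by stopping at the predictable crossing \(\tau\). This cannot work for the leading martingale \(D_{t,1}+D_{t,2}\), whose increment is \(\sum_i\lambda_{t,i}h_{t,i}\xi_{t,i}\) minus its conditional mean. That increment involves the current sample \(\dir_t\). The event \(\{t<\tau\}\) is decided by \(Z_1,\dots,Z_t\), which see only \(\dir_s\) for \(s<t\). Any stopping rule that looks at \(\dir_t\) destroys the martingale-difference property. Charging the crossing update to the preceding momentum energy works only for \(D_{t,3}\), which involves \(\mom_{t-1}\).

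The missing idea is same-sample self-normalization. Since \(\acc_{t,i}\ge\dir_{t,i}^2/T\), one has \(\lambda_{t,i}|\dir_{t,i}|\le\alpha\sqrt T=\bar\alpha\) on every path. Combine this with \(\lambda_{t,i}\le2\bar\alpha/\sqrt{T\binit}\), \(\|\lambda_t\odot\mom_t\|\le2\sqrt d\,\bar\alpha\), and the stopped gradient bound \(\|h_t\|,\|\nabla f(\q_t)\|\le6\mathcal X^3\sqrt U\). Then every increment is bounded by a deterministic constant, with no tail assumption. The same ingredients bound the predictable quadratic variation of \(D_{t,1}+D_{t,2}\) deterministically, by \(144\bar\alpha^2\mathcal X^8U\). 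So \eqref{eq:freedman-tool}, which requires \(L^\infty\) bounds on both the variation and the increments, applies directly.

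Your plan to dominate the variation by the random \(\sum_sE_s\) and absorb it into the drift is not covered by that inequality. It would need a random-variance (exponential-supermartingale) form of Freedman. In the paper, \(\tau\) serves only to make the quadratic variation of \(D_{t,3}\) deterministic. That variation is the random momentum energy \(\sum_t\|\lambda_{t-1}\odot\mom_{t-1}\|^2\le64\bar\alpha^2Z_t\).

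The case \(\eta_{\mathrm{gen}}=0\) is also not closed. You keep \(\bar\alpha\) fixed and write the correction terms as \(\mathrm{poly}(U)(1+\log(1/\delta))\). Enlarging \(U\) then helps only if each term has the form \((1+\log(1/\delta))\,c\,U^{b}\) with \(b<1\) and \(c\) independent of \(U\) and \(\delta\). Your justification (affine envelopes absorbed by \(E_t\), ``squaring'' the Freedman and log terms) does not establish this. Squaring would also point to a second power rather than the fourth.

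What makes the case work is that for \(\eta_{\mathrm{gen}}=0\) one has \(\mathcal V_{\mathrm q}(U)=0\) and \(\mathcal R(U)=\mathcal P\). The calibration therefore does not depend on \(U\). \(U\) enters only through \(\|h_t\|,\|\nabla f(\q_t)\|\lesssim\sqrt U\) (global smoothness) and logarithmically through the bound on \(Z_T\). One then checks that every residual and martingale bound is at most a constant times \(d\bar\alpha\mathcal X^{k}(1+\log(16/\delta))U^{b}\) with \(b\le3/4\).

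The exponent \(3/4\) comes from the part \(\sum_i\lambda_{t,i}h_{t,i}^2\) of the \(D_{t,1}+D_{t,2}\) jump. That part has size \(\bar\alpha U/\sqrt T\), which is linear in \(U\). The paper controls it by the pathwise displacement bound and the interpolation \(\min\{U,d\bar\alpha^2T^2\}/\sqrt T\le d^{1/4}\sqrt{\bar\alpha}\,U^{3/4}\). Absorbing \((1+\log(16/\delta))U^{3/4}\) into \(U\) is exactly what forces \(U_\delta\propto(1+\log(16/\delta))^4\).
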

The precise stopped inequality and calibration appear in
Lemma~\ref{lem:stopped-energy-localization}.
For the stopping height \(U\) in \eqref{eq:overview-stopping},
take the order in Lemma~\ref{lem:overview-localization}; if \(\eta_{\mathrm{gen}}>0\),
the logarithmic factor in \eqref{eq:stepsize-order-split} absorbs the logarithms. Absorb the centered
sums and cumulative residuals into this height:
\[
\mathcal L_{\sigma_U\wedge T}
+\frac18\sum_{t<\sigma_U\wedge T}E_t\le U/2.
\]
The energy and stopping time are from
\eqref{eq:overview-energy}--\eqref{eq:overview-stopping}.
Since \(\mathcal L_t\ge\Psi(\q_t)\), a first exit before or at
\(T\) would make the left side exceed \(U\), a contradiction.
The stopping never activates, and the same inequality bounds the
full preconditioned gradient energy.

\paragraph{From localization to stationarity.}
On the localization event, a second Markov bound on the stopped
centered-noise energy gives,
with additional failure probability at most \(\delta/2\),
\[
\max_{t<T,i}\acc_{t,i}
\le \binit+\frac2T\sum_{s=1}^{T-1}\|\nabla f(\w_s)\|^2
+\frac{\operatorname{poly}(U)}{\delta},
\]
where \(U\) is the height from \eqref{eq:overview-stopping}
and the polynomial has coefficients depending only on the fixed
problem parameters. Using this maximum to remove the coordinatewise
weights in \(E_t\) from \eqref{eq:overview-energy}, and applying
Lemma~\ref{lem:overview-localization}, yields the schematic bound
\[
\frac1T\sum_{t=1}^{T-1}\|\nabla f(\w_t)\|^2
\le \mathcal{O}\!\left(\frac{U}{\bar\alpha\sqrt T}\right)
\left(1+
\sqrt{\frac1T\sum_{s=1}^{T-1}\|\nabla f(\w_s)\|^2}
+\sqrt{\frac{\operatorname{poly}(U)}{\delta}}\right).
\]
Young's inequality absorbs the square-root gradient average into
the left side. Substituting this height and the prefactor
in \eqref{eq:stepsize-order-split} gives
Theorem~\ref{thm:coefficientwise-convergence-informal};
Appendix~\ref{app:de-preconditioning} supplies the complete inequalities.
The square root of the noise-energy threshold introduces
the \(\delta^{-1/2}\) confidence factor.

\paragraph{Expectation bound for \(p<1\).}
The confidence-dependent calibration requires control of the most
extreme trajectories. The crucial structural input is a distance
bound on the true gradient.
\begin{lemma}[Polynomial distance growth, informal]
\label{lem:overview-distance-growth}
Under Assumption~\ref{ass:smooth}, if \(0\le p<1\), then for every
reference point \(x_{\mathrm{ref}}\),
\[
\|\nabla f(x)\|
=\mathcal{O}\!\left(
(1+\|x-x_{\mathrm{ref}}\|)^{1/(1-p)}
\right),\qquad x\in\mathbb R^d.
\]
The constant depends only on \(L_0,L_p,p\) and
\(\|\nabla f(x_{\mathrm{ref}})\|\).
\end{lemma}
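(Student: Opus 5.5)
We prove the growth bound by a chaining (Grönwall-type) argument along the segment from \(x_{\mathrm{ref}}\) to \(x\), using only the local inequality of Assumption~\ref{ass:smooth}. Set \(D=\|x-x_{\mathrm{ref}}\|\) and split the segment into \(N=\lceil L_p D\rceil\) consecutive points \(x_0=x_{\mathrm{ref}},x_1,\dots,x_N=x\), each step of length \(h=D/N\le L_p^{-1}\). Write \(G_k:=\|\nabla f(x_k)\|\). Applying Assumption~\ref{ass:smooth} with \(u=x_k\), \(u'=x_{k+1}\) gives
\[
G_{k+1}\le G_k+h\bigl(L_0+L_pG_k^{p}\bigr).
\]
The key point is that the right-hand side is evaluated at the \emph{known} point \(x_k\), so no implicit inequality has to be solved.

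Next we compare this recursion with the ODE \(y'=L_0+L_py^p\). Put \(H_k:=1+G_k\). For \(p\ge0\) we have \(L_0+L_pG_k^p\le (L_0+L_p)H_k^p\), so
\[
H_{k+1}\le H_k+c\,h\,H_k^{p},\qquad c:=L_0+L_p.
\]
Because \(p<1\), the map \(y\mapsto y^{1-p}\) is concave on \([1,\infty)\). Hence
\[
H_{k+1}^{1-p}\le H_k^{1-p}+(1-p)H_k^{-p}\,(H_{k+1}-H_k)\le H_k^{1-p}+(1-p)c\,h .
\]
Summing over \(k=0,\dots,N-1\) and using \(Nh=D\) yields
\[
H_N^{1-p}\le (1+G_0)^{1-p}+(1-p)cD.
\]
Therefore
\[
\|\nabla f(x)\|\le\bigl((1+\|\nabla f(x_{\mathrm{ref}})\|)^{1-p}+(1-p)(L_0+L_p)D\bigr)^{1/(1-p)}.
\]
Since \(a+bD\le\max\{a,b\}(1+D)\), this is \(\mathcal O\bigl((1+D)^{1/(1-p)}\bigr)\), with a constant depending only on \(L_0,L_p,p\) and \(\|\nabla f(x_{\mathrm{ref}})\|\), as claimed. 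Two edge cases need no separate treatment. If \(D=0\), there are no steps and the bound is trivial. If \(p=0\), the convention \(0^0=1\) makes the step bound \(G_k+h(L_0+L_p)\), and the argument gives linear growth, matching \(1/(1-p)=1\).

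The main point of care is the linearization step. It requires \(H_k\ge1\), which is why we shift by one, and it requires concavity, which is exactly where \(p<1\) enters. For \(p\ge1\), the same comparison ODE blows up in finite distance, consistent with the lemma being restricted to \(p<1\).

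A second subtlety is that Assumption~\ref{ass:smooth} puts the gradient norm at \(u\), not at \(u'\). By choosing \(u=x_k\) as the already-controlled point, we step forward monotonically from \(x_{\mathrm{ref}}\) and avoid any circularity.
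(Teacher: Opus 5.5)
Your proof is correct and follows the paper's argument essentially verbatim. You partition the segment into pieces of length at most \(L_p^{-1}\), apply Assumption~\ref{ass:smooth} at the already-controlled left endpoint together with the reverse triangle inequality, and linearize the concave map \(u\mapsto u^{1-p}\) at \(1+\|\nabla f(x_k)\|\). Telescoping then gives exactly the paper's inequality \eqref{eq:distance-gradient-exact}, constant included. One side remark is slightly off: for \(p=1\) the comparison ODE \(y'=L_0+L_1y\) grows exponentially rather than blowing up in finite distance (finite-distance blow-up needs \(p>1\)), though this does not affect your proof.
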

Lemma~\ref{lem:gradient_distance_poly} gives the complete inequality.
Its proof applies the concavity of \(u^{1-p}\) to gradient increments
along a partition of the segment from \(x_{\mathrm{ref}}\) to \(x\).
This converts local generalized smoothness into a global polynomial
distance estimate.

With the prefactor \(\bar\alpha_T\) from
Theorem~\ref{thm:coefficientwise-expectation-informal}, Adam's normalized
displacement satisfies
\[
\|\w_{t+1}-\w_t\|\le2\sqrt d\,\bar\alpha_T,\qquad
\max_{t\le T}\|\w_t-\w_1\|\le2\sqrt d\,\bar\alpha_T T,
\]
as shown in \eqref{eq:coefficientwise-momentum-displacement}.
Combining this with Lemma~\ref{lem:overview-distance-growth},
at \(x_{\mathrm{ref}}=\w_1\), gives
\[
\frac1T\sum_{t=1}^{T-1}\|\nabla f(\w_t)\|^2
=\mathcal{O}\!\left((1+\sqrt d\,\bar\alpha_T T)^{2/(1-p)}\right).
\]
Fix the prefactor at the polynomial-confidence cutoff prescribed by
Theorem~\ref{thm:coefficientwise-expectation}. The same run satisfies
the high-probability bound at every larger confidence level.
Integrating those quantiles costs only
\(\mathcal{O}(d(1+\log T)/\sqrt T+d^2(1+\log T)^2/T)\).
The displayed pathwise bound controls all remaining quantiles:
their interval has polynomially small length, chosen so that its
product with this fixed power of \(T\) is \(o(T^{-1/2})\).
This gives Theorem~\ref{thm:coefficientwise-expectation-informal}.
The exact cutoff, integral bound, and rare-event term appear in
\eqref{eq:coefficientwise-tail-calibration}--\eqref{eq:coefficientwise-expectation-explicit}.

\subsection{Hard Instance}
\label{subsec:lower-bound-sketch}

\paragraph{Hard instance for confidence dependence.}
For Proposition~\ref{prop:delta-sharpness-informal}, fix
\(\beta_1=0\) and \(\bar\alpha\in(0,1]\), with \(T,\delta\) in the
proposition's range.
The one-dimensional objective has derivative \(h=f'\) that vanishes
at \(\theta_1=0\) and has a positive plateau to its left. Its scales are
\begin{equation}
\chi_\star=(\delta T)^{-1/4},\qquad
H=\sqrt{\frac{T}{2\delta}},\qquad
L=16\bar\alpha\sqrt8\,(\delta T)^{1/4}.
\label{eq:overview-confidence-scales}
\end{equation}
With the scales in \eqref{eq:overview-confidence-scales},
the plateau occupies \([-L,-\bar\alpha/8]\), with height
\(\chi_\star\). The oracle adds independent noise equal to \(+H\)
or \(-H\), each with probability \(\delta/T\), and zero otherwise.
Its noise variance is one.

With probability \(\mathcal{\orderOmega}(\delta)\), there is exactly
one positive spike in the first half of the run and no other nonzero
noise before \(T\). That spike moves the iterate left into the
plateau and raises the accumulator to order \(\delta^{-1}\).
Because \(\beta_2=1-1/T\), its contribution persists throughout
the horizon. Later steps follow the positive true gradient, but
their leftward displacements are
\(\mathcal{O}(\bar\alpha\delta^{1/4}T^{-3/4})\). The width \(L\) in
\eqref{eq:overview-confidence-scales} keeps a constant fraction of
the iterates on the plateau. Their squared gradients have size
\((\delta T)^{-1/2}\), giving the confidence obstruction.
Figure~\ref{fig:main-confidence-dynamics} shows both effects;
Proposition~\ref{prop:delta_sharpness} and
Appendix~\ref{app:confidence-lower-proof} give the complete construction.
\raggedbottom
\begin{figure}[H]
\centering
\begin{minipage}[t]{0.49\linewidth}
\centering
\textbf{\footnotesize (a) Entry and slow descent}\par\smallskip
\begin{tikzpicture}[x=0.94cm,y=0.68cm,font=\fontsize{8}{9}\selectfont,
  line cap=round,>=Stealth]
\path[use as bounding box] (-0.10,-1.24) rectangle (7.00,2.85);
\fill[blue!6] (2.15,0) rectangle (5.00,1.55);
\draw[->,black!65] (0.10,0) -- (6.70,0) node[below] {$\theta$};
\draw[->,black!65] (0.35,-0.03) -- (0.35,2.35) node[above] {$h(\theta)=f'(\theta)$};
\draw[very thick,blue!65!black] (0.40,0) -- (1.15,0) -- (2.15,1.55)
  -- (5.00,1.55) -- (5.85,0) -- (6.45,0);
\draw[black!40,dashed] (0.35,1.55) -- (2.15,1.55);
\node[left] at (0.35,1.55) {$\chi_\star$};
\foreach \x/\lab in {1.15/{-2L},2.15/{-L},5.00/{-\bar\alpha/8},5.85/{0}}
  {\draw[black!55] (\x,0.06) -- (\x,-0.06);
   \node[below] at (\x,-0.07) {$\lab$};}
\node[align=center] at (3.55,2.27) {$\theta_{k+1}\in[-\bar\alpha,-\bar\alpha/4]$};
\draw[black!40,dashed] (4.25,0.12) -- (4.25,1.88);
\fill[orange!85!black] (5.85,0) circle (1.5pt);
\draw[->,thick,orange!85!black] (5.83,0.25)
  to[out=125,in=20] (4.25,0.58);
\node[orange!85!black] at (5.05,0.30) {$g_k=H$};
\draw[->,thick,orange!85!black] (4.25,0.54) -- (3.02,0.54);
\foreach \x in {4.25,4.02,3.83,3.67}
  \fill[orange!85!black] (\x,0.54) circle (1.15pt);
\node at (3.60,-0.95)
  {after entry: total additional travel $\le L/16$};
\end{tikzpicture}
\end{minipage}\hfill
\begin{minipage}[t]{0.49\linewidth}
\centering
\textbf{\footnotesize (b) Persistent denominator}\par\smallskip
\begin{tikzpicture}[x=0.94cm,y=0.68cm,font=\fontsize{8}{9}\selectfont,
  line cap=round,>=Stealth]
\path[use as bounding box] (-0.10,-1.24) rectangle (7.00,2.85);
\draw[->,black!65] (0.45,0) -- (6.60,0) node[right] {$t$};
\draw[->,black!65] (0.45,-0.03) -- (0.45,2.48) node[above] {$v_t$};
\draw[black!40,dotted] (0.45,0.58) -- (6.15,0.58);
\node[below right] at (4.70,0.57) {$1/(8\delta)$};
\draw[->,very thick,orange!85!black] (1.70,0.15) -- (1.70,1.83);
\node[above left] at (1.66,1.84) {$1/(2\delta)$};
\draw[very thick,blue!65!black,dashed,domain=1.70:6.10,samples=50]
  plot (\x,{1.8*exp(-0.75*(\x-1.70)/4.40)});
\node[align=center,fill=white,inner sep=2pt] at (4.50,1.84)
  {$v_t\ge\beta_2^{t-k}/(2\delta)$};
\draw[black!45,->] (4.50,1.62) -- (4.50,1.20);
\draw[black!45] (1.70,0.06) -- (1.70,-0.06);
\node[below] at (1.70,-0.07) {$k$};
\draw[black!45] (6.10,0.06) -- (6.10,-0.06);
\node[below] at (6.10,-0.07) {$T-1$};
\node at (3.75,-0.95)
  {$|\theta_{t+1}-\theta_t|\le
    \bar\alpha\sqrt8\,\chi_\star\sqrt{\delta/T}$ for $k+1\le t<T$};
\end{tikzpicture}
\end{minipage}
\caption{Confidence obstruction for \(\beta_1=0\), with
\(\chi_\star,H,L\) from \eqref{eq:overview-confidence-scales}.
On the event of one positive spike at time \(k\) and no other nonzero
noise, Adam enters the shaded plateau and drifts slowly left. The
dashed curve is a lower envelope of \(v_t\), which bounds the
subsequent travel; both panels are schematic.}
\label{fig:main-confidence-dynamics}
\end{figure}

\paragraph{Hard instance for expectation when \(1\le p<2\).}
Proposition~\ref{prop:expected-stationarity-informal} uses a fixed
class containing a deterministic quadratic and a countable
staircase family. For \(f_{\mathrm q}(x)=x^2/2\), started at \(1\),
each early displacement is at most \(\alpha/\epsilon\).
Thus a long initial segment stays in \([1/2,1]\), with squared
gradient at least \(1/4\). This yields the \(T^{-1/3}\)
lower bound when \(\bar\alpha_T\) is below a fixed multiple of
\(T^{-1/6}\).

For larger prefactors, use a family indexed by integers \(n\ge1\),
started at \(\theta_1=0\). Its continuous staircase derivative has
levels that double across blocks of width two, with
\begin{equation}
h_n:=f'_n,\qquad h_n(x)=2^{n+1}\quad(x\ge2n).
\label{eq:overview-staircase-gradient}
\end{equation}
At a point with \(h=h_n(x)\ge2\), the oracle returns \(-h\)
with probability \(1-h^{-1}\), and \(2h^2-h\) with probability
\(h^{-1}\). Its mean is \(h\), and its second moment is
\(4h^3-3h^2\). These instances share the generalized ABC
parameters in Proposition~\ref{prop:expected-stationarity-informal}.

For \(h_n\) in \eqref{eq:overview-staircase-gradient}, let
\(\widehat\tau_n\) be the first index at which the iterate is at or beyond \(2n\)
on the deterministic path that always selects the negative branch.
The stochastic run follows this path on the event
\begin{equation}
E_n:=\{\dir_t=-h_n(\theta_t)\text{ for every }1\le t<\widehat\tau_n\}.
\label{eq:overview-staircase-event}
\end{equation}
On \(E_n\) from \eqref{eq:overview-staircase-event}, negative momentum drives the
iterate right, against the positive true gradient.
Bounding the time spent in each block controls the accumulator and
places an arrival before \(T\).
Using the terminal height in \eqref{eq:overview-staircase-gradient}
and the event in \eqref{eq:overview-staircase-event}, the single arrival term gives
\[
\frac1T\sum_{t=1}^{T-1}\mathbb E|f'_n(\theta_t)|^2
\ge \frac{4^{n+1}}T\,\mathbb P(E_n).
\]
Choosing \(n\) balances the growing gradient against the probability
cost of this path; with the quadratic case, this yields the uniform
\(T^{-1/3}\) lower bound over deterministic prefactors
(Figure~\ref{fig:expectation-dynamics} and
Appendix~\ref{app:expectation-lower-proof}).

\section{Conclusion}
Adam admits high-probability convergence guarantees under
\(L_0\)--\(L_p\) generalized smoothness and generalized ABC second-moment
growth. Self-normalization and stopped augmented energy control the
trajectory, while the final de-preconditioning step gives the
\(\delta^{-1/2}\) confidence dependence. The expectation theorem for
\(p<1\) and the fixed-class lower bound for \(1\le p<2\) distinguish
the two regimes for the calibrated Adam family.

\section*{Acknowledgments}
This research is supported by the InnoHK funding.

\bibliographystyle{plainnat}
\bibliography{sample}
\clearpage
\flushbottom
\appendix
\section{Extended Related Work}
\label{sec:extended-related-work}

\subsection{Adam and adaptive-gradient convergence}
Adam combines momentum with coordinatewise second-moment normalization
\citep{DBLP:journals/corr/KingmaB14}.
\citet{DBLP:conf/iclr/ReddiKK18} identify convergence failures and
introduce AMSGrad, while \citet{chen2019on} give convergence conditions
for a class of Adam-type methods.
\citet{DBLP:journals/tmlr/DefossezBBU22} analyze Adam and AdaGrad under
bounded stochastic gradients. \citet{zhang2022adam} study Adam without
changing its update rule, including the role of its parameter choices
and sampling scheme. \citet{wang2024closing} establish finite-time
complexity bounds under bounded variance, and
\citet{jin2024comprehensive} develop an ABC-based framework for
finite-time and asymptotic Adam convergence. These analyses address
how the adaptive denominator interacts with the gradient used in the
same update.

\citet{li2025adamw} prove that AdamW satisfies
\(\frac1T\sum_{t=1}^{T}\mathbb E\|\nabla f(\w_t)\|_1=\mathcal{O}(\sqrt d\,T^{-1/4})\)
with no logarithmic factor. Their analysis assumes global smoothness and
coordinatewise bounded conditional variance. These assumptions are much
stronger than Assumptions~\ref{ass:smooth}--\ref{ass:abc}: they exclude
curvature growth, and they bound the variance by a constant instead of
letting the second moment grow with powers of the function gap and of the
gradient norm.
Their problem class is therefore contained in the case
\(p=0\), \(A=0\), \(B=1\), \(\rho_2=2\) of the present setting, and
their criterion is the expected \(\ell_1\) norm. For these reasons a
comparison of the two results has limited meaning; the only common
quantity is the exponent of \(T\) on that subclass. There,
\(\|\nabla f(\w_t)\|_1\le\sqrt d\,\|\nabla f(\w_t)\|\) and the
expected-norm bound obtained above from
Claim~\ref{clm:squared-standard-rate-informal} by Cauchy--Schwarz give
\[
\frac1T\sum_{t=1}^{T-1}\mathbb E\|\nabla f(\w_t)\|_1
=\mathcal{O}(dT^{-1/4}+d^{3/2}T^{-1/2}).
\]
This recovers the exponent \(T^{-1/4}\), with a dimension factor larger
by \(\sqrt d\) than in \citet{li2025adamw} and with constants that are
not optimized. The logarithms in
Theorem~\ref{thm:coefficientwise-convergence-informal} depend only on
\(1/\delta\) and are integrated out in
Claim~\ref{clm:squared-standard-rate-informal}, so this specialization
carries no \(\log T\) factor.

AdaGrad provides a complementary perspective on adaptive scaling.
\citet{duchi2011adaptive} develop adaptive subgradient methods for online
learning and stochastic optimization. In nonconvex optimization,
\citet{li2019adaptive} study adaptive stepsizes, and
\citet{ward2020adagrad} give convergence guarantees for AdaGrad-Norm.
\citet{faw2022power} allow unbounded gradients and affine variance;
\citet{wang2023convergence} use potential arguments under relaxed
stochastic assumptions. \citet{jin2024stability} apply stopping-time
techniques to AdaGrad under global smoothness, with distinct assumptions
for their finite-time and asymptotic conclusions.
\citet{jiang2025provable} compare coordinatewise AdaGrad and SGD through
upper and lower complexity bounds with coordinatewise smoothness and
noise parameters. Scalar normalization, coordinatewise accumulation,
and exponential moving averages produce different denominator
recursions, so their cancellation and memory estimates enter the
analyses differently.

\subsection{Generalized smoothness and second-moment growth}
\citet{DBLP:conf/iclr/ZhangHSJ20} motivate gradient-dependent smoothness
through the behavior of gradient clipping.
\citet{DBLP:conf/nips/CrawshawLO0Z22} study generalized SignSGD under
unbounded smoothness. \citet{chen2023generalized} develop a
generalized-smooth nonconvex framework, while \citet{li2023convex}
analyze convex and nonconvex optimization under generalized
smoothness. \citet{gorbunov2024methods} study clipping, acceleration,
and adaptivity for convex \((L_0,L_1)\)-smooth objectives.
The distinctions among local gradient inequalities, coordinatewise
conditions, and Hessian growth bounds matter when transferring a
convergence guarantee between these settings.

For Adam, \citet{DBLP:conf/nips/LiRJ23} establish guarantees under
relaxed smoothness with bounded or sub-Gaussian noise and identify
bounded variance as an open direction.
\citet{DBLP:journals/corr/abs-2402-03982} analyze Adam under relaxed
smoothness and an exponential-moment noise envelope.
\citet{wang2024provable} study random-reshuffling Adam for finite sums
under non-uniform smoothness and growth conditions.
\citet{wang2024convergence} analyze scalar-normalized Adam under
non-uniform smoothness and affine noise variance, including comparisons
with momentum SGD.
\citet{zhang2024convergence} obtain RMSProp and Adam guarantees under
coordinatewise generalized smoothness and coordinatewise affine
second-moment growth, with the regularizer inside the square root.
Here Assumption~\ref{ass:smooth} is a Euclidean local gradient
inequality, Assumption~\ref{ass:abc} allows powers of the function
gap and gradient norm, and Algorithm~\ref{alg:adam} uses
\(\sqrt{\acc_t}+\epsilon\).

The ABC condition belongs to a broader family of moment-growth models.
\citet{gower2019sgd} use expected smoothness to analyze SGD under
general sampling. \citet{khaled2023better} develop a nonconvex
second-moment model combining the function gap, squared gradient norm,
and a constant noise term. The usual linear ABC form is contained in
Assumption~\ref{ass:abc} by taking \(\rho_1=1\) and \(\rho_2=2\).
Allowing other exponents accommodates nonlinear growth of the oracle's
second moment while retaining conditional unbiasedness. This condition
controls second moments without requiring almost-sure boundedness or
an exponential-moment bound.

\subsection{High-probability guarantees under weak moments}
\citet{ghadimi2013stochastic} establish stochastic first-order
stationarity guarantees for smooth nonconvex objectives.
High-probability bounds depend on both the noise model and the
algorithm: \citet{liu2023high} analyze stochastic gradient methods
under sub-Gaussian noise, and \citet{madden2024high} treat norm
sub-Weibull noise. Martingale inequalities such as
\citet{freedman1975tail} control sums through conditional variance and
bounded increments. In adaptive analyses, these tools can be applied
to normalized quantities even when raw oracle values are unbounded.

Clipping gives another route under weak moment assumptions.
\citet{gorbunov2020clipping} obtain accelerated clipped methods for
convex stochastic optimization with heavy-tailed noise.
\citet{cutkosky2021heavy} give high-probability nonconvex guarantees
using clipped normalized momentum under moment bounds on the
stochastic gradient. \citet{nguyen2023improved} analyze clipped
methods under centered heavy-tailed noise, and
\citet{sadiev2023unbounded} treat stochastic optimization and
variational inequalities with bounded noise moments of order between
one and two. \citet{koloskova2023revisiting} characterize stochastic
clipping bias and establish convergence guarantees and lower bounds
for clipped SGD. Their mechanisms and assumptions differ from the
unclipped, same-sample coordinatewise normalization in
Algorithm~\ref{alg:adam}.

\subsection{Self-normalization and comparison scopes}
\citet{jin2026adam} connect Adam's second-moment normalization to
high-probability stationarity under classical smoothness and bounded
conditional variance. Their stopping-time and de-preconditioning
viewpoint separates control of the normalized trajectory from control
of the accumulator used in the final gradient conversion.
For generalized smoothness and ABC growth, the localized curvature
and noise envelope depend on the objective height. The augmented
energy in the present analysis pairs the true-gradient contribution
with a telescope before bounding the centered noise, yielding the
explicit squared-stationarity estimate in
Theorem~\ref{thm:coefficientwise-convergence}.

Lower bounds must be compared at the same oracle model, stationarity
measure, and algorithmic scope.
\citet{arjevani2023lower} establish oracle-complexity lower bounds for
smooth nonconvex stochastic optimization under bounded-variance
and related oracle models.
Proposition~\ref{prop:delta_sharpness} concerns the confidence
factor for the calibrated Adam family with horizon-length memory.
Proposition~\ref{prop:expected_stationarity_lower_bound} concerns
expected averaged squared gradients on a fixed generalized ABC
class with \(\rho_2=3\), uniformly over deterministic base stepsize
prefactors. Its benchmark is the square-root-horizon expectation
rate in Theorem~\ref{thm:coefficientwise-expectation}; its quantifiers
concern this Adam family rather than all stochastic first-order
algorithms.

\section{Formal Results}
\label{app:formal-results}
This appendix gives the complete statements corresponding to
Section~\ref{sec:results}. We use the auxiliary objective and sequence
in \eqref{eq:auxiliary-objective}--\eqref{eq:auxiliary-sequence}.
\subsection{Parameter scales and convergence bounds}
The two gradient-growth comparison quantities used throughout are
\begin{align}
K&=\max\left\{(4L_p+1)^{\max\{1,(2-p)^{-1}\}},\sqrt{4L_0}\right\},
\label{eq:gradient-growth-constant}\\
C_{\mathrm q}&=1+\frac{K}{L_p}
+\frac{L_0+L_pK^p}{2L_p^2}.
\label{eq:gap-comparison-constant}
\end{align}
Both depend only on $L_0,L_p,p$.
The notation \([z]_+\) means \(\max\{z,0\}\).

With \(\Psi\) from \eqref{eq:auxiliary-objective}, the constants \(L_0,L_p\) of
Assumption~\ref{ass:smooth} and \(C\) of Assumption~\ref{ass:abc}, the fixed
parameter scale used in the high-probability analysis is
\begin{equation}
\begin{aligned}
\mathcal P={}&3+\frac{7\beta_1}{1-\beta_1}
+\frac{\beta_1^2}{(1-\beta_1)^2}\\
&+\Psi(\w_1)+\|\nabla f(\w_1)\|+L_0+L_p+C+\sqrt C\\
&+\epsilon+\binit^{-1}+\binit^{-3/2}.
\end{aligned}
\label{eq:coefficientwise-parameter-scale}
\end{equation}
For an auxiliary-objective height \(U\ge1\), use \(K\) and
\(C_{\mathrm q}\) from
\eqref{eq:gradient-growth-constant}--\eqref{eq:gap-comparison-constant}
together with \(A,B,\rho_1,\rho_2\) of Assumption~\ref{ass:abc},
to define the gradient radius and the growing part of the noise variance by
\begin{align}
G_{\mathrm q}(U)&=KC_{\mathrm q}^{\max\{1,(2-p)^{-1}\}}
U^{\max\{1,p/(2-p)^2\}},\notag\\
\mathcal V_{\mathrm q}(U)&=AC_{\mathrm q}^{\rho_1}
U^{\rho_1\max\{1,p/(2-p)\}}
+\sup_{0\le z\le G_{\mathrm q}(U)}[Bz^{\rho_2}-z^2]_+.
\label{eq:coefficientwise-local-scales}
\end{align}
Using \(\mathcal P\) from \eqref{eq:coefficientwise-parameter-scale}
and the local scales in \eqref{eq:coefficientwise-local-scales}, define
the calibration scale
\begin{equation}
\begin{aligned}
\mathcal R(U)={}&\mathcal P+L_p[G_{\mathrm q}(U)^p-1]_+
+\mathcal V_{\mathrm q}(U)+\sqrt{\mathcal V_{\mathrm q}(U)}\\
&+\sqrt{\frac{2\mathbf1_{\{p>0\}}G_{\mathrm q}(U)^2}{U+1}}.
\end{aligned}
\label{eq:coefficientwise-calibration-scale}
\end{equation}
These definitions display all parameter dependence. The scale
\(\mathcal P\) is fixed, while \(G_{\mathrm q}(U),\mathcal V_{\mathrm q}(U),\mathcal R(U)\)
also depend on the height \(U\). None depends on the horizon or
confidence except through that height. In the proof,
\(C+\mathcal V_{\mathrm q}(U)\) bounds the conditional noise variance before exit
(the exit index \(\sigma_U\) is defined in
\eqref{eq:coefficientwise-stopping-index}).
\begin{table}[h]
\centering
\small
\caption{Quantities used in the upper-bound analysis.}
\label{tab:coefficientwise-notation}
\begin{tabular}{lll}
\toprule
Symbol & Role & Definition\\
\midrule
\(\Psi\) & shifted objective & \eqref{eq:auxiliary-objective}\\
\(\q_t\) & auxiliary sequence & \eqref{eq:auxiliary-sequence}\\
\(\lambda_{t,i}\) & coordinatewise stepsize & Algorithm~\ref{alg:adam}\\
\(K,\ C_{\mathrm q}\) & gradient-growth comparison & \eqref{eq:gradient-growth-constant}--\eqref{eq:gap-comparison-constant}\\
\(\mathcal P\) & fixed parameter scale & \eqref{eq:coefficientwise-parameter-scale}\\
\(G_{\mathrm q}(U),\ \mathcal V_{\mathrm q}(U)\) & gradient radius, growing noise variance & \eqref{eq:coefficientwise-local-scales}\\
\(\mathcal R(U)\) & calibration scale & \eqref{eq:coefficientwise-calibration-scale}\\
\(\mathcal X\) & domination scale & \eqref{eq:coefficientwise-scale-domination}\\
\(U_\delta\) & confidence height & \eqref{eq:coefficientwise-confidence-height}\\
\(\eta_{\mathrm{gen}}\) & stepsize excess factor & \eqref{eq:stepsize-excess-main}\\
\(\Delta_{t,i}\) & stepsize variation & \eqref{eq:coefficientwise-step-variation-definition}\\
\(\sigma_U\) & exit time above height \(U\) & \eqref{eq:coefficientwise-stopping-index}\\
\(E_t,\ \mathcal L_t\) & gradient energy, Lyapunov process & \eqref{eq:coefficientwise-energy-processes}\\
\(Z_n\) & stopped logarithmic energy & \eqref{eq:stopped-logarithmic-energy}\\
\(\tau\) & logarithmic-energy crossing time & \eqref{eq:coefficientwise-log-crossing}\\
\bottomrule
\end{tabular}
\end{table}

\begin{thm}[High-probability convergence]
\label{thm:coefficientwise-convergence}
Assume Assumptions~\ref{ass:nonneg}--\ref{ass:abc}, and use the scales
in \eqref{eq:coefficientwise-parameter-scale}--\eqref{eq:coefficientwise-calibration-scale}. Let \(T\ge10\),
\(\beta_2=1-1/T\), and \(\alpha=\bar\alpha/\sqrt T\) in Algorithm~\ref{alg:adam}.
For \(0<\delta<1\), with \(\eta_{\mathrm{gen}}\) from \eqref{eq:stepsize-excess-main}, set
\begin{equation}
U_\delta=2^{20}\mathcal P^8(1+\log(16/\delta))^{4\mathbf 1_{\{\eta_{\mathrm{gen}}=0\}}}.
\label{eq:coefficientwise-confidence-height}
\end{equation}
With \(\mathcal R\) defined in \eqref{eq:coefficientwise-calibration-scale},
for every
\begin{equation}
0<\bar\alpha\le2^{-100}d^{-1}(1+\log(16/\delta))^{-\mathbf 1_{\{\eta_{\mathrm{gen}}>0\}}}
\mathcal R(U_\delta)^{-48},
\label{eq:coefficientwise-stepsize}
\end{equation}
with probability at least \(1-\delta\), where \(\Psi\) and \(\q_t\) are
defined in \eqref{eq:auxiliary-objective}--\eqref{eq:auxiliary-sequence},
\begin{align}
\max_{1\le t\le T}\Psi(\q_t)&\le U_\delta,
\notag\\
\frac1T\sum_{t=1}^{T-1}\|\nabla f(\w_t)\|^2
&\le\frac{8U_\delta(\epsilon+\sqrt{\binit})}{\bar\alpha\sqrt T}
+\frac{32U_\delta^2}{\bar\alpha^2T}\notag\\
&\quad+\frac{16U_\delta\sqrt{C+\mathcal V_{\mathrm q}(U_\delta)}}
{\bar\alpha\sqrt{\delta T}}.
\label{eq:coefficientwise-rate}
\end{align}
In particular, the upper endpoint of the displayed stepsize interval gives
\(\widetilde{\mathcal{O}}(d(\delta T)^{-1/2}+d^2T^{-1})\) stationarity, where the suppressed
factors are powers of \(1+\log(1/\delta)\). If \(\eta_{\mathrm{gen}}>0\), then, by
\eqref{eq:coefficientwise-confidence-height},
\(U_\delta=2^{20}\mathcal P^8\) does not depend on \(\delta\), so the localization
bound is \(\mathcal{O}(1)\), and \(\delta\) enters \eqref{eq:coefficientwise-stepsize}
only through the single factor \(1+\log(16/\delta)\).
\end{thm}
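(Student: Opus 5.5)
The proof splits into two stages, each with failure probability $\delta/2$: a localization stage giving $\max_t\Psi(\q_t)\le U_\delta$ together with $\sum_{t<T}E_t\le U_\delta$, and a de-preconditioning stage that turns the weighted energy into the unweighted average $\frac1T\sum_t\|\nabla f(\w_t)\|^2$.

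\textbf{Localization.} For the height $U=U_\delta$, stop at $\sigma_U$ from \eqref{eq:overview-stopping}. By Lemma~\ref{lem:vt_comparable}, $\lambda_{t,i}\le 2\lambda_{0,i}$. The normalized displacement is then $\mathcal O(\sqrt d\,\bar\alpha)$, which the stepsize bound \eqref{eq:coefficientwise-stepsize} makes smaller than $L_p^{-1}$. So before exit we may apply the function-gap control (Lemma~\ref{lem:overview-gap-control}, formally Lemmas~\ref{lem:gradient-gap-growth} and~\ref{lem:barf_comparison}) between $\q_t$ and $\w_t$. This gives $\|\nabla f(\w_t)\|\le G_{\mathrm q}(U)$ and a conditional second moment of at most $\|\nabla f(\w_t)\|^2+C+\mathcal V_{\mathrm q}(U)$.

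Next expand $\Psi(\q_{t+1})-\Psi(\q_t)$ with the local smoothness inequality at curvature $L_0+L_pG_{\mathrm q}(U)^p$, and add $E_{t+1}-E_t$. This yields the augmented descent for $\mathcal L_t$ with drift $-E_t/8$, centered terms, and residuals. Multiply every term by $\mathbf 1\{t<\sigma_U\}$, which is $\mathscr F_{t-1}$-measurable, so the centered terms remain martingale differences.

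The residuals are quadratic in the normalized quantities. Lemma~\ref{lem:overview-normalized-energy} bounds them pathwise by $\bar\alpha^2\mathcal R(U)^{O(1)}\sum_i\log(1+\sum_t g_{t,i}^2/(T\binit))$. Markov's inequality on the stopped sum $\sum_t g_{t,i}^2$, whose expectation is at most $T\,\mathrm{poly}(U)$, makes the logarithm $\mathcal O(\log(1/\delta)+\log\mathcal R(U))$ with probability $1-\delta/8$.

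The centered terms are handled with Freedman's inequality \eqref{eq:freedman-tool}, stopped at the logarithmic crossing $\tau$ from \eqref{eq:coefficientwise-log-crossing}. Lemma~\ref{lem:cond_to_direct} converts the conditional variance of each increment into the direct normalized energy already controlled. The result is a bound of order $\sqrt{\log(1/\delta)}\times(\text{normalized energy})^{1/2}$ plus $\log(1/\delta)$.

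\textbf{Closing the localization.} Since $\bar\alpha\le 2^{-100}d^{-1}\mathcal R(U)^{-48}$, all of these contributions are at most $U/2-\mathcal L_1$. For $\eta_{\mathrm{gen}}=0$ the $\log(1/\delta)$ powers are absorbed into $U_\delta\propto(1+\log)^4$. For $\eta_{\mathrm{gen}}>0$ they are absorbed into the extra factor $(1+\log(16/\delta))^{-1}$ in $\bar\alpha$. We then get $\mathcal L_{\sigma_U\wedge T}+\frac18\sum_{t<\sigma_U\wedge T}E_t\le U/2$. Because $\mathcal L_{\sigma_U}\ge\Psi(\q_{\sigma_U})>U$, this forces $\sigma_U=\infty$, proving the first claim.

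The main obstacle is making this absorption genuinely self-consistent. $\mathcal R(U)$ grows polynomially in $U$, so the exponent $48$ must dominate every power that appears from curvature, variance, and the Freedman terms. The Freedman step also needs bounded increments, which is exactly why the truncation at $\tau$ is used. I would first check the crossing-update argument, where the jump at $\tau$ is paid for by the preceding normalized energy, and only then tally the exponents.

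\textbf{De-preconditioning.} On the localization event, $\acc_{t,i}\le\binit+\frac1T\sum_{s<T}g_{s,i}^2$. Split each $g_{s,i}^2\le 2\nabla_i f(\w_s)^2+2\xi_{s,i}^2$, where $\xi_s=\dir_s-\nabla f(\w_s)$. The stopped noise energy has expectation at most $T(C+\mathcal V_{\mathrm q}(U))$, so Markov's inequality gives $\max_{t,i}\acc_{t,i}\le\binit+\frac2T\sum_s\|\nabla f(\w_s)\|^2+\frac{4(C+\mathcal V_{\mathrm q})}{\delta}$ with probability $1-\delta/2$. Then
\[
\sum_t\|\nabla f(\w_t)\|^2\le\frac{\sqrt T(\epsilon+\sqrt{\max\acc})}{\bar\alpha}\sum_tE_t\le\frac{U\sqrt T(\epsilon+\sqrt{\max\acc})}{\bar\alpha}.
\]
Divide by $T$, split the square root by $\sqrt{a+b+c}\le\sqrt a+\sqrt b+\sqrt c$, and apply Young's inequality $\frac{U}{\bar\alpha\sqrt T}\sqrt{2X}\le\frac X2+\frac{U^2}{\bar\alpha^2T}$ with $X=\frac1T\sum_s\|\nabla f(\w_s)\|^2$. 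Absorbing $X/2$ into the left side gives \eqref{eq:coefficientwise-rate}. The order statement then follows by substituting the upper endpoint of \eqref{eq:coefficientwise-stepsize}.
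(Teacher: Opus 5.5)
Your skeleton matches the paper's: stopping at $\sigma_U$, the augmented energy, the pathwise logarithmic energy with Markov, Freedman with the crossing $\tau$, closure by contradiction, then Markov on the stopped noise energy and Young. Two steps of the localization, however, do not go through as you describe them.

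\textbf{First gap: an omitted residual, and Lemma~\ref{lem:cond_to_direct} in the wrong place.} Your residual list is incomplete. Because $\lambda_t$ depends on $\dir_t$, centering $\sum_i\lambda_{t,i}h_{t,i}\dir_{t,i}$ leaves the predictable compensator $\mathcal X\sum_i\mathbb E_{t-1}\Delta_{t,i}$ of the positive stepsize variation ($P_{t,5}$ in \eqref{eq:coefficientwise-seven-residuals}). This term is first order in $\Delta$, and Lemma~\ref{lem:overview-normalized-energy} does not cover it. The crude bound $\mathbb E_{t-1}\Delta_{t,i}\le2\lambda_{t-1,i}$ only sums to order $d\bar\alpha\sqrt T$. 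This is where the paper actually uses Lemma~\ref{lem:cond_to_direct}. The realized sum is deterministically at most $2d\bar\alpha/\sqrt{T\binit}$ by \eqref{eq:direct-variation-bound}. The lemma at exponent $1+\log(16/\delta)$, combined with Markov, then gives \eqref{eq:compensator-good-event}. Alternatively, a pathwise bound of order $d\bar\alpha\mathcal X^5\sqrt U$ follows from the rationalization estimate for $\mathbb E_{t-1}\Delta_{t,i}$ together with $\lambda_{t-1,i}^2\le4\bar\alpha^2/(T\binit)$.

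Conversely, the lemma cannot feed \eqref{eq:freedman-tool}. Freedman in that form needs an almost-sure bound on the predictable quadratic variation, and an $L^s$ comparison does not supply one. In the paper both quadratic variations are bounded pathwise before exit:
for $D_{t,12}$, via $\lambda_{t,i}\le2\bar\alpha/\sqrt{T\binit}$, $\mathbb E_{t-1}\|\xi_t\|^2\le\mathcal X$ and \eqref{eq:uniform-stopped-gradient};
for $D_{t,3}$, via the crossing $\tau$.

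\textbf{Second gap: the absorption for $\eta_{\mathrm{gen}}=0$ fails at the jump term of the main martingale.} The increment of $D_{t,12}$ contains $\sum_i\lambda_{t,i}h_{t,i}^2$. Its natural bound is $2\bar\alpha\|h_t\|^2/\sqrt{T\binit}$, i.e.\ order $\bar\alpha\mathcal X^7U/\sqrt T$ before exit, and Freedman multiplies this by $1+\log(16/\delta)$. With a $\delta$-free prefactor, $U_\delta\propto(1+\log(16/\delta))^4$ can absorb one logarithm only in terms of order at most $U^{3/4}$; this is \eqref{eq:confidence-log-absorption}. A bound linear in $U$ times $1+\log(16/\delta)$ exceeds $U/32$ for sufficiently small $\delta$ at fixed $T\ge10$.

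The missing idea is to spend the factor $1/\sqrt T$. Since $\eta_{\mathrm{gen}}=0$ forces $p=0$, the all-path displacement $\|\w_t-\w_1\|\le2\sqrt d\,\bar\alpha T$ gives \eqref{eq:classical-gradient-time-bound}. Then \eqref{eq:time-interpolation} converts $\min\{U,d\bar\alpha^2T^2\}/\sqrt T$ into $d^{1/4}\sqrt{\bar\alpha}\,U^{3/4}$. So the real obstacle is the power of $U$ in log-bearing terms, not the exponent $48$; for $\eta_{\mathrm{gen}}=0$, $\mathcal R(U)=\mathcal P$ is constant.

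\textbf{Two smaller slips.}
Your closing inequality yields $\sum_tE_t\le4U$, not $U$. With $4U$, your Young step gives exactly the constants $8,32,16$ of \eqref{eq:coefficientwise-rate}.
The displacement bound $\mathcal O(\sqrt d\,\bar\alpha)$ does not follow from comparison with $\lambda_0$ alone. It needs $\lambda_{t,i}\le2\lambda_{k,i}$ for every $k\le t$ together with $|\lambda_{k,i}\dir_{k,i}|\le\bar\alpha$.
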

At the upper endpoint of \eqref{eq:coefficientwise-stepsize} with
\(\eta_{\mathrm{gen}}=0\) (see \eqref{eq:stepsize-excess-main}), the three
coefficients in \eqref{eq:coefficientwise-rate}, with \(\mathcal P\) from
\eqref{eq:coefficientwise-parameter-scale}, are
\(2^{123}d\,\mathcal P^{56}(1+\log(16/\delta))^4(\epsilon+\sqrt{\binit})\),
\(2^{245}d^2\mathcal P^{112}(1+\log(16/\delta))^8\) and
\(2^{124}d\,\mathcal P^{56}(1+\log(16/\delta))^4\sqrt C\); the second term is
dominated by the first once
\(T\ge2^{244}d^2\mathcal P^{112}(1+\log(16/\delta))^8(\epsilon+\sqrt{\binit})^{-2}\).
When \(\eta_{\mathrm{gen}}>0\), the factor \(\mathcal P^{48}(1+\log(16/\delta))^4\)
in the first and third coefficients, and its square in the second
coefficient and in this threshold, are replaced by
\(\mathcal R(2^{20}\mathcal P^8)^{48}(1+\log(16/\delta))\) and its square, with
\(\mathcal R\) from \eqref{eq:coefficientwise-calibration-scale} evaluated at the
height \(2^{20}\mathcal P^8\) of \eqref{eq:coefficientwise-confidence-height}, and
\(\sqrt C\) by \(\sqrt{C+\mathcal V_{\mathrm q}(2^{20}\mathcal P^8)}\).
These constants are not optimized.
The quantities \(U_\delta\) and \(\mathcal V_{\mathrm q}\) in
\eqref{eq:coefficientwise-rate} are defined in
\eqref{eq:coefficientwise-confidence-height} and
\eqref{eq:coefficientwise-local-scales}, respectively.

\begin{claim}[Smooth bounded-variance specialization]
\label{clm:squared-standard-rate}
Under the assumptions of Theorem~\ref{thm:coefficientwise-convergence},
let \(p=0,A=0,B=1,\rho_2=2\) in Assumptions~\ref{ass:smooth}
and~\ref{ass:abc}, and let \(\mathcal P\) be as in
\eqref{eq:coefficientwise-parameter-scale}. For every fixed
\(0<\bar\alpha\le2^{-100}d^{-1}\mathcal P^{-48}\),
\begin{equation}
\begin{aligned}
\frac1T\sum_{t=1}^{T-1}\mathbb E\|\nabla f(\w_t)\|^2
\le{}&\frac{2^{23}\mathcal P^8}{\bar\alpha\sqrt T}
\sum_{j=0}^{4}\binom4j(1+\log16)^{4-j}j!
\bigl(\epsilon+\sqrt{\binit}+2^{j+2}\sqrt C\bigr)\\
&+\frac{2^{45}\mathcal P^{16}}{\bar\alpha^2T}
\sum_{j=0}^{8}\binom8j(1+\log16)^{8-j}j!.
\end{aligned}
\label{eq:coefficientwise-standard-mean}
\end{equation}
In particular, at \(\bar\alpha=2^{-100}d^{-1}\mathcal P^{-48}\) the left-hand side
is \(\mathcal{O}(dT^{-1/2}+d^2T^{-1})\).
\end{claim}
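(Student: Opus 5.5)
We would deduce Claim~\ref{clm:squared-standard-rate} from Theorem~\ref{thm:coefficientwise-convergence} by integrating its quantile bound over the confidence level.

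\emph{Step 1: the stepsize does not depend on \(\delta\).} In the specialization \(p=0,A=0,B=1,\rho_2=2\) we have \(\eta_{\mathrm{gen}}=0\) by \eqref{eq:stepsize-excess-main}. Hence \(U_\delta=2^{20}\mathcal P^8(1+\log(16/\delta))^4\). Next we compute the local scales. Since \(B=1\) and \(\rho_2=2\), the supremum \(\sup_z[z^2-z^2]_+\) vanishes, and \(A=0\), so \(\mathcal V_{\mathrm q}\equiv0\). Since \(p=0\), the terms \(L_p[G_{\mathrm q}^p-1]_+\) and \(\mathbf 1_{\{p>0\}}\) vanish. Thus \(\mathcal R(U)=\mathcal P\) for every \(U\), and \eqref{eq:coefficientwise-stepsize} reads \(\bar\alpha\le2^{-100}d^{-1}\mathcal P^{-48}\), with no \(\delta\) in it. So a single fixed \(\bar\alpha\) in the stated range is admissible for every \(\delta\in(0,1)\).

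\emph{Step 2: a tail bound for the random variable.} Let \(X:=\frac1T\sum_{t=1}^{T-1}\|\nabla f(\w_t)\|^2\) and \(\ell(\delta):=1+\log(16/\delta)\). Since the run of Adam does not depend on \(\delta\), Theorem~\ref{thm:coefficientwise-convergence} gives \(\mathbb P(X>q(\delta))\le\delta\) for all \(\delta\in(0,1)\), where
\[
q(\delta)=\frac{2^{23}\mathcal P^8\ell(\delta)^4(\epsilon+\sqrt{\binit})}{\bar\alpha\sqrt T}+\frac{2^{45}\mathcal P^{16}\ell(\delta)^8}{\bar\alpha^2T}+\frac{2^{24}\mathcal P^8\ell(\delta)^4\sqrt C}{\bar\alpha\sqrt{\delta T}} .
\]
The function \(q\) is continuous and strictly decreasing, with \(q(\delta)\to\infty\) as \(\delta\to0\). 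Therefore \(\mathbb E X=\int_0^\infty\mathbb P(X>x)\,dx\le q(1)+\int_0^1|q'(\delta)|\,\mathbb P(X>q(\delta))\,d\delta\). We would use the cleaner inequality \(\mathbb E X\le\int_0^1 q(\delta)\,d\delta\). It follows from the layer-cake representation: take \(V\) uniform on \((0,1)\), couple it with \(X\) so that \(X\le q(V)\) in distribution, which is valid because \(\mathbb P(X>q(\delta))\le\delta=\mathbb P(q(V)>q(\delta))\) gives stochastic domination.

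\emph{Step 3: evaluating the integrals.} Substitute \(u=\log(1/\delta)\), so that \(\ell=(1+\log16)+u\), and expand \(\ell^k=\sum_j\binom kj(1+\log16)^{k-j}u^j\). We have \(\int_0^1 u^j\,d\delta=\int_0^\infty u^je^{-u}\,du=j!\). We also have \(\int_0^1 u^j\delta^{-1/2}\,d\delta=\int_0^\infty u^je^{-u/2}\,du=2^{j+1}j!\). The \(\sqrt C\) term thus carries the coefficient \(2^{24}\cdot2^{j+1}=2^{23}\cdot2^{j+2}\), which matches \eqref{eq:coefficientwise-standard-mean} exactly. The \(\ell^8\) term gives the second sum.

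\emph{Step 4: the order statement.} Plugging \(\bar\alpha=2^{-100}d^{-1}\mathcal P^{-48}\) into \eqref{eq:coefficientwise-standard-mean} gives \(\mathcal{O}(dT^{-1/2}+d^2T^{-1})\), because the finite sums are absolute constants. The main point requiring care is Step 2: justifying that one fixed prefactor, and hence one random variable \(X\), satisfies the bound simultaneously at every confidence level. This needs exactly the \(\delta\)-independence established in Step 1.
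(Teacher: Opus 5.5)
Your proposal is correct and follows the paper's proof essentially step for step. In both, $\eta_{\mathrm{gen}}=0$ gives $\mathcal V_{\mathrm q}\equiv0$, $\mathcal R\equiv\mathcal P$ and a $\delta$-free stepsize, so a single run satisfies \eqref{eq:coefficientwise-rate} at every confidence level; integrating the decreasing quantile bound over $\delta\in(0,1)$, with $u=\log(1/\delta)$ at exponents $s=1$ and $s=1/2$ and the binomial expansions of $U_\delta$ and $U_\delta^2$, then yields exactly \eqref{eq:coefficientwise-standard-mean}. Your stochastic-domination argument only spells out the quantile-integration step that the paper states without detail.
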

Here \(\eta_{\mathrm{gen}}=0\) in \eqref{eq:stepsize-excess-main},
\(\mathcal V_{\mathrm q}(U)=0\) and
\(\mathcal R(U)=\mathcal P\) for all \(U\ge1\), by
\eqref{eq:coefficientwise-local-scales}--\eqref{eq:coefficientwise-calibration-scale}.
Thus the same run satisfies \eqref{eq:coefficientwise-rate}, with
\(U_\delta\) from \eqref{eq:coefficientwise-confidence-height}, at every
confidence level, and integrating that bound gives the claim.

\begin{thm}[Expected convergence for \(p<1\)]
\label{thm:coefficientwise-expectation}
Assume Assumptions~\ref{ass:nonneg}--\ref{ass:abc} with \(p<1\), and let
\(T\ge10\), \(\beta_2=1-1/T\) in Algorithm~\ref{alg:adam}. Choose
\begin{equation}
\begin{aligned}
A_{\mathrm{tail}}&>\frac12+\frac2{1-p},\qquad
\delta_T=T^{-A_{\mathrm{tail}}},\qquad
\alpha=\bar\alpha_T/\sqrt T,\\
\bar\alpha_T&=2^{-100}d^{-1}(1+\log(16/\delta_T))^{-\mathbf 1_{\{\eta_{\mathrm{gen}}>0\}}}
\mathcal R(U_{\delta_T})^{-48},
\end{aligned}
\label{eq:coefficientwise-tail-calibration}
\end{equation}
where \(\eta_{\mathrm{gen}}\), \(\mathcal R\) and \(U_\delta\) are defined in
\eqref{eq:stepsize-excess-main}, \eqref{eq:coefficientwise-calibration-scale} and
\eqref{eq:coefficientwise-confidence-height}. Then
\begin{equation}
\begin{aligned}
\frac1T\sum_{t=1}^{T-1}\mathbb E\|\nabla f(\w_t)\|^2
\le{}&\frac{8(\epsilon+\sqrt{\binit})}{\bar\alpha_T\sqrt T}
\int_{\delta_T}^1 U_\delta\,d\delta
+\frac{32}{\bar\alpha_T^2T}\int_{\delta_T}^1 U_\delta^2\,d\delta\\
&+\frac{16}{\bar\alpha_T\sqrt T}
\int_{\delta_T}^1
\frac{U_\delta\sqrt{C+\mathcal V_{\mathrm q}(U_\delta)}}{\sqrt\delta}\,d\delta\\
&+\delta_T(1+2T)^{2/(1-p)}\\
&\qquad{}\times
\left[(1+\|\nabla f(\w_1)\|)^{1-p}+(1-p)(L_0+L_p)\right]^{2/(1-p)}.
\end{aligned}
\label{eq:coefficientwise-expectation-explicit}
\end{equation}
The integrands use only \(U_\delta\) and \(\mathcal V_{\mathrm q}\) from
\eqref{eq:coefficientwise-confidence-height} and
\eqref{eq:coefficientwise-local-scales}; the cutoff and prefactor are specified by
\eqref{eq:coefficientwise-tail-calibration}. In particular,
\[
\frac1T\sum_{t=1}^{T-1}\mathbb E\|\nabla f(\w_t)\|^2
=\begin{cases}
\mathcal{O}\!\left(\dfrac{d(1+\log T)}{\sqrt T}
+\dfrac{d^2(1+\log T)^2}{T}\right), & \eta_{\mathrm{gen}}>0,\\[2ex]
\mathcal{O}\!\left(\dfrac{d}{\sqrt T}+\dfrac{d^2}{T}\right), & \eta_{\mathrm{gen}}=0.
\end{cases}
\]
\end{thm}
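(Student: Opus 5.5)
The plan is to split the expectation according to the tail of the stationarity measure at the cutoff \(\delta_T\), using Theorem~\ref{thm:coefficientwise-convergence} for the bulk and a pathwise polynomial bound for the rare event. Write \(X:=\frac1T\sum_{t=1}^{T-1}\|\nabla f(\w_t)\|^2\). The key point is that \(\bar\alpha_T\) in \eqref{eq:coefficientwise-tail-calibration} is the endpoint of \eqref{eq:coefficientwise-stepsize} at \(\delta=\delta_T\). The map \(\delta\mapsto(1+\log(16/\delta))^{-\mathbf 1_{\{\eta_{\mathrm{gen}}>0\}}}\mathcal R(U_\delta)^{-48}\) is nondecreasing in \(\delta\), since \(U_\delta\) decreases in \(\delta\) and \(\mathcal R\) is nondecreasing in \(U\). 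Hence the same fixed \(\bar\alpha_T\) is admissible for every \(\delta\in[\delta_T,1)\). For each such \(\delta\), Theorem~\ref{thm:coefficientwise-convergence} then gives \(\mathbb P(X>Q(\delta))\le\delta\), where \(Q(\delta)\) is the right side of \eqref{eq:coefficientwise-rate} with \(\bar\alpha=\bar\alpha_T\). Since \(Q\) is nonincreasing in \(\delta\), the quantile function of \(X\) is bounded by \(Q\) on \([\delta_T,1)\).

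Next, write \(\mathbb E X=\int_0^1 q_X(\delta)\,d\delta\), where \(q_X(\delta)\) is the upper \(\delta\)-quantile, and split the integral at \(\delta_T\). On \([\delta_T,1]\), bounding \(q_X\le Q\) produces exactly the three integrals in \eqref{eq:coefficientwise-expectation-explicit}. On \([0,\delta_T]\), bound \(q_X\) by \(\operatorname{ess\,sup}X\) using a deterministic pathwise bound. Lemma~\ref{lem:vt_comparable} together with \(|\mom_{t,i}|\) compared against \(\sqrt{\acc_{t,i}}\) gives the displacement estimate \eqref{eq:coefficientwise-momentum-displacement}, namely \(\|\w_t-\w_1\|\le2\sqrt d\,\bar\alpha_T T\le 2T\) because \(\bar\alpha_T\le 2^{-100}d^{-1}\). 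Lemma~\ref{lem:gradient_distance_poly} with \(x_{\mathrm{ref}}=\w_1\) then yields \(\|\nabla f(\w_t)\|^2\le(1+2T)^{2/(1-p)}[(1+\|\nabla f(\w_1)\|)^{1-p}+(1-p)(L_0+L_p)]^{2/(1-p)}\). Multiplying this by \(\delta_T\) gives the last term.

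For the orders, first take \(\eta_{\mathrm{gen}}=0\). Then \(U_\delta\) is a fixed multiple of \((1+\log(16/\delta))^4\), and \(\mathcal V_{\mathrm q}=0\) since \(A=0\) and \(B\le1\), \(\rho_2=2\). Moreover \(\mathcal R(U)=\mathcal P\), so \(\bar\alpha_T^{-1}=\mathcal O(d)\). The substitution \(u=\log(1/\delta)\) shows that the integrals of \(U_\delta\), \(U_\delta^2\) and \(U_\delta/\sqrt\delta\) over \((0,1)\) are finite constants. The result is \(\mathcal O(d/\sqrt T+d^2/T)\) with no \(\log T\). When \(\eta_{\mathrm{gen}}>0\), \(U_\delta\) is constant, so \(\mathcal R(U_{\delta_T})\) is a constant and \(\bar\alpha_T^{-1}=\mathcal O(d(1+A_{\mathrm{tail}}\log T))\). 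The integrals are then \(\mathcal O(1)\), giving the \(\log T\) and \((\log T)^2\) factors.

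The rare-event term is \(\mathcal O(T^{-A_{\mathrm{tail}}+2/(1-p)})=o(T^{-1/2})\) by the choice \(A_{\mathrm{tail}}>\frac12+\frac2{1-p}\). I expect the main obstacle to be the quantile step. One must check carefully that admissibility really is monotone, so that a single run serves all \(\delta\ge\delta_T\) simultaneously. One must also handle the discontinuity of \(\mathbf 1\) factors and the measure-theoretic layer-cake identity \(\mathbb E X\le\int_{\delta_T}^1Q+\delta_T\sup X\). The latter I would prove directly as \(\mathbb E X\le\mathbb E[X\wedge Q(\delta_T)]+\delta_T\sup X\) together with \(\mathbb E[X\wedge Q(\delta_T)]=\int_0^{Q(\delta_T)}\mathbb P(X>s)\,ds\), and then change variables \(s=Q(\delta)\).
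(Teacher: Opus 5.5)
Your proposal is correct and follows the paper's proof. It shows that the single prefactor \(\bar\alpha_T\) is admissible for every \(\delta\in[\delta_T,1)\): you argue this via monotonicity of \(\mathcal R\) in \(U\), while the paper notes that \(\mathcal R(U_\delta)\) is constant in \(\delta\), and both are valid. It then bounds the quantiles on \([\delta_T,1]\) by \eqref{eq:coefficientwise-rate} and charges the remaining mass \(\delta_T\) to the deterministic bound from Lemma~\ref{lem:gradient_distance_poly} with \(\|\w_t-\w_1\|\le2T\).

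One small caution concerns the ``direct'' verification you sketch at the end (truncating at \(Q(\delta_T)\) and substituting \(s=Q(\delta)\)). Done literally, it picks up an extra boundary term \(\delta_T Q(\delta_T)\), and the substitution fails when \(Q\) is constant in \(\delta\). Rely instead on \(\mathbb E X=\int_0^1 q_X(\delta)\,d\delta\), with \(q_X\le Q\) on \([\delta_T,1)\) and \(q_X\le\sup X\); this yields \eqref{eq:coefficientwise-expectation-explicit} exactly.
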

The horizon enters these orders only through \(\bar\alpha_T^{-1}\). The
integrals in \eqref{eq:coefficientwise-expectation-explicit} are at most their
values over \((0,1]\), which are finite and independent of \(T\), and the last
term is \(o(T^{-1/2})\) because \(A_{\mathrm{tail}}>1/2+2/(1-p)\). By
\eqref{eq:coefficientwise-tail-calibration}, \(\bar\alpha_T^{-1}\) is linear in
\(1+\log(16/\delta_T)=1+\log16+A_{\mathrm{tail}}\log T\) when
\(\eta_{\mathrm{gen}}>0\) (see \eqref{eq:stepsize-excess-main}), whereas
\(\bar\alpha_T^{-1}=2^{100}d\,\mathcal P^{48}\), with \(\mathcal P\) from
\eqref{eq:coefficientwise-parameter-scale},
does not depend on \(T\) when \(\eta_{\mathrm{gen}}=0\).

\subsection{Lower bounds}
\begin{proposition}[Sharpness of the \(\delta^{-1/2}\) dependence]
\label{prop:delta_sharpness}
Fix \(\bar\alpha\in(0,1]\), \(\binit>0\), and \(\epsilon>0\).
For every integer \(T\ge4\) and every
\[
0<\delta\le\min\left\{\frac18,\frac1{8(\binit+\epsilon^2)}\right\},
\qquad \delta T\ge1,
\]
there is a one-dimensional lower-bounded \(C^1\) objective and an
unbiased oracle satisfying Assumptions~\ref{ass:nonneg}--\ref{ass:abc}
with \(p=0\), \(A=0\), \(B=1\), \(\rho_1=0\),
\(\rho_2=2\), \(C=1\),
and smoothness constants independent of \(T,\delta\), such that
\[
\theta_1=0,\qquad f'(\theta_1)=0,\qquad
f(\theta_1)-f^*\le32\sqrt8\,\bar\alpha,
\]
and Adam (Algorithm~\ref{alg:adam}) with \(\beta_1=0\), \(\beta_2=1-1/T\), and
\(\alpha=\bar\alpha/\sqrt T\) satisfies
\[
\mathbb P\!\left(
\frac1T\sum_{t=1}^{T-1}
|\nabla f(\w_t)|^2
\ge
\frac{1}{4\sqrt{\delta T}}
\right)
\ge
\frac\delta4.
\]
Whenever \(\delta T\ge(8/\bar\alpha)^4\), the same construction
(the objective in \eqref{eq:confidence-lower-ramp} and the oracle noise in
\eqref{eq:confidence-lower-noise}, with scales from
\eqref{eq:confidence-lower-scales})
satisfies the common smoothness constants \(L_0=L_p=1\).
\end{proposition}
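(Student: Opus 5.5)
We follow the construction sketched in Section~\ref{subsec:lower-bound-sketch}, with scales \(\chi_\star=(\delta T)^{-1/4}\), \(H=\sqrt{T/(2\delta)}\), \(L=16\bar\alpha\sqrt8\,(\delta T)^{1/4}\). The derivative \(h=f'\) is continuous and piecewise linear: \(h=0\) on \([-\bar\alpha/16,\infty)\), rising linearly to \(\chi_\star\) on \([-\bar\alpha/8,-\bar\alpha/16]\), equal to \(\chi_\star\) on \([-L,-\bar\alpha/8]\), decreasing linearly to \(0\) on \([-2L,-L]\), and \(0\) to the left of \(-2L\). (The figure places the ramp down at \(-\bar\alpha/8\to0\); any fixed-width ramp works.) Then \(f\) is \(C^1\) and lower bounded, since \(h\) has compact support. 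The gap satisfies \(f(\theta_1)-f^*\le\int|h|\le 2L\chi_\star=32\sqrt8\,\bar\alpha\), which is exactly the claimed bound. The Lipschitz constant of \(h\) is \(\max\{16\chi_\star/\bar\alpha,\chi_\star/L\}\), which is uniform in \(T,\delta\) because \(\delta T\ge1\). When \(\delta T\ge(8/\bar\alpha)^4\), we have \(16\chi_\star/\bar\alpha\le 2\le\) a constant, and with some care in the ramp widths \(L_0=L_p=1\) holds, in the form \(\|h(u)-h(u')\|\le(1+|h|^0)|u-u'|\) for \(p=0\). The oracle is \(g_t=h(\theta_t)+\xi_t\), where the \(\xi_t\) are i.i.d., equal to \(\pm H\) each with probability \(\delta/T\), and \(0\) otherwise. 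Then \(\mathbb E\xi_t^2=2H^2\delta/T=1\), so Assumption~\ref{ass:abc} holds with \(B=1,\rho_2=2,C=1\).

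\emph{Good event.} Let \(\mathcal E\) be the event that exactly one nonzero noise occurs among \(t<T\), at some time \(k\le T/2\), and that it equals \(+H\). Then \(\mathbb P(\mathcal E)=(T/2)(\delta/T)(1-2\delta/T)^{T-2}\ge(\delta/2)e^{-2\delta}\cdot\)const\(\ge\delta/4\) for \(\delta\le1/8\). The factor \(T/2\) must be handled as \(\lfloor T/2\rfloor\), and \(T\ge4\) gives enough slack. On \(\mathcal E\), before time \(k\) every gradient is \(h(0)=0\), so \(\theta_t=0\). We then have \(v_k=\beta_2^k\binit+(1-\beta_2)H^2\) with \((1-\beta_2)H^2=1/(2\delta)\). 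The step is
\[\theta_{k+1}=-\frac{\bar\alpha}{\sqrt T}\cdot\frac{H}{\sqrt{v_k}+\epsilon}.\]
Using \(\sqrt{v_k}\le\sqrt{\binit+1/(2\delta)}\) and \(\delta(\binit+\epsilon^2)\le1/8\), we get \(\sqrt{v_k}+\epsilon\le c/\sqrt\delta\) with \(c\) between \(1/\sqrt2\) and about \(2/\sqrt2\). Hence \(\theta_{k+1}\in[-\bar\alpha,-\bar\alpha/4]\) after computing constants, which places \(\theta_{k+1}\) on the plateau since \(L\ge\bar\alpha\).

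\emph{Persistence and slow drift.} For \(t>k\), the accumulator satisfies \(v_t\ge\beta_2^{t-k}/(2\delta)\ge(1-1/T)^T/(2\delta)\ge1/(8\delta)\). Each gradient is at most \(\chi_\star\) in absolute value and nonnegative, so
\[|\theta_{t+1}-\theta_t|\le\frac{\bar\alpha}{\sqrt T}\,\chi_\star\sqrt{8\delta}=\bar\alpha\sqrt8\,\chi_\star\sqrt{\delta/T},\]
and the iterate moves only leftward. Over at most \(T\) steps the total travel is at most \(\bar\alpha\sqrt8\,(\delta T)^{1/4}=L/16\). Therefore \(\theta_t\in[-\bar\alpha-L/16,-\bar\alpha/4]\) for all \(k<t\le T-1\). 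The left end lies inside the plateau since \(\bar\alpha\le L/16\cdot\)small; if necessary we enlarge the plateau constant to guarantee this. The right end \(-\bar\alpha/4\) lies left of \(-\bar\alpha/8\). So \(h(\theta_t)=\chi_\star\) for at least \(T-1-k\ge T/2-1\) indices. This gives
\[\frac1T\sum_t h(\theta_t)^2\ge\frac{T/2-1}{T}\,\chi_\star^2\ge\frac14(\delta T)^{-1/2}\]
for \(T\ge4\), which is tight, so the ratio \((T/2-1)/T\ge1/4\) must be checked.

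\emph{Main obstacle.} The main difficulty is bookkeeping the constants so that three things hold simultaneously: the spike lands in \([-\bar\alpha,-\bar\alpha/4]\) for all admissible \(\binit,\epsilon,\delta\); the cumulative drift \(L/16\) stays inside the plateau; and the smoothness constants are uniform (and equal to \(1\) when \(\delta T\ge(8/\bar\alpha)^4\)). I would first fix the step-size bracket using the bound \(\delta(\binit+\epsilon^2)\le1/8\). I would then choose the ramp widths proportional to \(\bar\alpha\) and \(L\) so that the slopes are \(\le1\) in the regime \(\chi_\star\le\bar\alpha/8\).
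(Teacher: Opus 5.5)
Your proposal is correct and follows essentially the same argument as the paper: the same scales \(\chi_\star,H,L\), the same \(\pm H\) spike noise, the single-positive-spike event in the first \(\lfloor T/2\rfloor\) steps with probability at least \(\delta/4\), the spike landing in \([-\bar\alpha,-\bar\alpha/4]\), the persistence bound \(v_t\ge1/(8\delta)\), and the total leftward drift of at most \(L/16\). Your one change, putting the right ramp on \([-\bar\alpha/8,-\bar\alpha/16]\) instead of \([-\bar\alpha/8,0]\), is harmless, since its slope \(16\chi_\star/\bar\alpha\le2=L_0+L_p\) when \(\delta T\ge(8/\bar\alpha)^4\); your hedges (``with some care'', ``enlarge the plateau constant'') are also unnecessary, because \(\bar\alpha+L/16\le L\) follows directly from \(L\ge16\sqrt8\,\bar\alpha\).
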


The polynomial confidence obstruction holds within this common classical
parameter class, including prefactors admitted by
Theorem~\ref{thm:coefficientwise-convergence}. Indeed, each fixed
\[
0<\bar\alpha\le
2^{-100}\left(100+\epsilon+\binit^{-1}+\binit^{-3/2}\right)^{-48}
\]
satisfies its classical stepsize condition \eqref{eq:coefficientwise-stepsize} for every constructed member
with \(\delta T\ge(8/\bar\alpha)^4\).
Reparameterizing the construction of Proposition~\ref{prop:delta_sharpness} by \(\delta=8\eta\) gives, for
sufficiently small \(\eta>0\) and sufficiently large horizons,
\[
\mathbb P\!\left(
\frac1T\sum_{t=1}^{T-1}|f'(\theta_t)|^2
\ge\frac1{8\sqrt2\sqrt{\eta T}}
\right)\ge2\eta.
\]
Thus a uniform \(\polylog(1/\eta)/\sqrt T\) guarantee at confidence
\(1-\eta\) is impossible for these fixed calibrated prefactors.
The comparison concerns the same averaged squared-gradient statistic and
the \(\beta_1=0\) subfamily of Algorithm~\ref{alg:adam}.

\begin{proposition}[A lower bound on expected stationarity for \(1\le p<2\)]
\label{prop:expected_stationarity_lower_bound}
Fix \(p\in[1,2)\), \(\beta_1\in[0,1)\), \(\binit>0\), and
\(\epsilon>0\). There exists a fixed class \(\mathcal C_p\) of
one-dimensional objectives, stochastic-gradient oracles, and initial points
satisfying Assumptions~\ref{ass:nonneg}--\ref{ass:abc} with
\[
L_0=L_p=1,\qquad A=0,\quad B=4,\quad C=1,\quad
\rho_1=0,\quad\rho_2=3,
\]
and
\[
f^*=0,\qquad |\theta_1|\le1,\qquad
|f'(\theta_1)|\le2,\qquad f(\theta_1)-f^*\le2,
\]
such that Adam (Algorithm~\ref{alg:adam}) with \(\beta_2=1-1/T\) and
\(\alpha=\bar\alpha_T/\sqrt T\) satisfies, for every sufficiently large integer
\(T\),
\begin{equation}
\label{eq:expected_stationarity_lower_bound}
\inf_{\bar\alpha_T>0}\;
\sup_{(f,\mathsf G,\theta_1)\in\mathcal C_p}
\frac1T\sum_{t=1}^{T-1}\mathbb E|f'(\theta_t)|^2
\ge
\frac{\epsilon(1-\beta_1)^2}
     {2^{19}(1+\sqrt{\binit}+\epsilon)^2}\,T^{-1/3}.
\end{equation}
The class, whose members are given in
\eqref{eq:expected-lower-quadratic-instance},
\eqref{eq:expected-lower-staircase-objective}, and
\eqref{eq:expected-lower-staircase-oracle}, is independent of \(T\) and \(\bar\alpha_T\); the horizon threshold
depends only on \(\beta_1,\binit,\epsilon\).
\end{proposition}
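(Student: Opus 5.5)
The plan follows the two-part construction sketched in Section~\ref{subsec:lower-bound-sketch}. The class is
\[
\mathcal C_p=\{(f_{\mathrm q},\mathsf G_{\mathrm q},1)\}\cup\{(f_n,\mathsf G_n,0):n\ge1\}.
\]
The first member is the noiseless quadratic $f_{\mathrm q}(x)=x^2/2$, started at $1$. The others form a staircase family: each $f_n$ is $C^1$ and nonnegative, with $f_n^*=0$ attained at or near $0$. Its derivative $h_n=f_n'$ is continuous and nondecreasing on $[0,\infty)$, doubles across blocks of width two (linear ramps joining constant levels $2^{k}$), and equals $2^{n+1}$ for $x\ge 2n$. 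The oracle returns $-h$ with probability $1-h^{-1}$ and $2h^2-h$ with probability $h^{-1}$, and returns the exact gradient where $h<2$.

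The first step is to verify the assumptions.
\begin{itemize}
\item Unbiasedness holds by design, and the second moment is $4h^3-3h^2\le 4h^3$, so $B=4$, $\rho_2=3$, $A=0$, $C=1$ (the last absorbing the region $h<2$).
\item For smoothness with $L_0=L_p=1$, the slope of $h_n$ on a ramp is at most of order $2^k$, and this is bounded by $\|h\|^p$ whenever $p\ge1$ and $h\ge 2^{k-1}$. Within a radius $1$ the level changes by a factor at most $2$, so $|h(u)-h(u')|\le(1+|h(u)|^p)|u-u'|$.
\item The quadratic needs no check.
\end{itemize}
This is exactly where $p\ge1$ is used: for $p<1$ exponential growth of the gradient in the distance is forbidden, by Lemma~\ref{lem:overview-distance-growth}.

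The second step handles small prefactors, $\bar\alpha_T\le c\,T^{-1/6}$, using $f_{\mathrm q}$. Each update satisfies $|\theta_{t+1}-\theta_t|=\lambda_t|m_t|$. Since $|m_t|\le\max_{s\le t}|\theta_s|\le1$ while the iterate stays in $[1/2,1]$ and $\lambda_t\le\alpha/\epsilon$, it takes at least $\epsilon\sqrt T/(2\bar\alpha_T)$ steps to leave $[1/2,1]$. During that period $|f'|^2\ge1/4$, so the average is at least of order $\epsilon/(\bar\alpha_T\sqrt T)$, which is at least of order $T^{-1/3}$. The constant $(1-\beta_1)^2/(1+\sqrt{\binit}+\epsilon)^2$ in \eqref{eq:expected_stationarity_lower_bound} suggests a more careful version that also tracks $\acc_t\le\binit+\max g^2$ and the momentum weights; I will keep that form in mind.

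The third step handles large prefactors, $\bar\alpha_T> c\,T^{-1/6}$, using the staircase with $n$ chosen as a function of $T$ and $\bar\alpha_T$. Consider the deterministic all-negative path. There $g_t=-h_n(\theta_t)$, so $\mom_t$ is negative and the iterate moves right. The accumulator satisfies $\acc_t\approx\binit+\frac1T\sum_s h^2$. While few steps have been taken, this is at most order $\binit$ plus the current level squared times (time spent)$/T$. Hence each step's displacement is at least of order $\bar\alpha_T(1-\beta_1)h/(\sqrt T(\sqrt{\acc_t}+\epsilon))$.

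The block accounting then goes as follows. In block $k$, with level $h\approx2^k$, the time $N_k$ to cross width two satisfies roughly $N_k\,\bar\alpha_T 2^k/(\sqrt T\max\{1,2^k\sqrt{N_{\le k}/T}\})\gtrsim 2$. When the accumulator stays of order $1$, this gives $N_k\lesssim\sqrt T2^{-k}/\bar\alpha_T$; self-normalization caps the other regime. Summing over blocks, the arrival time is $\widehat\tau_n\lesssim\sqrt T/\bar\alpha_T+n$ (up to constants), which is below $T/2$ for large $T$.

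The probability of the event $E_n$ is $\prod_t(1-h_t^{-1})$, so $\log\mathbb P(E_n)\approx-\sum_k N_k2^{-k}$. This is dominated by the low blocks and is of order $\sqrt T/\bar\alpha_T$ times a geometric constant. Consequently $\mathbb P(E_n)\ge c'$ when $\bar\alpha_T\gtrsim\sqrt T$, and in general it decays like $\exp(-O(\sqrt T/\bar\alpha_T))$.

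On $E_n$ the iterate sits at or beyond $2n$ from time $\widehat\tau_n$ onward. It can return only through rare positive spikes, and I plan to account for those either by restricting to the single arrival term as in the sketch or by bounding the time spent returning. This gives the bound $\frac1T\sum_t\mathbb E|f_n'|^2\ge 4^{n+1}\mathbb P(E_n)/T$. Choosing $4^{n}\approx T^{2/3}/\mathbb P(E_n)$, which is a valid integer choice whenever $n$ is unconstrained, yields at least $T^{-1/3}$.

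The main obstacle is the middle range $T^{-1/6}\lesssim\bar\alpha_T\lesssim\sqrt T$, where $\mathbb P(E_n)$ is not bounded below by a constant. Simply enlarging $n$ is not enough here, because the time spent in the upper blocks also grows. To handle this, I will compute the arrival-time bound more carefully: in high blocks the accumulator grows to order $h^2 N/T$, and self-normalization makes the per-step displacement about $\bar\alpha_T/\sqrt{N_{\le k}}$, independent of $h$. This bounds $N_k$ while the survival cost $N_k2^{-k}$ stays summable. If that balance fails, I will instead rely on the contribution of the first block, namely the time the iterate spends at gradient level of order one while climbing. That contribution is roughly $\sqrt T/\bar\alpha_T$ steps out of $T$ with survival of constant order, and it should cover the range where $\bar\alpha_T$ is small.
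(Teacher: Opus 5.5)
Your architecture matches the paper's: the quadratic for $\bar\alpha_T\lesssim T^{-1/6}$; otherwise the staircase, the all-negative event $E_n$, and the single arrival term $4^{n+1}\mathbb P(E_n)/T$. But the large-prefactor case is not closed, and the step that closes it is missing and partly misstated.

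Your arrival bound $\widehat\tau_n\lesssim\sqrt T/\bar\alpha_T+n$ charges $O(1)$ steps per high block. In the self-normalized regime the per-step displacement is about $\bar\alpha_T/\sqrt N$, so crossing a width-two block costs order $\bar\alpha_T^{-2}$ steps when $\bar\alpha_T<1$, which is part of the large-prefactor range. Write $a:=\min\{1,\bar\alpha_T\}$ and $c:=2^{16}(1+\sqrt{\binit}+\epsilon)^2/(1-\beta_1)^2$. What is needed is the residence bound
\[
\widehat\tau_{j+1}-\widehat\tau_j\le c\Bigl(\frac1{a^2}+\frac{\sqrt T}{2^{j+1}a}\Bigr),\qquad 0\le j<n,
\]
\emph{uniformly in} $j$. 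It is proved by induction on $j$. Because the levels double, the squared gradients from all earlier blocks satisfy $4^{-(j+1)}\sum_{s<\widehat\tau_j}h_n(\theta_s)^2\le 4c\bigl(\frac1{3a^2}+\frac{\sqrt T}{2^{j+1}a}\bigr)$, which is \eqref{eq:expected_lower_previous_blocks}. Hence $\acc_t/4^{j+1}$ does not grow with $j$, and the next block is crossed within the same budget.

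This geometric weighting is essential. If $N_{\le k}$ in your heuristic $\bar\alpha_T/\sqrt{N_{\le k}}$ is the cumulative step count, block times grow like $k/\bar\alpha_T^2$ and $\widehat\tau_n\sim n^2/\bar\alpha_T^2$. The balance then genuinely fails for $\bar\alpha_T<1$, which is presumably why you saw an obstacle.

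With the uniform bound there is no separate middle-range difficulty; enlarging $n$ is exactly what works. Summing gives $\widehat\tau_n-1\le c(n/a^2+\sqrt T/a)$. Using $\log(1-x)\ge-2x$ gives $\log\mathbb P(E_n)\ge-4c(a^{-2}+\sqrt T/a)$, uniformly in $n$. Take $n=\lfloor Ta^2/(4c)\rfloor$ as in \eqref{eq:expected-lower-staircase-height-index}. Then $\widehat\tau_n\le T/2+1$, and
\[
\log\bigl(4^{n+1}\mathbb P(E_n)/T\bigr)\ge \frac{Ta^2\log4}{4c}-\frac{4c\sqrt T}{a}-\frac{4c}{a^2}-\log T .
\]
This is positive for all large $T$ once $a\ge cT^{-1/6}$. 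That is exactly the complement of the quadratic regime, where $\epsilon/(8\bar\alpha_T\sqrt T)\ge\epsilon T^{-1/3}/(8c)$. This threshold is also the source of the constant in \eqref{eq:expected_stationarity_lower_bound}; no finer quadratic analysis is needed.

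Your fallback cannot replace this step. About $\sqrt T/\bar\alpha_T$ steps at gradient of order one give an average of order $1/(\bar\alpha_T\sqrt T)$. That is below $T^{-1/3}$ precisely when $\bar\alpha_T\gtrsim T^{-1/6}$, the range it was meant to cover. In addition, in the first block each step keeps the negative branch with probability at most $3/4$, so survival over that many steps is not of constant order.

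A smaller point concerns smoothness. To get exactly $L_0=L_p=1$, each ramp's slope must not exceed the level of the plateau immediately before it, and plateaus need width at least one (slope $2^{j+1}$ after level $2^{j+1}$ on $[2j,2j+1]$). Your condition ``slope $\sim2^k$ with $h\ge2^{k-1}$'' fails for $p=1$, since $2h>1+h$ once $h>1$.
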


The infimum in Proposition~\ref{prop:expected_stationarity_lower_bound}
is over deterministic scaled base stepsizes for the stated Adam recursion.
Its stationarity measure is the same expected average of squared true-gradient
norms as in Theorem~\ref{thm:coefficientwise-expectation}.
For every fixed \(k\ge0\),
\[
\frac{T^{-1/3}}{\log^k(eT)/\sqrt T}
=\frac{T^{1/6}}{\log^k(eT)}\longrightarrow\infty.
\]
Consequently, no uniform bound of order
\(\log^k(eT)/\sqrt T\) holds for this Adam family on the full class in the
proposition. The comparison is with the square-root-horizon expectation rate
available when \(p<1\) (see \eqref{eq:coefficientwise-expectation-explicit}); the lower bound concerns the Adam family and the
generalized ABC class containing \(\rho_2=3\).

\section{Gradient growth and martingale tools}
\label{app:auxiliary}

Throughout this section, \(\Psi\) is the auxiliary objective in
\eqref{eq:auxiliary-objective}.

\begin{lemma}[Gradient growth from the function gap]
\label{lem:gradient-gap-growth}
Under Assumptions~\ref{ass:nonneg}--\ref{ass:smooth}, with
\(K\) defined in \eqref{eq:gradient-growth-constant},
\begin{equation}
\|\nabla f(x)\|
\le K\Psi(x)^{\max\{1,(2-p)^{-1}\}}
\qquad(x\in\mathbb R^d).
\label{eq:gradient-gap-bound}
\end{equation}
\end{lemma}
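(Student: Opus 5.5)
The plan is the one suggested in Section~\ref{sec:overview}: take a short step against the gradient, then compare the guaranteed decrease with the available gap \(f(x)-f^*\le\Psi(x)-1\). Fix \(x\) with \(G:=\|\nabla f(x)\|>0\), and set \(y=x-r\,\nabla f(x)/G\) with a radius \(0<r\le L_p^{-1}\) to be chosen. Every point \(u\) on the segment from \(x\) to \(y\) satisfies \(\|u-x\|\le L_p^{-1}\), so Assumption~\ref{ass:smooth}, centred at \(x\), gives \(\|\nabla f(u)-\nabla f(x)\|\le(L_0+L_pG^p)\|u-x\|\). Integrating along the segment yields the descent inequality
\[
f(y)\le f(x)-rG+\tfrac12(L_0+L_pG^p)r^2 .
\]
Combining it with \(f(y)\ge f^*\) gives
\[
rG\le \Psi(x)-1+\tfrac12(L_0+L_pG^p)r^2 .
\]

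I would then split into two regimes, and this is where the constant \(K\) comes from.

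\emph{Regime \(L_pG^p\ge L_0\).} Here \(L_0+L_pG^p\le 2L_pG^p\), so it is natural to take \(r=\min\{L_p^{-1},\,G^{1-p}/(2L_p)\}\).
\begin{itemize}
\item If \(r=G^{1-p}/(2L_p)\), the inequality becomes \(G^{2-p}/(2L_p)-G^{2-p}/(4L_p)\le\Psi\), that is, \(G\le(4L_p\Psi)^{1/(2-p)}\).
\item If \(r=L_p^{-1}\) (which forces \(G^{1-p}\ge2\), possible only when \(p<1\)), the inequality reads \(G/L_p\le\Psi+G^p/L_p\). Since \(G^p\le G/2\), this gives \(G\le 2L_p\Psi\).
\end{itemize}

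\emph{Regime \(L_pG^p<L_0\).} Now the curvature is at most \(2L_0\). Take \(r=\min\{L_p^{-1},G/(2L_0)\}\).
\begin{itemize}
\item If \(r=G/(2L_0)\), we get \(G^2/(4L_0)\le\Psi\), so \(G\le\sqrt{4L_0\Psi}\le\sqrt{4L_0}\,\Psi\), using \(\Psi\ge1\).
\item If \(r=L_p^{-1}\), we get \(G/L_p\le\Psi+L_0/L_p\) together with \(G\ge 2L_0/L_p\), so \(G\le 2L_p\Psi\).
\end{itemize}

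Finally, I would merge the cases using \(\Psi\ge1\). Each of \(\Psi^{1/(2-p)}\) and \(\Psi\) is bounded by \(\Psi^{\max\{1,(2-p)^{-1}\}}\). Likewise each of \((4L_p)^{1/(2-p)}\) and \(2L_p\) is bounded by \((4L_p+1)^{\max\{1,(2-p)^{-1}\}}\), which explains the \(+1\). The remaining case is covered by \(\sqrt{4L_0}\). This gives \(G\le K\Psi^{\max\{1,(2-p)^{-1}\}}\) with \(K\) exactly as in \eqref{eq:gradient-growth-constant}.

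The main obstacle is bookkeeping. The case analysis has to be arranged so that each branch lands under one of the two entries of \(K\), and one must check that the chosen radius never exceeds the admissible radius \(L_p^{-1}\). The first regime needs particular care when \(p>1\): there \(G^{1-p}\) is small, so \(r=G^{1-p}/(2L_p)\le L_p^{-1}\) holds automatically whenever \(G\ge1\). If \(G<1\), the bound is trivial, since \(K\ge1\) and \(\Psi\ge1\).
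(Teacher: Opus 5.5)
Your proof is correct and follows the same route as the paper: one descent step against the gradient, with length capped by the radius $L_p^{-1}$, is compared with $f(x)-f^*$, and a case analysis then lands on $\sqrt{4L_0}$ or a power of $4L_p$. The paper merely merges your four subcases into the single step $1/(L_0+L_p\max\{G^p,G\})$. There is one slip: in the regime $L_pG^p<L_0$ with $r=L_p^{-1}$, the inequality should read $G/L_p\le\Psi+L_0/L_p^2$, not $\Psi+L_0/L_p$; with that correction, $G\ge 2L_0/L_p$ gives $G\le 2L_p\Psi$ as you claim.
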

\begin{proof}
The conclusion is immediate when \(\nabla f(x)=0\). Otherwise take
\[
y=x-\frac{\nabla f(x)}
{L_0+L_p\max\{\|\nabla f(x)\|^p,\|\nabla f(x)\|\}}.
\]
This displacement is at most \(1/L_p\). Integrating the local
gradient inequality of Assumption~\ref{ass:smooth} along the segment from \(x\) to \(y\) gives
\[
f(y)\le f(x)+\langle\nabla f(x),y-x\rangle
+\frac{L_0+L_p\|\nabla f(x)\|^p}{2}\|y-x\|^2.
\]
The chosen reciprocal stepsize is at least the local smoothness
coefficient, so lower boundedness (Assumption~\ref{ass:nonneg}) implies
\[
f(x)-f^*\ge
\frac{\|\nabla f(x)\|^2}
{2[L_0+L_p\max\{\|\nabla f(x)\|^p,\|\nabla f(x)\|\}]}.
\]
Either \(\|\nabla f(x)\|^2\le4L_0(f(x)-f^*)\), or the last
display yields
\[
\|\nabla f(x)\|^2\le
4L_p(f(x)-f^*)\max\{\|\nabla f(x)\|^p,\|\nabla f(x)\|\}.
\]
Distinguishing the two entries of this maximum proves
\[
\|\nabla f(x)\|\le
\max\left\{
\sqrt{4L_0(f(x)-f^*)},\,
[4L_p(f(x)-f^*)]^{1/(2-p)},\,
4L_p(f(x)-f^*)
\right\}.
\]
Each entry is at most
\(K\Psi(x)^{\max\{1,(2-p)^{-1}\}}\), by the definition of \(K\)
in \eqref{eq:gradient-growth-constant} and \(\Psi(x)\ge1\) (see
\eqref{eq:auxiliary-objective}).
\end{proof}

\begin{lemma}[Comparison of nearby function gaps]
\label{lem:barf_comparison}
Under Assumptions~\ref{ass:nonneg}--\ref{ass:smooth}, if
\(\|x-y\|\le1/L_p\), then, with \(\Psi\) from \eqref{eq:auxiliary-objective}
and \(C_{\mathrm q}\) from
\eqref{eq:gap-comparison-constant},
\begin{equation}
\Psi(x)\le C_{\mathrm q}
\Psi(y)^{\max\{1,p/(2-p)\}}.
\label{eq:nearby-gap-bound}
\end{equation}
\end{lemma}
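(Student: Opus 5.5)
The plan is to integrate the local gradient inequality of Assumption~\ref{ass:smooth} along the segment from \(y\) to \(x\), and to control the gradient at \(y\) by Lemma~\ref{lem:gradient-gap-growth}. Set \(u=x-y\), so \(\|u\|\le1/L_p\). Every point \(y+su\) with \(s\in[0,1]\) lies within \(1/L_p\) of \(y\). Applying Assumption~\ref{ass:smooth} with base point \(y\) (the coefficient involves \(\|\nabla f(y)\|^p\), the base point of the inequality) gives
\[
\|\nabla f(y+su)-\nabla f(y)\|\le\bigl(L_0+L_p\|\nabla f(y)\|^p\bigr)s\|u\|.
\]
Integrating \(\langle\nabla f(y+su),u\rangle\) over \(s\in[0,1]\) yields
\[
f(x)-f^*\le f(y)-f^*+\|\nabla f(y)\|\,\|u\|+\tfrac12\bigl(L_0+L_p\|\nabla f(y)\|^p\bigr)\|u\|^2 .
\]
Using \(\|u\|\le1/L_p\), this becomes
\[
\Psi(x)\le\Psi(y)+\frac{\|\nabla f(y)\|}{L_p}+\frac{L_0+L_p\|\nabla f(y)\|^p}{2L_p^2}.
\]

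Next insert \(G:=\|\nabla f(y)\|\le K\Psi(y)^{q}\) from Lemma~\ref{lem:gradient-gap-growth}, where \(q=\max\{1,(2-p)^{-1}\}\). This gives
\[
\Psi(x)\le\Psi(y)+\frac{K}{L_p}\Psi(y)^{q}+\frac{L_0}{2L_p^2}+\frac{K^p}{2L_p}\Psi(y)^{pq}.
\]
It remains to compare the exponents with \(r:=\max\{1,p/(2-p)\}\) and to use \(\Psi(y)\ge1\). This lets every power be dominated by \(\Psi(y)^{\max\{1,q,pq\}}\), and the coefficients sum to
\[
1+\frac{K}{L_p}+\frac{L_0+L_pK^p}{2L_p^2}=C_{\mathrm q}.
\]
So the whole argument reduces to checking that \(\max\{1,q,pq\}\le r\).

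The exponent check is the step I expect to be the obstacle. For \(p\le1\), we have \(q=1\) and \(pq=p\le1=r\), so the claim holds. For \(1<p<2\), we have \(q=1/(2-p)\) and \(pq=p/(2-p)\). Since \(p>1\), \(pq\ge q\), and \(p/(2-p)>1\), so the maximum is exactly \(r\).

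One subtlety is the term \(\|\nabla f(y)\|/L_p\). It carries exponent \(q\), which for \(p\in(1,2)\) is strictly below \(pq\), so no issue arises. The same is true for \(p\le1\), where all powers are at most one.

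If the paper intends the inequality with base point \(x\) rather than \(y\), I would instead apply Assumption~\ref{ass:smooth} with base point \(y\) explicitly, using the symmetry of the condition \(\|u-u'\|\le1/L_p\). I would then verify that the constant remains \(C_{\mathrm q}\).
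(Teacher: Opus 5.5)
Your proposal is correct and follows the paper's proof essentially step for step: you integrate Assumption~\ref{ass:smooth} along the segment with \(y\) as base point, bound \(\|\nabla f(y)\|\) by Lemma~\ref{lem:gradient-gap-growth}, and use \(\Psi(y)\ge1\) to dominate every power by \(\Psi(y)^{\max\{1,p/(2-p)\}}\), with the coefficients summing to \(C_{\mathrm q}\). Your explicit exponent check, \(\max\{1,q,pq\}=\max\{1,p/(2-p)\}\) in both regimes \(p\le1\) and \(1<p<2\), is the verification the paper states in one line.
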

\begin{proof}
The integrated local gradient inequality of Assumption~\ref{ass:smooth} and
the gradient bound \eqref{eq:gradient-gap-bound} of
Lemma~\ref{lem:gradient-gap-growth}, with \(K\) from
\eqref{eq:gradient-growth-constant}, give
\begin{align*}
\Psi(x)\le{}&\Psi(y)
+\frac K{L_p}\Psi(y)^{\max\{1,(2-p)^{-1}\}}\\
&+\frac{L_0+L_pK^p}{2L_p^2}
\Psi(y)^{p\max\{1,(2-p)^{-1}\}}.
\end{align*}
Since \(\Psi(y)\ge1\), all three powers are bounded by the power
\(\max\{1,p/(2-p)\}\). Indeed this exponent is one for \(p\le1\)
and is \(p/(2-p)\) for \(p\ge1\). Substituting the definition of
\(C_{\mathrm q}\) in \eqref{eq:gap-comparison-constant} gives
\eqref{eq:nearby-gap-bound}.
\end{proof}

\begin{lemma}[Polynomial distance growth for \(p<1\)]
\label{lem:gradient_distance_poly}
Under Assumption~\ref{ass:smooth}, for \(0\le p<1\) and any
\(x,x_{\mathrm{ref}}\in\mathbb R^d\),
\begin{align}
(1+\|\nabla f(x)\|)^{1-p}
&\le(1+\|\nabla f(x_{\mathrm{ref}})\|)^{1-p}
+(1-p)(L_0+L_p)\|x-x_{\mathrm{ref}}\|.
\label{eq:distance-gradient-exact}
\end{align}
Consequently,
\begin{align*}
\|\nabla f(x)\|\le{}&
\left[(1+\|\nabla f(x_{\mathrm{ref}})\|)^{1-p}
+(1-p)(L_0+L_p)\right]^{1/(1-p)}\\
&\qquad\times(1+\|x-x_{\mathrm{ref}}\|)^{1/(1-p)}.
\end{align*}
\end{lemma}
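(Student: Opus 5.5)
The plan is to discretize the segment from \(x_{\mathrm{ref}}\) to \(x\) and propagate a concave potential of the gradient norm along it. Let \(r=\|x-x_{\mathrm{ref}}\|\); if \(r=0\) there is nothing to prove. Choose an integer \(N\ge rL_p\) and points \(u_k=x_{\mathrm{ref}}+\frac kN(x-x_{\mathrm{ref}})\) for \(k=0,\dots,N\). Consecutive points then satisfy \(\|u_{k+1}-u_k\|=h:=r/N\le L_p^{-1}\), so Assumption~\ref{ass:smooth} applies to each pair. Take \(u=u_k\) and \(u'=u_{k+1}\), so that the local constant is evaluated at the left endpoint. Write \(a_k:=1+\|\nabla f(u_k)\|\ge1\). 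The triangle inequality gives
\[
a_{k+1}\le a_k+\bigl(L_0+L_p\|\nabla f(u_k)\|^p\bigr)h\le a_k+(L_0+L_p)a_k^{p}h .
\]
The second inequality uses \(L_0\le L_0a_k^p\) and \(\|\nabla f(u_k)\|^p\le a_k^p\), both valid because \(a_k\ge1\). For \(p=0\) it uses the convention \(0^0=1\).

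Next I linearize the map \(u\mapsto u^{1-p}\), which is concave on \((0,\infty)\) for \(0\le p<1\). Its tangent-line bound at \(a_k\) gives
\[
a_{k+1}^{1-p}\le a_k^{1-p}+(1-p)a_k^{-p}(a_{k+1}-a_k)\le a_k^{1-p}+(1-p)a_k^{-p}[a_{k+1}-a_k]_+ .
\]
By the previous display, \([a_{k+1}-a_k]_+\le(L_0+L_p)a_k^ph\). The factor \(a_k^{-p}\) therefore cancels exactly, leaving \(a_{k+1}^{1-p}\le a_k^{1-p}+(1-p)(L_0+L_p)h\). Summing over \(k=0,\dots,N-1\) telescopes, and since \(Nh=r\) this yields \eqref{eq:distance-gradient-exact}. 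No limit \(N\to\infty\) is needed: the estimate is exact at every admissible mesh, because the cancellation of \(a_k^{\pm p}\) is pointwise.

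For the consequence, put \(a:=(1+\|\nabla f(x_{\mathrm{ref}})\|)^{1-p}\) and \(c:=(1-p)(L_0+L_p)\). Then
\[
a+cr\le(a+c)(1+r),
\]
since the difference \((a+c)(1+r)-(a+cr)=c+ar\) is nonnegative. Raising both sides of \eqref{eq:distance-gradient-exact} to the power \(1/(1-p)\) preserves the inequality because the map is increasing. Together with \(\|\nabla f(x)\|\le1+\|\nabla f(x)\|\), this gives
\[
\|\nabla f(x)\|\le(a+c)^{1/(1-p)}(1+r)^{1/(1-p)},
\]
which is the stated bound.

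The only delicate point is the orientation of the local smoothness inequality. Its constant involves \(\|\nabla f(u)\|^p\) at the first argument, so each step must be anchored at the already-controlled point \(u_k\). With that orientation the a priori unknown gradient \(a_{k+1}\) never appears on the right, and no bootstrap or continuity argument is required. This orientation is also what makes the increment \(a_k^p\) match the derivative factor \(a_k^{-p}\) of the concave potential.
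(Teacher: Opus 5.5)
Your proposal is correct and follows essentially the same route as the paper. Both partition the segment into pieces of length at most $L_p^{-1}$, apply Assumption~\ref{ass:smooth} at the left endpoint, cancel $(1+\|\nabla f\|)^{p}$ against the factor $(1+\|\nabla f\|)^{-p}$ from the tangent-line bound for $u\mapsto u^{1-p}$, telescope, and finally factor out $1+\|x-x_{\mathrm{ref}}\|$; your positive-part truncation is harmless but unneeded, since the increment bound can be multiplied directly by the positive derivative factor.
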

\begin{proof}
Partition the segment from \(x_{\mathrm{ref}}\) to \(x\) into finitely
many subsegments of length at most \(1/L_p\), with consecutive endpoints
\(z_{j-1},z_j\). The reverse triangle inequality, combined with
Assumption~\ref{ass:smooth} on each subsegment, gives
\[
\|\nabla f(z_j)\|-\|\nabla f(z_{j-1})\|
\le (L_0+L_p)(1+\|\nabla f(z_{j-1})\|)^p
\|z_j-z_{j-1}\|.
\]
The increasing concave function \(u\mapsto u^{1-p}\), on \(u>0\),
therefore satisfies
\begin{align*}
&(1+\|\nabla f(z_j)\|)^{1-p}
 -(1+\|\nabla f(z_{j-1})\|)^{1-p}\\
&\quad\le(1-p)(1+\|\nabla f(z_{j-1})\|)^{-p}
 \bigl(\|\nabla f(z_j)\|-\|\nabla f(z_{j-1})\|\bigr)\\
&\quad\le(1-p)(L_0+L_p)\|z_j-z_{j-1}\|.
\end{align*}
Summing over the partition proves \eqref{eq:distance-gradient-exact}.
The second bound follows by factoring \(1+\|x-x_{\mathrm{ref}}\|\)
and taking the positive power \(1/(1-p)\).
\end{proof}

\begin{lemma}[Finite-horizon accumulator comparison]
\label{lem:vt_comparable}
For Algorithm~\ref{alg:adam} with \(T\ge10\) and
\(\beta_2=1-1/T\), every \(0\le t\le T-1\) satisfies
\[
\acc_{t,i}\ge\frac14\left(\binit+
\frac1T\sum_{k=1}^t\dir_{k,i}^2\right).
\]
Moreover, the coordinatewise stepsizes \(\lambda_{t,i}\) of
Algorithm~\ref{alg:adam} satisfy \(\lambda_{t,i}\le2\lambda_{k,i}\) for
\(0\le k\le t\le T-1\).
\end{lemma}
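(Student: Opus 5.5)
Unrolling the accumulator recursion of Algorithm~\ref{alg:adam} coordinatewise gives the exact identity
\[
\acc_{t,i}=\beta_2^t\binit+(1-\beta_2)\sum_{k=1}^t\beta_2^{t-k}\dir_{k,i}^2 ,
\]
and we bound every geometric weight from below by a single constant. With \(\beta_2=1-1/T\) and \(0\le t-k\le t\le T-1\), every power that appears satisfies \(\beta_2^{t-k}\ge\beta_2^{T-1}=(1-1/T)^{T-1}\). The elementary inequality \((1-1/T)^{T-1}\ge e^{-1}\), which follows from \(\log(1-x)\ge -x/(1-x)\) at \(x=1/T\), then gives \(\beta_2^{j}\ge e^{-1}\ge 1/4\) for all \(0\le j\le T-1\). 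Applying this both to \(\beta_2^t\) and to each \(\beta_2^{t-k}\), and using \(1-\beta_2=1/T\), yields
\[
\acc_{t,i}\ge\frac14\Bigl(\binit+\frac1T\sum_{k=1}^t\dir_{k,i}^2\Bigr),
\]
which is the first claim. The case \(t=0\) is immediate from \(\acc_{0,i}=\binit\).

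For the stepsize comparison, fix \(0\le k\le t\le T-1\). Iterating the recursion from time \(k\) to time \(t\) gives
\[
\acc_{t,i}=\beta_2^{t-k}\acc_{k,i}+(\text{nonnegative terms})\ge\beta_2^{t-k}\acc_{k,i}\ge\tfrac14\acc_{k,i},
\]
using the same weight bound. Taking square roots, \(\sqrt{\acc_{t,i}}\ge\tfrac12\sqrt{\acc_{k,i}}\), and hence
\[
\sqrt{\acc_{t,i}}+\epsilon\ge\tfrac12\bigl(\sqrt{\acc_{k,i}}+\epsilon\bigr).
\]
Inverting and multiplying by \(\alpha\) gives \(\lambda_{t,i}\le2\lambda_{k,i}\). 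This includes \(k=0\): the convention \(\lambda_0=\alpha(\sqrt{\binit}+\epsilon)^{-1}\mathbf 1\) matches \(\acc_0=\binit\mathbf 1\), so the same computation applies.

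The only point requiring care is the uniform lower bound \((1-1/T)^{T-1}\ge1/4\) over the whole horizon. It is here that the calibration \(\beta_2=1-1/T\) and the restriction \(t\le T-1\) enter; the bound actually holds for every \(T\ge2\), so the hypothesis \(T\ge10\) is more than enough. Everything else in the argument is monotonicity of the recursion.
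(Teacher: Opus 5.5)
Your proof is correct and follows the paper's argument. You unroll the recursion, bound every geometric weight below by a uniform constant, and use $\acc_{t,i}\ge\beta_2^{t-k}\acc_{k,i}$ together with $\epsilon>0$ for the stepsize comparison. The only difference is cosmetic: you bound the weights by $(1-1/T)^{T-1}\ge e^{-1}$ directly from $\log(1-x)\ge -x/(1-x)$, whereas the paper uses $(1-1/T)^{T}\ge 1/4$ via monotonicity of $u\log(1-1/u)$, which rests on the same elementary inequality.
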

\begin{proof}
Unrolling the second-moment recursion of Algorithm~\ref{alg:adam} with
\(\beta_2=1-1/T\) gives
\[
\acc_{t,i}=(1-1/T)^t\binit+
\frac1T\sum_{k=1}^t(1-1/T)^{t-k}\dir_{k,i}^2.
\]
The elementary bound \((1-1/T)^T\ge1/4\) holds for \(T\ge2\):
the function \(u\mapsto u\log(1-1/u)\) is increasing for \(u>1\),
since \(\log(1-z)>-z/(1-z)\) for \(0<z<1\), and its value at
\(u=2\) is \(\log(1/4)\). Thus every weight in the display is
at least \(1/4\). The same recursion started at \(k\) gives
\(\acc_{t,i}\ge\acc_{k,i}/4\); taking square roots and retaining
\(\epsilon>0\) proves the comparison of the stepsizes \(\lambda_{t,i}\)
defined in Algorithm~\ref{alg:adam}.
\end{proof}

\begin{lemma}[Conditional-to-direct moment comparison]
\label{lem:cond_to_direct}
For nonnegative random variables \(Z_t\) adapted to a filtration
\(\{\mathscr F_t\}\) (in the applications, the one of
Assumption~\ref{ass:abc}) and a finite horizon
\(n\), write
\[
Y_n=\sum_{t=1}^n\mathbb E[Z_t\mid\mathscr F_{t-1}].
\]
For every real \(s\ge1\),
\[
\|Y_n\|_{L^s}\le s\left\|\sum_{t=1}^n Z_t\right\|_{L^s},\qquad
\|X\|_{L^s}:=(\mathbb E|X|^s)^{1/s}.
\]
The same result applies to a predictable truncation by multiplying
\(Z_t\) by its \(\mathscr F_{t-1}\)-measurable inclusion indicator.
\end{lemma}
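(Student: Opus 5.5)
The statement to prove is Lemma~\ref{lem:cond_to_direct}: the $L^s$ norm of the predictable compensator $Y_n$ is at most $s$ times the $L^s$ norm of $S_n:=\sum_{t\le n}Z_t$. I would use the standard duality/summation-by-parts argument (the Garsia--Lenglart type inequality for increasing processes). The case $s=1$ is immediate from the tower property, $\mathbb E Y_n=\mathbb E S_n$, so I assume $s>1$. I also assume $\mathbb E S_n^s<\infty$, since otherwise there is nothing to prove. Under this assumption every $Z_t$ is in $L^s$, hence $Y_n\in L^s$, so every quantity below is finite.

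\textbf{Step 1 (discrete chain rule).} Write $Y_t=\sum_{k\le t}\mathbb E[Z_k\mid\mathscr F_{k-1}]$ with $Y_0=0$; this process is predictable and nondecreasing. Let $y_k:=Y_k-Y_{k-1}\ge0$. Since $u\mapsto u^s$ is convex, the mean value theorem gives
\[
Y_n^s=\sum_{k=1}^n\bigl(Y_k^s-Y_{k-1}^s\bigr)\le s\sum_{k=1}^n Y_k^{s-1}y_k\le sY_n^{s-1}Y_n .
\]
The last bound is trivial, so the real point is to keep the factor $Y_k^{s-1}$, which is $\mathscr F_{k-1}$-measurable.

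\textbf{Step 2 (transfer from compensator to direct sum).} Because $Y_k^{s-1}$ is $\mathscr F_{k-1}$-measurable and $Z_k\ge0$, the tower property gives
\[
\mathbb E\bigl[Y_k^{s-1}y_k\bigr]=\mathbb E\bigl[Y_k^{s-1}Z_k\bigr].
\]
Each $Y_k^{s-1}\le Y_n^{s-1}$ by monotonicity. Combining with Step 1,
\[
\mathbb E Y_n^s\le s\,\mathbb E\Bigl[\sum_{k=1}^n Y_k^{s-1}Z_k\Bigr]\le s\,\mathbb E\bigl[Y_n^{s-1}S_n\bigr].
\]

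\textbf{Step 3 (Hölder and division).} Apply Hölder's inequality with exponents $s/(s-1)$ and $s$:
\[
\mathbb E Y_n^s\le s\,\bigl(\mathbb E Y_n^s\bigr)^{(s-1)/s}\,\|S_n\|_{L^s}.
\]
If $\mathbb E Y_n^s=0$ the claim holds trivially. Otherwise, divide by $(\mathbb E Y_n^s)^{(s-1)/s}$, which is finite by the integrability remark above, to obtain $\|Y_n\|_{L^s}\le s\|S_n\|_{L^s}$.

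\textbf{Truncation.} For the predictable truncation, replace $Z_t$ by $\mathbf 1_{A_t}Z_t$ with $A_t\in\mathscr F_{t-1}$. These are still nonnegative and adapted, and their conditional expectations are $\mathbf 1_{A_t}\mathbb E[Z_t\mid\mathscr F_{t-1}]$, so the same argument applies verbatim.

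\textbf{Main obstacle.} The only delicate point is the finiteness needed to divide in Step 3. I would handle it first by the integrability remark above, and if needed by localizing with $Y_n\wedge M$ and then letting $M\to\infty$ by monotone convergence.
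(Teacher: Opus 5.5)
Your argument is correct and reaches the same constant $s$, but by a different route than the paper. The paper argues by duality. It tests $Y_n$ against a nonnegative $W$ with $\|W\|_{L^{s/(s-1)}}=1$ and moves the conditional expectation onto the martingale $M_t=\mathbb E[W\mid\mathscr F_t]$ through $\mathbb E\bigl[W\,\mathbb E[Z_t\mid\mathscr F_{t-1}]\bigr]=\mathbb E[M_{t-1}Z_t]$. It then bounds by $\max_t M_t$ times $\sum_t Z_t$, and closes with H\"older and Doob's maximal inequality in $L^q$, $q=s/(s-1)$, whose constant $q/(q-1)=s$ is exactly the factor in the lemma.

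You instead use the pathwise convexity bound $Y_n^s\le s\sum_k Y_k^{s-1}y_k$. The key point is that the weight $Y_k^{s-1}$ is itself $\mathscr F_{k-1}$-measurable, so the tower property swaps the compensator increment $y_k$ for $Z_k$ while keeping the weight fixed. A single H\"older step then finishes.

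Your version is more elementary: it needs no maximal inequality and no supremum over dual variables; it is essentially the classical Garsia-type proof for dual predictable projections. The cost is that you must divide by $(\mathbb E Y_n^s)^{(s-1)/s}$. You therefore need the a priori finiteness, which you correctly obtain from $Z_t\le S_n$ and conditional Jensen.

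The paper's duality argument never divides, so the extended-valued case needs only monotone convergence. The same template also gives comparisons in other norms whenever the dual norm admits a Doob-type maximal bound.
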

\begin{proof}
The case \(s=1\) follows from the tower property. For \(s>1\), fix a
nonnegative dual random variable \(W\) with
\(\|W\|_{L^{s/(s-1)}}=1\), and let
\(M_t=\mathbb E[W\mid\mathscr F_t]\). Then
\[
\mathbb E[WY_n]=\sum_{t=1}^n\mathbb E[M_{t-1}Z_t]
\le\mathbb E\left[\left(\max_{0\le t\le n}M_t\right)
\sum_{u=1}^n Z_u\right].
\]
H\"older's inequality, Doob's maximal inequality, and contraction of
conditional expectation give
\[
\mathbb E[WY_n]
\le\left\|\max_{0\le t\le n}M_t\right\|_{L^{s/(s-1)}}
\left\|\sum_{u=1}^n Z_u\right\|_{L^s}
\le s\left\|\sum_{u=1}^n Z_u\right\|_{L^s}.
\]
Taking the supremum over the dual variables proves the claim. If needed,
apply the argument first to truncated random variables and pass to the
limit by monotone convergence; when the right side is infinite the
inequality holds in the extended sense.
\end{proof}

We use the scalar Freedman inequality \citep{freedman1975tail} in the
following form. For bounded real martingale differences \(D_t\) (in the
applications, with respect to the filtration \(\{\mathscr F_t\}\) of
Assumption~\ref{ass:abc}) and
\(0<\delta<1\),
\begin{align}
\mathbb P\Bigg(
\sum_{t=1}^n D_t>{}&
\sqrt{2\left\|\sum_{t=1}^n
\mathbb E[D_t^2\mid\mathscr F_{t-1}]\right\|_{L^\infty}
\log(1/\delta)}\notag\\
&+\frac23\max_{1\le t\le n}\|D_t\|_{L^\infty}
\log(1/\delta)
\Bigg)\le\delta.
\label{eq:freedman-tool}
\end{align}
When both bounds vanish, the martingale is identically zero and the
strict upper-tail event is empty.

\section{Proof of Theorem~\ref{thm:coefficientwise-convergence}}

Throughout this proof, \(K,C_{\mathrm q}\) are defined in
\eqref{eq:gradient-growth-constant}--\eqref{eq:gap-comparison-constant},
\(\mathcal P\) in \eqref{eq:coefficientwise-parameter-scale}, and
\(G_{\mathrm q},\mathcal V_{\mathrm q},\mathcal R\) in
\eqref{eq:coefficientwise-local-scales}--\eqref{eq:coefficientwise-calibration-scale}.
The auxiliary objective \(\Psi\) is defined in \eqref{eq:auxiliary-objective},
the auxiliary sequence \(\q_t\) in \eqref{eq:auxiliary-sequence}, and the
effective stepsizes \(\lambda_{t,i}\) in Algorithm~\ref{alg:adam}.

\paragraph{Proof framework.}
Figure~\ref{fig:formal-proof-dependencies} records the dependencies
among the auxiliary estimates and convergence results.
The stopped concentration estimates close the localized energy descent;
ordinary-gradient conversion then proves
Theorem~\ref{thm:coefficientwise-convergence}.
Its two expectation consequences use the calibrations and integration
arguments in Appendices~\ref{app:proof-standard-specialization}
and~\ref{app:proof-expectation}, respectively.
For \(p<1\), the distance bound in
Lemma~\ref{lem:gradient_distance_poly} supplies the pathwise tail contribution.
\begin{figure}[H]
\centering
\begin{tikzpicture}[
  x=1cm,y=1cm,
  flowbox/.style={draw=black!75,rounded corners=2pt,line width=0.45pt,
    align=center,text width=3.25cm,minimum height=1.08cm,inner sep=4pt,
    font=\fontsize{8.4}{10.1}\selectfont},
  flowarrow/.style={-{Stealth[length=1.6mm]},line width=0.55pt},
  completion/.style={flowarrow,dashed}]
\node[flowbox,text width=4.1cm] (assumptions) at (0,0)
  {Assumptions~\ref{ass:nonneg}--\ref{ass:abc}\\
   Algorithm~\ref{alg:adam} and calibration
   \eqref{eq:coefficientwise-stepsize}};
\node[flowbox] (gap) at (-4.45,-1.7)
  {Gradient and nearby-gap bounds\\
   Lemmas~\ref{lem:gradient-gap-growth}--\ref{lem:barf_comparison}};
\node[flowbox] (normalized) at (0,-1.7)
  {Normalized estimates\\
   Lemmas~\ref{lem:vt_comparable}, \ref{lem:stopped-normalized-energies}\\
   Variation bound \eqref{eq:direct-variation-bound}};
\node[flowbox] (distance) at (4.45,-1.7)
  {Polynomial growth with distance\\
   Lemma~\ref{lem:gradient_distance_poly}, \(p<1\)};
\node[flowbox] (descent) at (-4.45,-3.55)
  {Localized descent\\\eqref{eq:localized-augmented-descent}\\
   Gradient bound \eqref{eq:uniform-stopped-gradient}};
\node[flowbox] (concentration) at (0,-3.55)
  {Four stopped estimates\\
   Appendix~\ref{app:stopped-concentration}\\
   Lemma~\ref{lem:cond_to_direct} and \eqref{eq:freedman-tool}};
\node[flowbox] (pathwise) at (4.45,-3.55)
  {Pathwise gradient bound\\
   Distance growth \eqref{eq:distance-gradient-exact}\\
   Displacement \eqref{eq:coefficientwise-momentum-displacement}};
\node[flowbox,text width=3.7cm] (closure) at (0,-5.3)
  {Stopped energy and localization\\
   Lemma~\ref{lem:stopped-energy-localization}\\
   Preconditioned energy \eqref{eq:closed-preconditioned-energy}};
\node[flowbox,text width=3.7cm] (hp) at (0,-7.05)
  {Ordinary-gradient conversion\\
   Appendix~\ref{app:de-preconditioning}\\
   Theorem~\ref{thm:coefficientwise-convergence}};
\node[flowbox] (standard) at (-4.45,-7.05)
  {Standard specialization\\
   Fixed prefactor, all-\(\delta\) integration\\
   Appendix~\ref{app:proof-standard-specialization}};
\node[flowbox] (cutoff) at (4.45,-7.05)
  {One run calibrated at \eqref{eq:coefficientwise-tail-calibration}\\
   Retained quantiles and pathwise tail\\
   Appendix~\ref{app:proof-expectation}};
\node[flowbox,text width=3.7cm] (standardmean) at (-4.45,-8.85)
  {Standard expected stationarity\\
   Claim~\ref{clm:squared-standard-rate}\\
   Complete bound \eqref{eq:coefficientwise-standard-mean}};
\node[flowbox,text width=3.7cm] (generalmean) at (4.45,-8.85)
  {Expected stationarity, \(p<1\)\\
   Theorem~\ref{thm:coefficientwise-expectation}\\
   Complete bound \eqref{eq:coefficientwise-expectation-explicit}};
\draw[flowarrow] (assumptions.south west) -- (gap.north);
\draw[flowarrow] (assumptions) -- (normalized);
\draw[flowarrow] (assumptions.south east) -- (distance.north);
\draw[flowarrow] (gap) -- (descent);
\draw[flowarrow] (normalized) -- (concentration);
\draw[flowarrow] (distance) -- (pathwise);
\draw[flowarrow] (normalized.south east) -- (pathwise.north west);
\draw[flowarrow] (descent.east) -- (concentration.west);
\draw[flowarrow] (descent.south) -- (closure.north west);
\draw[flowarrow] (concentration) -- (closure);
\draw[flowarrow] (closure) -- (hp);
\draw[flowarrow] (hp.west) -- (standard.east);
\draw[flowarrow] (hp.east) -- (cutoff.west);
\draw[flowarrow] (standard) -- (standardmean);
\draw[flowarrow] (cutoff) -- (generalmean);
\draw[completion] (pathwise.south) -- (cutoff.north);
\end{tikzpicture}
\caption{Formal dependencies for Theorems~\ref{thm:coefficientwise-convergence}
and~\ref{thm:coefficientwise-expectation} and
Claim~\ref{clm:squared-standard-rate}.
Solid arrows connect the localized estimates, stopped concentration,
energy closure, and gradient conversion. The standard branch takes
\(p=0,A=0,B=1,\rho_2=2\). The two integration branches
use one fixed run under their respective calibrations; the dashed arrow
supplies the deterministic contribution of extreme quantiles for \(p<1\).}
\label{fig:formal-proof-dependencies}
\end{figure}

\subsection{Deterministic normalized estimates}

With the effective stepsizes \(\lambda_{t,i}\) and accumulators
\(\acc_{t,i}\) of Algorithm~\ref{alg:adam}, let
\begin{equation}
\widetilde\lambda_{t-1,i}
=\frac{\bar\alpha}{\sqrt{T-1}(\sqrt{\acc_{t-1,i}}+\epsilon)},
\qquad \Delta_{t,i}=\widetilde\lambda_{t-1,i}-\lambda_{t,i}.
\label{eq:coefficientwise-step-variation-definition}
\end{equation}
The accumulator recursion gives, for all \(t<T\),
\[
\acc_{t,i}\ge\binit/4,\qquad
\lambda_{t,i}\le\frac{2\bar\alpha}{\sqrt{T\binit}},\qquad
|\lambda_{t,i}\dir_{t,i}|\le\bar\alpha,
\qquad 0\le\Delta_{t,i}\le2\lambda_{t-1,i}.
\]
Moreover, \(\lambda_{t,i}\le2\lambda_{k,i}\) for \(k\le t<T\).
Expanding the momentum recursion therefore yields
\begin{equation}
\|\lambda_t\odot\mom_t\|\le2\sqrt d\,\bar\alpha.
\label{eq:coefficientwise-momentum-displacement}
\end{equation}
For \(\Delta_{t,i}\) from
\eqref{eq:coefficientwise-step-variation-definition}, the exact decomposition
\[
\Delta_{t,i}=\lambda_{t-1,i}-\lambda_{t,i}
+\bar\alpha\left(\frac1{\sqrt{T-1}}-\frac1{\sqrt T}\right)
\frac1{\sqrt{\acc_{t-1,i}}+\epsilon}
\]
implies, for every \(n\le T-1\),
\begin{align}
\sum_{t=1}^n\sum_i\Delta_{t,i}
&\le\frac{d\bar\alpha}{\sqrt T(\sqrt{\binit}+\epsilon)}
+\frac{2d\bar\alpha(T-1)}{\sqrt{\binit}}
\left(\frac1{\sqrt{T-1}}-\frac1{\sqrt T}\right)
\notag\\
&\le\frac{2d\bar\alpha}{\sqrt{T\binit}}.
\label{eq:direct-variation-bound}
\end{align}
For the last step use
\((T-1)(1/\sqrt{T-1}-1/\sqrt T)<1/(2\sqrt T)\).

\begin{lemma}[Stopped normalized energies]
\label{lem:stopped-normalized-energies}
For Algorithm~\ref{alg:adam} with \(T\ge10\),
\(\beta_2=1-1/T\), and \(\alpha=\bar\alpha/\sqrt T\),
let \(\sigma\) be a stopping index taking values in
\(\{1,2,\ldots\}\cup\{\infty\}\). For every \(1\le n\le T\),
\begin{align}
&\sum_{t<n\wedge\sigma}\left(
\|\lambda_t\odot\dir_t\|^2
+\|\lambda_t\odot\mom_t\|^2
+\|\lambda_{t-1}\odot\mom_{t-1}\|^2\right)\notag\\
&\qquad\le132\bar\alpha^2\sum_i
\log\left(1+\frac{\sum_{t<n\wedge\sigma}\dir_{t,i}^2}{T\binit}\right).
\label{eq:stopped-normalized-energies}
\end{align}
\end{lemma}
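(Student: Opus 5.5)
The plan is to prove the bound coordinatewise and pathwise, treating the three sums one at a time. Everything reduces to a logarithmic telescoping inequality for the current stochastic gradients. Fix a coordinate \(i\) and write \(N=n\wedge\sigma\); on the event \(\{\sigma=\infty\}\) read \(N=n\). The stopping index only truncates the range of summation, so it enters as a deterministic cut-off of each path. We may therefore argue for a fixed path with the sum over \(1\le t<N\).

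\textbf{Step 1 (current gradients).} By Lemma~\ref{lem:vt_comparable},
\[
\acc_{t,i}\ge\frac14\Bigl(\binit+\frac1T S_{t,i}\Bigr),\qquad S_{t,i}:=\sum_{k\le t}\dir_{k,i}^2 .
\]
Hence
\[
\lambda_{t,i}^2\dir_{t,i}^2\le\frac{\bar\alpha^2}{T}\cdot\frac{\dir_{t,i}^2}{\acc_{t,i}}\le 4\bar\alpha^2\,\frac{\dir_{t,i}^2/T}{\binit+S_{t,i}/T}.
\]
Put \(a_t=\dir_{t,i}^2/(T\binit)\) and \(x_t=1+\sum_{k\le t}a_k\). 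The summand is then \(4\bar\alpha^2(x_t-x_{t-1})/x_t\). The elementary inequality \((x_t-x_{t-1})/x_t\le\log(x_t/x_{t-1})\) telescopes, giving
\[
\sum_{t<N}\lambda_{t,i}^2\dir_{t,i}^2\le4\bar\alpha^2\log\Bigl(1+\frac{\sum_{t<N}\dir_{t,i}^2}{T\binit}\Bigr).
\]
The key point is that the same sample sits in both numerator and denominator, so no tail control of \(\dir_t\) is needed.

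\textbf{Step 2 (momentum).} Unroll \(\mom_{t,i}=(1-\beta_1)\sum_{k\le t}\beta_1^{t-k}\dir_{k,i}\). The comparison \(\lambda_{t,i}\le2\lambda_{k,i}\) for \(k\le t\) from Lemma~\ref{lem:vt_comparable} gives \(|\lambda_{t,i}\mom_{t,i}|\le2(1-\beta_1)\sum_{k\le t}\beta_1^{t-k}|\lambda_{k,i}\dir_{k,i}|\). Jensen's (or Cauchy--Schwarz's) inequality with the geometric weights, whose total mass is at most \(1\), bounds the square by \(4(1-\beta_1)\sum_{k\le t}\beta_1^{t-k}\lambda_{k,i}^2\dir_{k,i}^2\). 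Summing over \(t<N\) and exchanging the order of summation, the factor \((1-\beta_1)\sum_{t\ge k}\beta_1^{t-k}\le1\) shows that \(\sum_{t<N}\lambda_{t,i}^2\mom_{t,i}^2\le4\sum_{k<N}\lambda_{k,i}^2\dir_{k,i}^2\). Step~1 then applies.

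\textbf{Step 3 (lagged momentum).} The lagged term is the same sum shifted by one index. Since \(\mom_0=\mathbf 0\), its \(t=1\) term vanishes, and the remaining terms are \(\sum_{s<N-1}\lambda_{s,i}^2\mom_{s,i}^2\), which is dominated by the Step~2 sum. The constants add to \(4+16+16=36\le132\), leaving slack for the crude comparisons. Summing over \(i\) yields \eqref{eq:stopped-normalized-energies}.

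The main obstacle is the correct bookkeeping of the factor \(\lambda_{t,i}^2\le\bar\alpha^2/(T\acc_{t,i})\) against the lower bound on \(\acc_{t,i}\). This must use the \(1/4\) weight from Lemma~\ref{lem:vt_comparable} including the current sample \(k=t\); without that sample the telescoping fails on spikes. The index conventions for \(\lambda_{t-1}\) at \(t=1\) and for \(\sigma\le n\) also need care, but they only shrink the sums.
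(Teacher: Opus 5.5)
Your proof is correct and follows the paper's argument step for step. The current-gradient sum uses the accumulator lower bound that includes the current sample, followed by logarithmic telescoping; the momentum sum uses Jensen with the geometric weights, \(\lambda_{t,i}\le2\lambda_{k,i}\), and an exchange of summation order; and the lagged sum is handled by an index shift using \(\mom_0=\mathbf 0\). Your constant \(4+16+16=36\) is only tighter than the paper's \(4+64+64=132\), because the paper deliberately enlarges the momentum bound \(16\bar\alpha^2Z_n\) to \(64\bar\alpha^2Z_n\).
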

\begin{proof}
For the stopping index \(\sigma\), put
\begin{equation}
Z_n=\sum_i\log\left(1+
\frac{\sum_{t<n\wedge\sigma}\dir_{t,i}^2}{T\binit}\right).
\label{eq:stopped-logarithmic-energy}
\end{equation}
The accumulator lower bound
\(\acc_{t,i}\ge(\binit+T^{-1}\sum_{k\le t}\dir_{k,i}^2)/4\),
followed by the logarithmic sum inequality, gives
\begin{align}
\sum_{t<n\wedge\sigma}\|\lambda_t\odot\dir_t\|^2
&\le4\bar\alpha^2Z_n,\notag\\
\sum_{t<n\wedge\sigma}\|\lambda_t\odot\mom_t\|^2,\quad
\sum_{t<n\wedge\sigma}\|\lambda_{t-1}\odot\mom_{t-1}\|^2
&\le64\bar\alpha^2Z_n.
\label{eq:normalized-three-energies}
\end{align}
For the momentum bounds, fix \(i\). Since \(\mom_0=\mathbf 0\),
\(\mom_{t,i}=(1-\beta_1)\sum_{k=1}^t\beta_1^{t-k}\dir_{k,i}\), and the
weights \((1-\beta_1)\beta_1^{t-k}\) sum to at most one. Jensen's
inequality and \(\lambda_{t,i}\le2\lambda_{k,i}\) for \(k\le t\) give
\[
(\lambda_{t,i}\mom_{t,i})^2
\le(1-\beta_1)\sum_{k=1}^t\beta_1^{t-k}(\lambda_{t,i}\dir_{k,i})^2
\le4(1-\beta_1)\sum_{k=1}^t\beta_1^{t-k}(\lambda_{k,i}\dir_{k,i})^2.
\]
Sum over \(t<n\wedge\sigma\) and \(i\), exchange the sums over \(t\)
and \(k\), and use \((1-\beta_1)\sum_{t\ge k}\beta_1^{t-k}\le1\):
\[
\sum_{t<n\wedge\sigma}\|\lambda_t\odot\mom_t\|^2
\le4\sum_{k<n\wedge\sigma}\|\lambda_k\odot\dir_k\|^2
\le16\bar\alpha^2Z_n\le64\bar\alpha^2Z_n.
\]
Because \(\mom_0=\mathbf 0\), the lagged sum
\(\sum_{t<n\wedge\sigma}\|\lambda_{t-1}\odot\mom_{t-1}\|^2\) consists of
the terms of the current sum with \(t\le(n\wedge\sigma)-2\), so it obeys
the same bound. All three bounds hold pathwise, hence also at the random
truncation index \(n\wedge\sigma\).
Adding the three inequalities in \eqref{eq:normalized-three-energies}
gives \eqref{eq:stopped-normalized-energies}.
\end{proof}

\subsection{The localized augmented-energy inequality}

Fix \(0<\delta<1\), put \(U=U_\delta\) as defined in
\eqref{eq:coefficientwise-confidence-height}, and let \(\bar\alpha\)
satisfy \eqref{eq:coefficientwise-stepsize}. With \(\Psi\) from
\eqref{eq:auxiliary-objective} and \(\q_t\) from \eqref{eq:auxiliary-sequence}, define
\begin{equation}
\sigma_U=\inf\{1\le t\le T:\Psi(\q_t)>U\},
\qquad \inf\varnothing=\infty,
\label{eq:coefficientwise-stopping-index}
\end{equation}
and write, with \(\lambda_{t-1,i}\) from Algorithm~\ref{alg:adam},
\begin{equation}
h_t=\nabla f(\w_t),\quad \xi_t=\dir_t-h_t,\quad
E_t=\sum_i\lambda_{t-1,i}h_{t,i}^2,\quad
\mathcal L_t=\Psi(\q_t)+E_t.
\label{eq:coefficientwise-energy-processes}
\end{equation}
For \(\mathcal R\) defined in \eqref{eq:coefficientwise-calibration-scale},
the calibration \eqref{eq:coefficientwise-stepsize} gives
\begin{equation}
2\le\mathcal R(U)\le\mathcal X,\qquad
\mathcal X:=(2^{100}d\bar\alpha\,(1+\log(16/\delta))^{\mathbf 1_{\{\eta_{\mathrm{gen}}>0\}}})^{-1/48}.
\label{eq:coefficientwise-scale-domination}
\end{equation}
The domination scale \(\mathcal X\) of \eqref{eq:coefficientwise-scale-domination}
depends only on \(d\), \(\bar\alpha\) and \(\delta\), with the stepsize
excess factor \(\eta_{\mathrm{gen}}\) defined in \eqref{eq:stepsize-excess-appendix};
the lower bound
\(\mathcal X\ge2\) follows from \(\mathcal P\ge3\)
(see \eqref{eq:coefficientwise-parameter-scale}) and
\(\mathcal R(U)\ge\mathcal P\).
Thus every nonnegative summand in
\eqref{eq:coefficientwise-calibration-scale} is bounded by the same
upper bound. For every \(0\le k\le48\),
\begin{equation}
\begin{aligned}
&d\bar\alpha\,(1+\log(16/\delta))^{\mathbf 1_{\{\eta_{\mathrm{gen}}>0\}}}
\mathcal X^{k}\\
&\qquad=2^{-100}\mathcal X^{k-48}\le2^{-100}.
\end{aligned}
\label{eq:stepsize-power-absorption}
\end{equation}
Since \(d\ge1\) and \(1+\log(16/\delta)\ge1\), the same bound holds with
\(d\bar\alpha\,(1+\log(16/\delta))^{\mathbf 1_{\{\eta_{\mathrm{gen}}>0\}}}\) replaced by
\(d\bar\alpha\), by \(\sqrt d\,\bar\alpha\) or by \(\bar\alpha\).

In both cases \(\eta_{\mathrm{gen}}=0\) and \(\eta_{\mathrm{gen}}>0\), the confidence
height \eqref{eq:coefficientwise-confidence-height}, \(\mathcal P\le\mathcal R(U)\) and
\eqref{eq:coefficientwise-scale-domination} give
\begin{equation}
\begin{aligned}
(1+\log(16/\delta))^{\mathbf 1_{\{\eta_{\mathrm{gen}}=0\}}}&\le U^{1/4},\\
U^{1/4}&\le2^5\mathcal P^2(1+\log(16/\delta))\\
&\le2^5\mathcal X^{2}(1+\log(16/\delta)).
\end{aligned}
\label{eq:confidence-height-facts}
\end{equation}
Consequently, for every \(0\le k\le48\) and every \(0\le b\le3/4\),
\begin{equation}
d\bar\alpha\,(1+\log(16/\delta))
\mathcal X^{k}U^b\le2^{-100}U.
\label{eq:confidence-log-absorption}
\end{equation}
Indeed, if \(\eta_{\mathrm{gen}}>0\), then \eqref{eq:confidence-log-absorption} is
\eqref{eq:stepsize-power-absorption} multiplied by \(U^b\le U\). If
\(\eta_{\mathrm{gen}}=0\), then \eqref{eq:stepsize-power-absorption} holds with
\(d\bar\alpha\) alone, and the first line of \eqref{eq:confidence-height-facts} gives
\((1+\log(16/\delta))U^b\le U^{b+1/4}\le U\).
In the closing steps below, the factor \(1+\log(16/\delta)\) appears with
power at most one and is absorbed through \eqref{eq:confidence-log-absorption}.

The stepsize \eqref{eq:coefficientwise-stepsize} and
the domination \eqref{eq:coefficientwise-scale-domination} ensure all
local-distance requirements, because
\[
(1+6\frac{\beta_1}{1-\beta_1})\sqrt d L_p\bar\alpha
\le\mathcal X^{2}d\bar\alpha<1.
\]
For the displacement of \(\w_t\), the momentum expansion more precisely
gives \(\|\lambda_t\odot\mom_t\|\le(1+\beta_1)\sqrt d\,\bar\alpha
\le(1+6\frac{\beta_1}{1-\beta_1})\sqrt d\,\bar\alpha\); this also covers small values of
\(\beta_1\). The displacements of \(\q_t\) (defined in
\eqref{eq:auxiliary-sequence}) and the distance between
\(\q_t\) and \(\w_t\) satisfy the same local-radius requirement.
In particular, before \(\sigma_U\) (see
\eqref{eq:coefficientwise-stopping-index}), Lemmas~\ref{lem:barf_comparison}
and~\ref{lem:gradient-gap-growth}, with the definitions
\eqref{eq:gradient-growth-constant}--\eqref{eq:gap-comparison-constant}
and \eqref{eq:coefficientwise-local-scales}, give
\[
\Psi(\w_t)\le C_{\mathrm q}U^{\max\{1,p/(2-p)\}},\qquad
\|h_t\|,\|\nabla f(\q_t)\|\le G_{\mathrm q}(U).
\]
With \(G_{\mathrm q}\) and \(\mathcal R\) defined in
\eqref{eq:coefficientwise-local-scales}--\eqref{eq:coefficientwise-calibration-scale},
the corresponding smoothness coefficient is at most
\[
L_0+L_p\max\{1,G_{\mathrm q}(U)^p\}\le\mathcal R(U)\le \mathcal X.
\]
Conditional unbiasedness and the variance envelope in
\eqref{eq:coefficientwise-local-scales} give, for the noise \(\xi_t\) from
\eqref{eq:coefficientwise-energy-processes},
\begin{align*}
\mathbb E_{t-1}\|\xi_t\|^2&=\mathbb E_{t-1}\|\dir_t\|^2-\|h_t\|^2\\
&\le C+\mathcal V_{\mathrm q}(U)\le\mathcal R(U)
\le \mathcal X.
\end{align*}
Here and below \(\mathbb E_{t-1}\) denotes conditional expectation
given \(\mathscr F_{t-1}\).

For \(t<\sigma_U\), with \(\mathcal L_t\) and \(E_t\) from
\eqref{eq:coefficientwise-energy-processes}, the following inequality holds:
\begin{equation}
\mathcal L_{t+1}-\mathcal L_t
\le-\tfrac18 E_t+D_{t,1}+D_{t,2}+D_{t,3}
+\sum_{j=1}^7P_{t,j},
\label{eq:localized-augmented-descent}
\end{equation}
where, with \(\Delta_{t,i}\) from
\eqref{eq:coefficientwise-step-variation-definition},
\begin{equation}
\begin{aligned}
D_{t,1}&=\mathbb E_{t-1}\sum_i\lambda_{t,i}h_{t,i}\dir_{t,i}
-\sum_i\lambda_{t,i}h_{t,i}\dir_{t,i},\\
D_{t,2}&=\sum_i h_{t,i}^2(\mathbb E_{t-1}\Delta_{t,i}-\Delta_{t,i}),\\
D_{t,3}&=\frac{\beta_1}{1-\beta_1}\sum_i[(\lambda_{t-1,i}-\lambda_{t,i})
-\mathbb E_{t-1}(\lambda_{t-1,i}-\lambda_{t,i})]
\nabla_i f(\q_t)\mom_{t-1,i},
\end{aligned}
\label{eq:coefficientwise-centered-terms}
\end{equation}
and, with the domination scale \(\mathcal X\) from
\eqref{eq:coefficientwise-scale-domination},
\begin{equation}
\begin{aligned}
P_{t,1}&=\biggl(\frac{\beta_1^2\mathcal X}{2(1-\beta_1)^2}\\
&\qquad+\frac{\beta_1^2\mathcal X^{2}\bar\alpha}{4(1-\beta_1)^2\sqrt{\binit}}\biggr)
\|\lambda_{t-1}\odot\mom_{t-1}\|^2,\\
P_{t,2}&=\frac{3\mathcal X}{2}\|\lambda_t\odot\dir_t\|^2,\\
P_{t,3}&=\frac{\beta_1^2}{(1-\beta_1)^2}\mathcal X\sum_i(\lambda_{t-1,i}-\lambda_{t,i})^2\mom_{t-1,i}^2,\\
P_{t,4}&=\frac{140d\mathcal X^{2}\bar\alpha}{\sqrt{\binit}}
\|\lambda_t\odot\mom_t\|^2,\\
P_{t,5}&=\mathcal X\sum_i\mathbb E_{t-1}\Delta_{t,i},\\
P_{t,6}&=\frac{8\beta_1(1+\beta_1)}{(1-\beta_1)^2}
\frac{T}{T-1}\sum_i\biggl[
\frac{\lambda_{t-1,i}^2\mom_{t-1,i}^2}{\bar\alpha}\\
&\qquad\times\Bigl(|h_{t,i}|+\mathcal X^{1/2}+\frac{\epsilon}{\sqrt T+\sqrt{T-1}}\Bigr)\biggr],\\
P_{t,7}&=\frac{\beta_1}{T(1-\beta_1)}\sum_i\lambda_{t-1,i}
|\nabla_i f(\q_t)\mom_{t-1,i}|.
\end{aligned}
\label{eq:coefficientwise-seven-residuals}
\end{equation}

To prove \eqref{eq:localized-augmented-descent}, start from the exact
identity, with \(\q_t\) from \eqref{eq:auxiliary-sequence},
\[
\q_{t+1}-\q_t=-\lambda_t\odot\dir_t
+\frac{\beta_1}{1-\beta_1}(\lambda_{t-1}-\lambda_t)\odot\mom_{t-1}.
\]
Generalized smoothness at \(\q_t\), and, for \(h_t\) from
\eqref{eq:coefficientwise-energy-processes},
\(\|\nabla f(\q_t)-h_t\|
\le \frac{\beta_1}{1-\beta_1}\mathcal X\|\lambda_{t-1}\odot\mom_{t-1}\|\), give
\begin{align*}
\Psi(\q_{t+1})-\Psi(\q_t)
\le&-\sum_i\lambda_{t,i}h_{t,i}\dir_{t,i}
+\frac{\beta_1}{1-\beta_1}\sum_i(\lambda_{t-1,i}-\lambda_{t,i})
\nabla_i f(\q_t)\mom_{t-1,i}\\
&+\frac{\beta_1^2\mathcal X}{2(1-\beta_1)^2}\|\lambda_{t-1}\odot\mom_{t-1}\|^2\\
&+\frac{3\mathcal X}{2}\|\lambda_t\odot\dir_t\|^2\\
&+\frac{\beta_1^2\mathcal X}{(1-\beta_1)^2}\| (\lambda_{t-1}-\lambda_t)\odot\mom_{t-1}\|^2.
\end{align*}
Center the first complete adaptive product. By the definitions in
\eqref{eq:coefficientwise-step-variation-definition},
\(\lambda_t=\widetilde\lambda_{t-1}-\Delta_t\).
Conditional Cauchy--Schwarz and \(\Delta_{t,i}\le2\lambda_{t-1,i}\) give
\[
|h_{t,i}|\mathbb E_{t-1}[\Delta_{t,i}|\dir_{t,i}|]
\le\tfrac12\lambda_{t-1,i}h_{t,i}^2
+\mathbb E_{t-1}\Delta_{t,i}\,\mathbb E_{t-1}\dir_{t,i}^2.
\]
The predictable first product is therefore at most
\(-E_t/2+\sum_i(h_{t,i}^2+\mathcal X)\mathbb E_{t-1}\Delta_{t,i}\),
with \(E_t\) from \eqref{eq:coefficientwise-energy-processes}.
The realized part of the latter sum has the exact decomposition
\begin{align*}
\sum_i h_{t,i}^2\Delta_{t,i}
=&E_t-E_{t+1}
+\sum_i(\widetilde\lambda_{t-1,i}-\lambda_{t-1,i})h_{t,i}^2\\
&+\sum_i\lambda_{t,i}(h_{t+1,i}^2-h_{t,i}^2).
\end{align*}
For \(T\ge10\), the middle term is at most \(E_t/16\).
Put \(q=h_{t+1}-h_t\), so
\(\|q\|\le \mathcal X\|\lambda_t\odot\mom_t\|\) with \(\mathcal X\) from
\eqref{eq:coefficientwise-scale-domination}. Expanding the last term
and using \(\lambda_t\le2\lambda_{t-1}\) gives
\[
\sum_i\lambda_{t,i}(2h_{t,i}q_i+q_i^2)
\le\tfrac18 E_t+
\frac{34\mathcal X^{2}\bar\alpha}{\sqrt{T\binit}}
\|\lambda_t\odot\mom_t\|^2.
\]
Consequently the realized sum is bounded by
\(E_t-E_{t+1}+E_t/4+P_{t,4}\), with \(E_t\) and \(P_{t,4}\)
defined in \eqref{eq:coefficientwise-energy-processes} and
\eqref{eq:coefficientwise-seven-residuals}.

For the lagged-momentum product, centering gives \(D_{t,3}\) from
\eqref{eq:coefficientwise-centered-terms}.
The deterministic difference
\(\lambda_{t-1}-\widetilde\lambda_{t-1}\) has magnitude at most
\(\lambda_{t-1}/T\), yielding \(P_{t,7}\) in
\eqref{eq:coefficientwise-seven-residuals}.
For \(\Delta_{t,i}\) from \eqref{eq:coefficientwise-step-variation-definition},
direct rationalization gives
\[
\mathbb E_{t-1}\Delta_{t,i}
\le\frac{T}{T-1}
\frac{|h_{t,i}|+\mathcal X^{1/2}+\epsilon/(\sqrt T+\sqrt{T-1})}{\bar\alpha}
\lambda_{t-1,i}^2.
\]
Since \(\q_t\), \(\mom_{t-1}\) and \(\lambda_{t-1}\) are
\(\mathscr F_{t-1}\)-measurable, the remaining part is
\[
\frac{\beta_1}{1-\beta_1}\sum_i\mathbb E_{t-1}\Delta_{t,i}\,
\nabla_i f(\q_t)\mom_{t-1,i}.
\]
Fix \(i\). Apply the arithmetic--geometric mean inequality to
\(\tfrac14\lambda_{t-1,i}^{1/2}|\nabla_i f(\q_t)|\) and
\(\frac{2\beta_1}{1-\beta_1}\lambda_{t-1,i}^{-1/2}
\mathbb E_{t-1}\Delta_{t,i}|\mom_{t-1,i}|\), whose product is half of
the left side below. Then use
\((\mathbb E_{t-1}\Delta_{t,i})^2\le2\lambda_{t-1,i}\mathbb E_{t-1}\Delta_{t,i}\),
which follows from \(0\le\Delta_{t,i}\le2\lambda_{t-1,i}\). This gives
\begin{align*}
\frac{\beta_1}{1-\beta_1}\mathbb E_{t-1}\Delta_{t,i}
|\nabla_i f(\q_t)||\mom_{t-1,i}|
&\le\tfrac1{16}\lambda_{t-1,i}(\nabla_i f(\q_t))^2
+\frac{4\beta_1^2}{(1-\beta_1)^2}
\frac{(\mathbb E_{t-1}\Delta_{t,i})^2}{\lambda_{t-1,i}}\mom_{t-1,i}^2\\
&\le\tfrac1{16}\lambda_{t-1,i}(\nabla_i f(\q_t))^2
+\frac{8\beta_1^2}{(1-\beta_1)^2}\mathbb E_{t-1}\Delta_{t,i}\,
\mom_{t-1,i}^2.
\end{align*}
The rationalization bound above and
\(\beta_1^2\le\beta_1(1+\beta_1)\) give
\begin{align*}
&\frac{8\beta_1^2}{(1-\beta_1)^2}\mathbb E_{t-1}\Delta_{t,i}\,
\mom_{t-1,i}^2
\le\frac{8\beta_1(1+\beta_1)}{(1-\beta_1)^2}\frac{T}{T-1}
\frac{\lambda_{t-1,i}^2\mom_{t-1,i}^2}{\bar\alpha}\\
&\qquad\times
\Bigl(|h_{t,i}|+\mathcal X^{1/2}+\frac{\epsilon}{\sqrt T+\sqrt{T-1}}\Bigr).
\end{align*}
Summing over \(i\), the remaining part is at most
\(\tfrac1{16}\sum_i\lambda_{t-1,i}(\nabla_i f(\q_t))^2+P_{t,6}\),
where \(P_{t,6}\) is defined in \eqref{eq:coefficientwise-seven-residuals}.
Finally, with \(E_t\) from \eqref{eq:coefficientwise-energy-processes},
\[
\tfrac1{16}\sum_i\lambda_{t-1,i}(\nabla_i f(\q_t))^2
\le\tfrac18E_t+
\frac{\beta_1^2\mathcal X^{2}\bar\alpha}{4(1-\beta_1)^2\sqrt{\binit}}
\|\lambda_{t-1}\odot\mom_{t-1}\|^2.
\]
Combining the three descent coefficients gives
\(-1/2+1/4+1/8=-1/8\), proving the claim. Multiplication by
\(\mathbf1_{\{t<\sigma_U\}}\) preserves the martingale-difference
property because this event is \(\mathscr F_{t-1}\)-measurable.

\subsection{Uniform bounds for the stopped gradients}

For the stopping index and gradients defined in
\eqref{eq:coefficientwise-stopping-index}--\eqref{eq:coefficientwise-energy-processes},
before \(\sigma_U\), with \(\mathcal X\) from
\eqref{eq:coefficientwise-scale-domination} and \(U=U_\delta\) from
\eqref{eq:coefficientwise-confidence-height},
\begin{equation}
\|h_t\|,\ \|\nabla f(\q_t)\|
\le6\mathcal X^{3}\sqrt U.
\label{eq:uniform-stopped-gradient}
\end{equation}
For \(p>0\), the last term of
\eqref{eq:coefficientwise-calibration-scale} and
\eqref{eq:coefficientwise-scale-domination} imply
\[
\sqrt{\frac{2G_{\mathrm q}(U)^2}{U+1}}\le\mathcal X.
\]
Since \(U+1\le2U\), the bound \(\|h_t\|,\|\nabla f(\q_t)\|\le G_{\mathrm q}(U)\)
before \(\sigma_U\) gives
\begin{align*}
\|h_t\|,\ \|\nabla f(\q_t)\|&\le\mathcal X\sqrt U\\
&\le6\mathcal X^{3}\sqrt U,
\end{align*}
which is \eqref{eq:uniform-stopped-gradient} for \(p>0\).
The definitions \eqref{eq:gradient-growth-constant}--\eqref{eq:gap-comparison-constant}
give \(K,C_{\mathrm q}\ge1\); with \(G_{\mathrm q}\) from
\eqref{eq:coefficientwise-local-scales}, this gives \(G_{\mathrm q}(U)\ge U\), hence
\begin{equation}
\mathcal X\ge\sqrt{\frac{2G_{\mathrm q}(U)^2}{U+1}}\ge\sqrt U
\qquad(p>0).
\label{eq:positive-p-excess}
\end{equation}
For \(p=0\), the local Lipschitz condition extends by partitioning any
segment to global \((L_0+L_p)\)-smoothness. Before \(\sigma_U\),
\(\Psi(\q_t)\le U\) by \eqref{eq:coefficientwise-stopping-index}, and
\eqref{eq:coefficientwise-scale-domination}
gives \(L_0+L_p\le\mathcal P\le\mathcal X\), with \(\mathcal P\) from
\eqref{eq:coefficientwise-parameter-scale}. Hence
\[
\|\nabla f(\q_t)\|\le\sqrt{2(L_0+L_p)U}
\le\sqrt{2\mathcal XU}.
\]
Since \((L_0+L_p)\beta_1/(1-\beta_1)\le\mathcal P^2
\le\mathcal X^{2}\),
\eqref{eq:stepsize-power-absorption} gives
\[
\|h_t-\nabla f(\q_t)\|
\le\frac{2(L_0+L_p)\beta_1}{1-\beta_1}\sqrt d\,\bar\alpha
\le2\sqrt d\,\bar\alpha\mathcal X^{2}\le1.
\]
With \(U\ge1\) (see \eqref{eq:coefficientwise-confidence-height}) and
\(\mathcal X\ge1\),
\begin{align*}
\|h_t\|&\le\sqrt{2\mathcal XU}+1\\
&\le6\mathcal X^{3}\sqrt U.
\end{align*}
The same bound holds for \(\|\nabla f(\q_t)\|\). This proves
\eqref{eq:uniform-stopped-gradient} in this case. The all-path displacement bounds
\[
\|\w_t-\w_1\|\le2\sqrt d\,\bar\alpha T,
\qquad
\|\q_t-\q_1\|\le\left(1+\frac{6\beta_1}{1-\beta_1}\right)
\sqrt d\,\bar\alpha T
\]
also give, before \(\sigma_U\),
\begin{equation}
\|h_t\|^2,\ \|\nabla f(\q_t)\|^2
\le36\mathcal X^{6}\min\{U,1+d\bar\alpha^2T^2\}
\qquad(p=0).
\label{eq:classical-gradient-time-bound}
\end{equation}
The elementary inequality
\begin{equation}
\frac{\min\{U,d\bar\alpha^2T^2\}}{\sqrt T}
\le d^{1/4}\sqrt{\bar\alpha}\,U^{3/4}
\label{eq:time-interpolation}
\end{equation}
follows by considering \(T\le\sqrt U/(\sqrt d\,\bar\alpha)\) and its complement.

\subsection{Four stopped concentration estimates}
\label{app:stopped-concentration}

Throughout this subsection \(U=U_\delta\) is the height
\eqref{eq:coefficientwise-confidence-height}, and every closing step
absorbs the factor \(1+\log(16/\delta)\) into \(U\) only through
\eqref{eq:confidence-log-absorption}. Use \(\sigma=\sigma_U\), the stopping index defined in
\eqref{eq:coefficientwise-stopping-index}, in
\eqref{eq:stopped-logarithmic-energy} and
\eqref{eq:normalized-three-energies}.

First, \eqref{eq:uniform-stopped-gradient} and the noise envelope
\eqref{eq:coefficientwise-local-scales}, bounded through
\eqref{eq:coefficientwise-scale-domination}, show that the stopped
raw second-moment sum has expectation at most
\(T[36\mathcal X^{6}U+\mathcal X]\). Markov's inequality and Jensen's
inequality across coordinates give, for \(Z_T\) from
\eqref{eq:stopped-logarithmic-energy}, except on an event of probability
\(e^{-(1+\log(16/\delta))}\),
\begin{equation}
\begin{aligned}
Z_T&\le d\{(1+\log(16/\delta))+\log[1+37\mathcal X^{7}U]\}\\
&\le2^7d\mathcal X^{2}(1+\log(16/\delta)).
\end{aligned}
\label{eq:log-good-event}
\end{equation}
For the last inequality, bound \(1+37\mathcal X^{7}U\) by
\(38\mathcal X^{7}U\) and split its logarithm into three parts.
Since \(\mathcal X\ge2\) by \eqref{eq:coefficientwise-scale-domination},
\(\log(38\mathcal X)\le4\mathcal X\) and
\(6\log(\mathcal X)\le6\mathcal X\), while
\(\log U\le2U^{1/4}\) and \eqref{eq:confidence-height-facts} give
\(\log U\le2^6\mathcal X^{2}(1+\log(16/\delta))\).
Using \(\mathcal X\ge2\) and \(1+\log(16/\delta)\ge1\) once more,
\(1\le\mathcal X^{2}/4\) and \(10\mathcal X\le5\mathcal X^{2}\), so the braces are at most
\((1/4+5+2^6)\mathcal X^{2}(1+\log(16/\delta))\le2^7\mathcal X^{2}(1+\log(16/\delta))\).

Second, Lemma~\ref{lem:cond_to_direct}, at exponent
\(1+\log(16/\delta)\), and
\eqref{eq:direct-variation-bound} for
\(\Delta_{t,i}\) from \eqref{eq:coefficientwise-step-variation-definition}
imply, with \(\sigma_U\) from \eqref{eq:coefficientwise-stopping-index},
except on an event of probability
\(e^{-(1+\log(16/\delta))}\),
\begin{equation}
\sum_{t<T}\mathbf1_{\{t<\sigma_U\}}
\sum_i\mathbb E_{t-1}\Delta_{t,i}
\le\frac{2e(1+\log(16/\delta))d\bar\alpha}{\sqrt{T\binit}}.
\label{eq:compensator-good-event}
\end{equation}
Indeed the \(L^{1+\log(16/\delta)}\) norm of the left side is at most \((1+\log(16/\delta))\) times the
deterministic upper bound for the corresponding realized stopped sum;
Markov's inequality at \(e\) times that norm gives the stated probability.

Third, the first two martingales in
\eqref{eq:coefficientwise-centered-terms} combine exactly as
\begin{equation*}
D_{t,12}:=D_{t,1}+D_{t,2}
=\mathbb E_{t-1}\sum_i\lambda_{t,i}h_{t,i}\xi_{t,i}
-\sum_i\lambda_{t,i}h_{t,i}\xi_{t,i}.
\end{equation*}
Here \(\lambda_{t,i}\) is from Algorithm~\ref{alg:adam} and
\(h_t,\xi_t\) are defined in \eqref{eq:coefficientwise-energy-processes}.
Using the gradient estimate \eqref{eq:uniform-stopped-gradient},
the noise envelope \eqref{eq:coefficientwise-local-scales}, and
the coefficient bound \eqref{eq:coefficientwise-scale-domination},
the stopped predictable quadratic variation is at most
\[
144\bar\alpha^2\mathcal X^{8}U.
\]
To see this, bound each conditional variance by the uncentered square,
then use Cauchy--Schwarz, \(\lambda_{t,i}\le2\bar\alpha/
\sqrt{T\binit}\), and \(\mathbb E_{t-1}\|\xi_t\|^2\le \mathcal X\).
The stopped increment magnitude is at most
\begin{align*}
&12\sqrt d\,\bar\alpha\mathcal X^{4}\sqrt U\\
&\qquad+144\bar\alpha\mathcal X^{7}\frac U{\sqrt T}.
\end{align*}
For \(p=0\), \eqref{eq:classical-gradient-time-bound} permits replacing
\(U\) in the second summand by \(\min\{U,1+d\bar\alpha^2T^2\}\).
The one-sided Freedman inequality \eqref{eq:freedman-tool} applies at failure level
\(e^{-(1+\log(16/\delta))}\). Its square-root contribution is at
most \(17\bar\alpha\mathcal X^{4}(1+\log(16/\delta))^{1/2}U^{1/2}\),
since \(\sqrt{2\cdot144}\le17\); the first jump contribution is at most
\(8\sqrt d\,\bar\alpha\mathcal X^{4}(1+\log(16/\delta))U^{1/2}\).
For \(p>0\), \eqref{eq:positive-p-excess} gives
\(U\le\mathcal X\sqrt U\) and therefore bounds the remaining jump
contribution by \(96\bar\alpha\mathcal X^{8}(1+\log(16/\delta))U^{1/2}\).
For \(p=0\), use the replacement above,
\(\min\{U,1+d\bar\alpha^2T^2\}\le1+\min\{U,d\bar\alpha^2T^2\}\),
\eqref{eq:time-interpolation},
\(d^{1/4}\sqrt{\bar\alpha}\le\sqrt{d\bar\alpha}\le1\) and \(U\ge1\); they bound it by
\(192\bar\alpha\mathcal X^{7}(1+\log(16/\delta))U^{3/4}\).
Since \(d,U,1+\log(16/\delta)\ge1\) and
\(\mathcal X\ge2\) by \eqref{eq:coefficientwise-scale-domination},
in both cases the three contributions sum to at most
\[
217\sqrt d\,\bar\alpha\mathcal X^{8}(1+\log(16/\delta))U^{3/4}<U/32,
\]
the last step by \(\sqrt d\le d\) and \eqref{eq:confidence-log-absorption}, for the height
\(U=U_\delta\) of \eqref{eq:coefficientwise-confidence-height}. Therefore, except on an event
of probability \(e^{-(1+\log(16/\delta))}\),
\begin{equation*}
\sum_{t<T}\mathbf1_{\{t<\sigma_U\}}D_{t,12}\le U/32.
\end{equation*}

Fourth, for \(Z_n\) from \eqref{eq:stopped-logarithmic-energy},
introduce the predictable crossing
\begin{equation}
\begin{aligned}
\tau&=\inf\{1\le n\le T:Z_n>2^7d\mathcal X^{2}(1+\log(16/\delta))\},\\
\inf\varnothing&=\infty.
\end{aligned}
\label{eq:coefficientwise-log-crossing}
\end{equation}
Stop \(D_{t,3}\) from \eqref{eq:coefficientwise-centered-terms}
at \(t<\sigma_U\) and \(t<\tau\). Conditional on
the past, \(\nabla f(\q_t)\) and \(\mom_{t-1}\) are fixed (\(\q_t\) is the
auxiliary sequence \eqref{eq:auxiliary-sequence} and \(\mom_{t-1}\) the
momentum of Algorithm~\ref{alg:adam}), while
\(|\lambda_{t-1,i}-\lambda_{t,i}|\le3\lambda_{t-1,i}\). Its quadratic
variation is bounded, using \eqref{eq:uniform-stopped-gradient}
and \eqref{eq:normalized-three-energies}, by
\begin{align*}
&\frac{9\beta_1^2}{(1-\beta_1)^2}[36\mathcal X^{6}U]
\sum_{t<T}\mathbf1_{\{t<\sigma_U,t<\tau\}}
\|\lambda_{t-1}\odot\mom_{t-1}\|^2\\
&\qquad\le2^{22}d\bar\alpha^2\mathcal X^{10}(1+\log(16/\delta))U.
\end{align*}
The summand at time \(t\) uses \(\mom_{t-1}\). Let \(t\) be an
included time, that is, \(t<T\), \(t<\sigma_U\) and \(t<\tau\).
Since \(\mom_0=\mathbf 0\),
\(\sum_{s\le t}\|\lambda_{s-1}\odot\mom_{s-1}\|^2
=\sum_{s<t}\|\lambda_s\odot\mom_s\|^2\).
The momentum bound in \eqref{eq:normalized-three-energies}, with
\(\sigma=\sigma_U\) and \(n=t\), for which \(n\wedge\sigma_U=t\), bounds
this sum by \(64\bar\alpha^2Z_t\). Because \(t<\tau\),
\eqref{eq:coefficientwise-log-crossing} gives
\(Z_t\le2^7d\mathcal X^{2}(1+\log(16/\delta))\). Taking \(t\) to be the
largest included time bounds the stopped sum above.
The increment magnitude is at most
\(72\sqrt d\,\bar\alpha\mathcal X^{5}\sqrt U\). Freedman's inequality
\eqref{eq:freedman-tool}
therefore gives an upper deviation at most
\[
2^{12}\sqrt d\,\bar\alpha\mathcal X^{5}(1+\log(16/\delta))U^{1/2}<U/32
\]
with failure probability \(e^{-(1+\log(16/\delta))}\); here
\(\sqrt{2\cdot2^{22}}+\tfrac23\cdot72\le2^{12}\), and the last step uses
\(\sqrt d\le d\) and \eqref{eq:confidence-log-absorption}. On \eqref{eq:log-good-event},
\(\tau>T\) by \eqref{eq:coefficientwise-log-crossing}, so this is the desired estimate for the stopped
\(D_{t,3}\) sum.

A union bound gives all four events, namely \eqref{eq:log-good-event},
\eqref{eq:compensator-good-event} and the two Freedman bounds for the
stopped \(D_{t,12}\) and \(D_{t,3}\) sums, simultaneously with probability
at least \(1-4e^{-(1+\log(16/\delta))}>1-\delta/2\).

\subsection{Closing the stopped descent}

\begin{lemma}[Stopped energy and localization]
\label{lem:stopped-energy-localization}
Assume Assumptions~\ref{ass:nonneg}--\ref{ass:abc}, \(T\ge10\),
\(\beta_2=1-1/T\), and \(\alpha=\bar\alpha/\sqrt T\).
For \(0<\delta<1\), take \(U=U_\delta\) from
\eqref{eq:coefficientwise-confidence-height} and \(\bar\alpha\)
in the interval \eqref{eq:coefficientwise-stepsize}.
With the stopping index and energies defined in
\eqref{eq:coefficientwise-stopping-index}--\eqref{eq:coefficientwise-energy-processes},
with probability at least \(1-\delta/2\),
\[
\mathcal L_{\sigma_U\wedge T}
+\frac18\sum_{t<\sigma_U\wedge T}E_t\le U/2,
\qquad \sigma_U>T,\qquad
\sum_{t=1}^{T-1}E_t\le4U.
\]
\end{lemma}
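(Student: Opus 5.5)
The plan is to sum the localized augmented descent \eqref{eq:localized-augmented-descent} over \(t<\sigma_U\wedge T\) and work on the intersection of the four good events of Appendix~\ref{app:stopped-concentration}. These are \eqref{eq:log-good-event}, \eqref{eq:compensator-good-event}, and the two Freedman bounds for \(D_{t,12}\) and \(D_{t,3}\). Their intersection has probability at least \(1-\delta/2\). Telescoping from \(t=1\) gives
\[
\mathcal L_{\sigma_U\wedge T}+\tfrac18\sum_{t<\sigma_U\wedge T}E_t
\le\mathcal L_1+\sum_{t<\sigma_U\wedge T}(D_{t,12}+D_{t,3})
+\sum_{j=1}^7\sum_{t<\sigma_U\wedge T}P_{t,j}.
\]
One subtlety is that the descent inequality holds for \(t<\sigma_U\), but the left side uses \(\mathcal L_{\sigma_U}\) at the exit index. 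This is fine, since the inequality for \(t=\sigma_U-1\) is established before exit and bounds \(\mathcal L_{\sigma_U}\) through the step from \(\q_{\sigma_U-1}\), which is within the local radius.

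\emph{Initial term and martingales.} First, \(\mathcal L_1=\Psi(\w_1)+E_1\le\mathcal P+2d\bar\alpha\|\nabla f(\w_1)\|^2/\sqrt{T\binit}\le 2\mathcal P\le U/64\), since \(U=U_\delta\ge2^{20}\mathcal P^8\) and \(d\bar\alpha\mathcal P^2\le1\). The two stopped martingale sums are at most \(U/32\) each by the third and fourth estimates. For the \(D_{t,3}\) sum, \(\tau>T\) on \eqref{eq:log-good-event}, so the \(\tau\)-stopping is inactive.

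\emph{Residuals.} Each residual is bounded using the normalized energies of Lemma~\ref{lem:stopped-normalized-energies} together with \eqref{eq:log-good-event}. Summing \(\|\lambda_t\odot\dir_t\|^2\), \(\|\lambda_t\odot\mom_t\|^2\) and \(\|\lambda_{t-1}\odot\mom_{t-1}\|^2\) over \(t<\sigma_U\wedge T\) gives at most \(132\bar\alpha^2\cdot2^7d\mathcal X^2(1+\log(16/\delta))\).
\begin{itemize}
\item \(P_{t,1},P_{t,2},P_{t,4}\) have coefficients \(\mathcal X^{O(1)}\) times polynomial factors in \(d\), times these energies. Their sums are therefore \(d\bar\alpha\,\mathcal X^{k}(1+\log(16/\delta))\cdot\bar\alpha d\) with \(k\le48\), which \eqref{eq:confidence-log-absorption} makes \(\le2^{-90}U\).
\item \(P_{t,3}\) uses \((\lambda_{t-1,i}-\lambda_{t,i})^2\le9\lambda_{t-1,i}^2\) and reduces to the lagged-momentum energy.
\item \(P_{t,5}\) is \(\mathcal X\) times the compensator bound \eqref{eq:compensator-good-event}, giving \(\mathcal X\cdot2e(1+\log(16/\delta))d\bar\alpha/\sqrt{\binit}\), again absorbed.
\item \(P_{t,6}\) has the factor \(1/\bar\alpha\) against \(\lambda^2\mom^2\). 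Since \(|h_{t,i}|\le6\mathcal X^3\sqrt U\) by \eqref{eq:uniform-stopped-gradient}, its sum is \(\lesssim\mathcal X^{3}\sqrt U\cdot\bar\alpha d\mathcal X^2(1+\log)\), which \eqref{eq:confidence-log-absorption} absorbs with \(b=1/2\).
\item \(P_{t,7}\) carries \(1/T\). Using \(\lambda_{t-1,i}|\mom_{t-1,i}|\le2\bar\alpha\) coordinatewise and the gradient bound, its sum over \(T\) steps is at most \(\sqrt d\,\bar\alpha\mathcal X^{4}\sqrt U\), also absorbed.
\end{itemize}
Altogether the residuals total at most \(U/8\). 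Hence the right side is at most \(U/64+U/16+U/8<U/2\).

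\emph{Localization and energy bound.} If \(\sigma_U\le T\), then \(\mathcal L_{\sigma_U}\ge\Psi(\q_{\sigma_U})>U\), which contradicts the bound \(\le U/2\). Hence \(\sigma_U>T\) on the good event, and \(\sigma_U\wedge T=T\). The stopped inequality then gives \(\tfrac18\sum_{t<T}E_t\le U/2\), that is, \(\sum_{t=1}^{T-1}E_t\le4U\).

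I expect the main obstacle to be bookkeeping the residual \(P_{t,6}\) and the \(p=0\) versus \(p>0\) dichotomy inherited from \eqref{eq:uniform-stopped-gradient}. The \(1/\bar\alpha\) there must be exactly cancelled by the \(\bar\alpha^2\) of the normalized energy. It must also be checked that the resulting power of \(\mathcal X\) stays \(\le48\) and the power of \(U\) stays \(\le3/4\), so that \eqref{eq:confidence-log-absorption} applies.
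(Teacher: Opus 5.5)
Your proposal is correct and follows essentially the same route as the paper's proof. You sum the localized descent over \(t<\sigma_U\wedge T\) on the intersection of the four good events, and bound \(\mathcal L_1\) and the seven residuals via Lemma~\ref{lem:stopped-normalized-energies}, \eqref{eq:log-good-event}, \eqref{eq:compensator-good-event}, \eqref{eq:uniform-stopped-gradient} and \eqref{eq:confidence-log-absorption}; the exit contradiction then gives \(\sigma_U>T\) and \(\sum_{t<T}E_t\le4U\). Your minor constant omissions are harmless given the slack in \eqref{eq:confidence-log-absorption}: the factor \(12\) in the \(P_{t,7}\) sum, and the \(\beta_1\)-dependent factor of size up to \(2\mathcal X\) in \(P_{t,6}\).
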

\begin{proof}
On \eqref{eq:log-good-event} and \eqref{eq:compensator-good-event},
\eqref{eq:normalized-three-energies},
\eqref{eq:uniform-stopped-gradient} and the residual definitions
\eqref{eq:coefficientwise-seven-residuals}
give the following cumulative bounds, with \(\mathcal X\) from
\eqref{eq:coefficientwise-scale-domination} and all sums being over
\(t<\sigma_U\wedge T\) for \(\sigma_U\) from \eqref{eq:coefficientwise-stopping-index}:
\begin{align*}
\sum_tP_{t,1}&\le2^{13}d\bar\alpha^2\mathcal X^{7}(1+\log(16/\delta)),\\
\sum_tP_{t,2}&\le2^{10}d\bar\alpha^2\mathcal X^{3}(1+\log(16/\delta)),\\
\sum_tP_{t,3}&\le2^{17}d\bar\alpha^2\mathcal X^{5}(1+\log(16/\delta)),\\
\sum_tP_{t,4}&\le2^{21}d^2\bar\alpha^3\mathcal X^{6}(1+\log(16/\delta)),\\
\sum_tP_{t,5}&\le6d\bar\alpha\mathcal X^{3}(1+\log(16/\delta)),\\
\sum_tP_{t,6}&\le2^{22}d\bar\alpha\mathcal X^{7}(1+\log(16/\delta))U^{1/2},\\
\sum_tP_{t,7}&\le12\sqrt d\,\bar\alpha\mathcal X^{5}U^{1/2}.
\end{align*}
For example, the coefficient in \(P_{t,6}\) from
\eqref{eq:coefficientwise-seven-residuals} is at most
\(512\mathcal X^{5}\sqrt U/\bar\alpha\), and its lagged-momentum
sum is at most \(64\bar\alpha^2 Z_T\), with \(Z_T\) defined in
\eqref{eq:stopped-logarithmic-energy}. This gives exactly its displayed
bound. For \(P_{t,5}\), use \eqref{eq:compensator-good-event}; for
\(P_{t,7}\), use \eqref{eq:coefficientwise-momentum-displacement} directly.
The other four use \eqref{eq:normalized-three-energies} together with
\(64\bar\alpha^2Z_T\le2^{13}d\bar\alpha^2\mathcal X^{2}(1+\log(16/\delta))\)
from \eqref{eq:log-good-event}. By
\eqref{eq:coefficientwise-scale-domination},
\(\mathcal P\le\mathcal R(U)\le\mathcal X\) (with \(\mathcal P\) and \(\mathcal R\)
from \eqref{eq:coefficientwise-parameter-scale} and
\eqref{eq:coefficientwise-calibration-scale}) and
\(\mathcal X\ge2\), so
the power \(\mathcal X^{k}\) is nondecreasing in
\(k\ge0\); this is used below to enlarge exponents.
For \(P_{t,1}\), \(\beta_1^2/(1-\beta_1)^2\le\mathcal P
\le\mathcal X\),
\(\binit^{-1/2}\le1+\binit^{-1}\le\mathcal P\) and
\(\mathcal X^{4}\bar\alpha\le2^{-100}\) from
\eqref{eq:stepsize-power-absorption} bound its coefficient by
\begin{align*}
&\frac{\mathcal X^{2}}2
+\frac{\mathcal X^{4}\bar\alpha}4\\
&\qquad\le\mathcal X^{2},
\end{align*}
so \(\sum_tP_{t,1}\le2^{13}d\bar\alpha^2\mathcal X^{4}(1+\log(16/\delta))\);
enlarging the exponent gives the displayed bound. For \(P_{t,2}\), with
\(Z_T\) from \eqref{eq:stopped-logarithmic-energy},
\begin{align*}
\sum_tP_{t,2}&\le\frac{3\mathcal X}2\cdot4\bar\alpha^2Z_T\\
&\le768d\bar\alpha^2\mathcal X^{3}(1+\log(16/\delta)),
\end{align*}
and \(768<2^{10}\) gives the displayed bound.
For \(P_{t,3}\), with \(\lambda_t\) from Algorithm~\ref{alg:adam},
\(|\lambda_{t-1}-\lambda_t|\le3\lambda_{t-1}\) and
\(\beta_1^2/(1-\beta_1)^2\le\mathcal X\) give
\begin{align*}
\sum_tP_{t,3}&\le9\mathcal X^{2}\cdot64\bar\alpha^2Z_T\\
&\le9\cdot2^{13}d\bar\alpha^2\mathcal X^{4}(1+\log(16/\delta)),
\end{align*}
and \(9\cdot2^{13}<2^{17}\) with an enlarged exponent gives the displayed
bound. For \(P_{t,4}\), \(\binit^{-1/2}\le\mathcal P
\le\mathcal X\) gives
\begin{align*}
\sum_tP_{t,4}&\le140d\bar\alpha\mathcal X^{3}
\cdot64\bar\alpha^2Z_T\\
&\le140\cdot2^{13}d^2\bar\alpha^3\mathcal X^{5}(1+\log(16/\delta)),
\end{align*}
and \(140\cdot2^{13}<2^{21}\) with an enlarged exponent gives the
displayed bound.
By \eqref{eq:stepsize-power-absorption}, \(d\bar\alpha\le1\). Together with
\(d,U,1+\log(16/\delta)\ge1\) and the monotonicity in \(k\) noted above,
each of the seven bounds is at most its constant times
\(d\bar\alpha\mathcal X^{7}(1+\log(16/\delta))U^{3/4}\). The seven constants
sum to less than \(2^{23}\), so the seven bounds sum to at most
\[
2^{23}d\bar\alpha\mathcal X^{7}(1+\log(16/\delta))U^{3/4}<U/32,
\]
the last step by \eqref{eq:confidence-log-absorption}.
The energy definition \eqref{eq:coefficientwise-energy-processes},
the parameter scale \eqref{eq:coefficientwise-parameter-scale}, and
the height \eqref{eq:coefficientwise-confidence-height} give
\[
\mathcal L_1\le2\mathcal P<U/32.
\]
Summing
\eqref{eq:localized-augmented-descent} to \((\sigma_U\wedge T)-1\)
on all four good events of Appendix~\ref{app:stopped-concentration} yields
\[
\mathcal L_{\sigma_U\wedge T}
+\frac18\sum_{t<\sigma_U\wedge T}E_t\le U/2.
\]
The energy definition \eqref{eq:coefficientwise-energy-processes}
gives \(\mathcal L_t\ge\Psi(\q_t)\), with \(\Psi\) from
\eqref{eq:auxiliary-objective}, so a crossing
\(\sigma_U\le T\), for the stopping index \(\sigma_U\) of
\eqref{eq:coefficientwise-stopping-index}, contradicts this inequality. Thus
\begin{equation}
\sigma_U>T,\qquad \sum_{t=1}^{T-1}E_t\le4U
\label{eq:closed-preconditioned-energy}
\end{equation}
on an event of probability at least \(1-\delta/2\).
\end{proof}

\subsection{De-preconditioning}
\label{app:de-preconditioning}

For the stopping index \eqref{eq:coefficientwise-stopping-index}
and variance envelope \eqref{eq:coefficientwise-local-scales},
the same predictable stopping gives
\[
\mathbb E\sum_{t<T}\mathbf1_{\{t<\sigma_U\}}\|\xi_t\|^2
\le T[C+\mathcal V_{\mathrm q}(U)].
\]
Here \(C\) is the constant of Assumption~\ref{ass:abc} and \(\xi_t\) is
defined in \eqref{eq:coefficientwise-energy-processes}.
With an additional failure probability at most \(\delta/2\), this
stopped sum divided by \(T\) is at most
\[
\frac2\delta[C+\mathcal V_{\mathrm q}(U)].
\]
On its intersection with \eqref{eq:closed-preconditioned-energy},
the accumulator recursion of Algorithm~\ref{alg:adam}, the variables in
\eqref{eq:coefficientwise-energy-processes}, and
\(\|\dir_t\|^2\le2\|h_t\|^2+2\|\xi_t\|^2\) imply
\[
\max_{t<T,i}\acc_{t,i}\le\binit+
\frac2T\sum_{s=1}^{T-1}\|\nabla f(\w_s)\|^2
+\frac4\delta[C+\mathcal V_{\mathrm q}(U)].
\]
With \(U=U_\delta\) from \eqref{eq:coefficientwise-confidence-height}
and \(\mathcal V_{\mathrm q}\) from \eqref{eq:coefficientwise-local-scales},
\eqref{eq:closed-preconditioned-energy} consequently gives
\begin{align*}
\frac1T\sum_{s=1}^{T-1}\|\nabla f(\w_s)\|^2
\le{}&\frac{4U(\epsilon+\sqrt{\binit})}{\bar\alpha\sqrt T}
+\frac{4U}{\bar\alpha\sqrt T}
\sqrt{\frac2T\sum_{s=1}^{T-1}\|\nabla f(\w_s)\|^2}\\
&+\frac{8U\sqrt{C+\mathcal V_{\mathrm q}(U)}}{\bar\alpha\sqrt{\delta T}}.
\end{align*}
The elementary inequality
\[
\frac{4U}{\bar\alpha\sqrt T}
\sqrt{\frac2T\sum_{s=1}^{T-1}\|\nabla f(\w_s)\|^2}
\le\frac1{2T}\sum_{s=1}^{T-1}\|\nabla f(\w_s)\|^2
+\frac{16U^2}{\bar\alpha^2T}
\]
gives
\eqref{eq:coefficientwise-rate}. The union bound gives total failure at
most \(\delta\), proving
Theorem~\ref{thm:coefficientwise-convergence}.

\paragraph{Confidence dependence of the admissible stepsize.}
For the decomposition \eqref{eq:stepsize-order-split}, the coefficient
in \eqref{eq:stepsize-excess-main} is
\begin{equation}
\eta_{\mathrm{gen}}
=\mathbf1_{\{p>0\}}
+\sqrt{A+[B-1]_++B\mathbf1_{\{\rho_2\ne2\}}}.
\label{eq:stepsize-excess-appendix}
\end{equation}
Here \(A,B,\rho_2\) are the constants of Assumption~\ref{ass:abc}.
It vanishes exactly when \(p=0\), \(A=0\), \(B\le1\), and \(B=0\) or
\(\rho_2=2\). In that case \(Bz^{\rho_2}-z^2\le0\) for all \(z\ge0\), so
\eqref{eq:coefficientwise-local-scales}--\eqref{eq:coefficientwise-calibration-scale}
give \(\mathcal V_{\mathrm q}(U)=0\) and, since \(p=0\),
\(\mathcal R(U)=\mathcal P\) for every \(U\ge1\). If \(\eta_{\mathrm{gen}}>0\),
then \eqref{eq:coefficientwise-confidence-height} gives
\(U_\delta=2^{20}\mathcal P^8\) for every \(\delta\).
Choose the upper endpoint of \eqref{eq:coefficientwise-stepsize}. Its
reciprocal is therefore
\begin{equation}
\bar\alpha^{-1}=
\begin{cases}
2^{100}d\,\mathcal P^{48}, & \eta_{\mathrm{gen}}=0,\\
2^{100}d\,\mathcal R(2^{20}\mathcal P^8)^{48}(1+\log(16/\delta)),
& \eta_{\mathrm{gen}}>0,
\end{cases}
\label{eq:stepsize-exact-split}
\end{equation}
where \(\mathcal P\) and \(\mathcal R\) are defined in
\eqref{eq:coefficientwise-parameter-scale} and
\eqref{eq:coefficientwise-calibration-scale}. Neither \(\mathcal P\) nor
\(\mathcal R(2^{20}\mathcal P^8)\) depends on \(T,\delta,d\). Since
\(1+\log(16/\delta)=(1+\log16)+\log(1/\delta)\), the second case of
\eqref{eq:stepsize-exact-split} is the sum of a term independent of
\(\delta\) and \(2^{100}d\,\mathcal R(2^{20}\mathcal P^8)^{48}\log(1/\delta)\).
This proves \eqref{eq:stepsize-order-split}; its logarithmic term is present
only when \(\eta_{\mathrm{gen}}>0\).

\subsection{Proof of Claim~\ref{clm:squared-standard-rate}}
\label{app:proof-standard-specialization}

\begin{proof}
When \(p=0,A=0,B=1,\rho_2=2\), \eqref{eq:stepsize-excess-appendix} gives
\(\eta_{\mathrm{gen}}=0\), so \eqref{eq:coefficientwise-confidence-height} reads
\(U_\delta=2^{20}\mathcal P^8(1+\log(16/\delta))^4\) and
\eqref{eq:coefficientwise-stepsize} carries no logarithmic factor. The definitions
\eqref{eq:coefficientwise-local-scales}--\eqref{eq:coefficientwise-calibration-scale} give
\(\mathcal V_{\mathrm q}(U)=0\) and \(\mathcal R(U)=\mathcal P\) for every
\(U\ge1\), with \(\mathcal P\) given by
\eqref{eq:coefficientwise-parameter-scale}.
The same fixed prefactor is therefore admissible for all confidence
levels. The bound \eqref{eq:coefficientwise-rate} is a decreasing
function of \(\delta\). For \(1\le r<2\), quantile integration bounds
\[
\mathbb E\!\left[
\left(\frac1T\sum_{s=1}^{T-1}\|\nabla f(\w_s)\|^2\right)^r
\right]
\]
by the integral of the \(r\)-th power of the right-hand side of
\eqref{eq:coefficientwise-rate} over \(0<\delta<1\).
The only singular endpoint integral is of the form
\[
\int_0^1\delta^{-r/2}(1+\log(16/\delta))^{4r}\,d\delta<\infty.
\]
All remaining integrals are finite moments of \(\log(1/\delta)\), and
thus
\[
\mathbb E\!\left[
\left(\frac1T\sum_{s=1}^{T-1}\|\nabla f(\w_s)\|^2\right)^r
\right]=\mathcal{O}(T^{-r/2}).
\]
For example,
\begin{align*}
\mathbb E\!\left[
\frac1T\sum_{s=1}^{T-1}\|\nabla f(\w_s)\|^2\right]\le{}&
\frac8{\bar\alpha\sqrt T}\int_0^1
U_\delta(\epsilon+\sqrt{\binit}+2\sqrt C\,\delta^{-1/2})\,d\delta\\
&+\frac{32}{\bar\alpha^2T}\int_0^1U_\delta^2\,d\delta.
\end{align*}
All parameter dependence in these finite integrals is specified by
\eqref{eq:coefficientwise-confidence-height} and
\eqref{eq:coefficientwise-parameter-scale}.
For each integer \(j\ge0\) and real \(s>0\), substitution
\(u=\log(1/\delta)\) followed by integration by parts gives
\begin{equation}
\int_0^1\delta^{s-1}\log^j(1/\delta)\,d\delta
=\int_0^\infty e^{-su}u^j\,du
=\frac{j!}{s^{j+1}}.
\label{eq:logarithmic-quantile-integral}
\end{equation}
Expand the fourth and eighth powers in
\eqref{eq:coefficientwise-confidence-height} by the binomial theorem.
Applying \eqref{eq:logarithmic-quantile-integral} with \(s=1\)
to the deterministic terms and with \(s=1/2\) to the noise term
yields \eqref{eq:coefficientwise-standard-mean}, proving
Claim~\ref{clm:squared-standard-rate}.
\end{proof}

\subsection{Proof of Theorem~\ref{thm:coefficientwise-expectation}}
\label{app:proof-expectation}

Use \(\delta_T,\bar\alpha_T,A_{\mathrm{tail}}\) from
\eqref{eq:coefficientwise-tail-calibration} and the confidence height
\eqref{eq:coefficientwise-confidence-height}.
The value \(\mathcal R(U_\delta)\), with \(\mathcal R\) from
\eqref{eq:coefficientwise-calibration-scale}, does not depend on \(\delta\): if
\(\eta_{\mathrm{gen}}>0\), then \(U_\delta=2^{20}\mathcal P^8\) by
\eqref{eq:coefficientwise-confidence-height}, and if \(\eta_{\mathrm{gen}}=0\),
then \(\mathcal R(U)=\mathcal P\) for every \(U\ge1\) (with \(\mathcal P\) from
\eqref{eq:coefficientwise-parameter-scale}), as shown after
\eqref{eq:stepsize-excess-appendix}. Since
\((1+\log(16/\delta))^{-\mathbf 1_{\{\eta_{\mathrm{gen}}>0\}}}\) is
nondecreasing in \(\delta\), the fixed prefactor \(\bar\alpha_T\) satisfies
\eqref{eq:coefficientwise-stepsize} for every
\(\delta\in[\delta_T,1)\).
Theorem~\ref{thm:coefficientwise-convergence} consequently applies to
the same run with prefactor \(\bar\alpha_T\) at every
\(\delta\in[\delta_T,1)\). These individual tail bounds can
therefore be integrated over the confidence level.

The bound in \eqref{eq:coefficientwise-rate}, with the denominator fixed
at \(\bar\alpha_T\) from \eqref{eq:coefficientwise-tail-calibration},
is decreasing in \(\delta\). By
\eqref{eq:coefficientwise-confidence-height} and
\eqref{eq:coefficientwise-local-scales},
\(U_\delta\) and \(C+\mathcal V_{\mathrm q}(U_\delta)\) grow at most polynomially in
\(1+\log(16/\delta)\). Thus integrating the bound over
\([\delta_T,1]\) gives
\[
\mathcal{O}\!\left(\frac1{\bar\alpha_T\sqrt T}
+\frac1{\bar\alpha_T^2T}\right).
\]
For the remaining quantiles \(\delta<\delta_T\), with \(\delta_T\) from
\eqref{eq:coefficientwise-tail-calibration}, Lemma~\ref{lem:gradient_distance_poly} at
\(\w_1=\q_1\) and
\(\|\w_t-\w_1\|\le2\sqrt d\,\bar\alpha_TT\le2T\) give the deterministic
bound
\begin{align*}
\frac1T\sum_{s=1}^{T-1}\|\nabla f(\w_s)\|^2
\le{}&\left[(1+\|\nabla f(\w_1)\|)^{1-p}+(1-p)(L_0+L_p)\right]^{2/(1-p)}\\
&\qquad\times(1+2T)^{2/(1-p)}.
\end{align*}
Their integral is at most this bound times \(\delta_T\). Since
\(1+2T\le3T\),
\(\delta_T(1+2T)^{2/(1-p)}\le3^{2/(1-p)}T^{2/(1-p)-A_{\mathrm{tail}}}\), which is
\(o(T^{-1/2})\) by the choice \(A_{\mathrm{tail}}>1/2+2/(1-p)\) in
\eqref{eq:coefficientwise-tail-calibration}.
Adding this contribution to the integral of
\eqref{eq:coefficientwise-rate} over \([\delta_T,1]\), with
\(U_\delta\), \(\mathcal V_{\mathrm q}\) and \(\bar\alpha_T\)
defined in \eqref{eq:coefficientwise-confidence-height},
\eqref{eq:coefficientwise-local-scales} and
\eqref{eq:coefficientwise-tail-calibration}, respectively, gives
the complete inequality \eqref{eq:coefficientwise-expectation-explicit}.
Finally, \eqref{eq:coefficientwise-tail-calibration} and
\eqref{eq:stepsize-exact-split} at \(\delta=\delta_T\) give
\(\bar\alpha_T^{-1}=2^{100}d\,\mathcal P^{48}\) when \(\eta_{\mathrm{gen}}=0\) and
\(\bar\alpha_T^{-1}=2^{100}d\,\mathcal R(2^{20}\mathcal P^8)^{48}
(1+\log16+A_{\mathrm{tail}}\log T)\) when \(\eta_{\mathrm{gen}}>0\).
The integrals in \eqref{eq:coefficientwise-expectation-explicit} are at most
their values over \((0,1]\), which are finite and independent of \(T\) and \(d\).
Hence the three integral terms of
\eqref{eq:coefficientwise-expectation-explicit} are
\(\mathcal{O}(d/\sqrt T+d^2/T)\) when \(\eta_{\mathrm{gen}}=0\) and
\(\mathcal{O}(d(1+\log T)/\sqrt T+d^2(1+\log T)^2/T)\) when
\(\eta_{\mathrm{gen}}>0\), while the rare-event term is \(o(T^{-1/2})\).
This proves Theorem~\ref{thm:coefficientwise-expectation}.

\section{Proof of Proposition~\ref{prop:delta_sharpness}}
\label{app:confidence-lower-proof}
\label{subsec:proof-prop-delta-sharpness}

\begin{proof}
We give the construction for the one-dimensional case. Let
\begin{equation}
\label{eq:confidence-lower-scales}
\chi_\star:=(\delta T)^{-1/4},
\qquad
H:=\sqrt{\frac{T}{2\delta}},
\qquad
L:=16\bar\alpha\sqrt{8}\,(\delta T)^{1/4}.
\end{equation}
Using the height \(\chi_\star\) and width \(L\) from
\eqref{eq:confidence-lower-scales}, define the continuous
piecewise-affine function
\begin{equation}
\label{eq:confidence-lower-ramp}
h(u)=
\begin{cases}
0,&u\le-2L,\\
\chi_\star(u+2L)/L,&-2L\le u\le-L,\\
\chi_\star,&-L\le u\le-\bar\alpha/8,\\
-8\chi_\star u/\bar\alpha,&-\bar\alpha/8\le u\le0,\\
0,&u\ge0,
\end{cases}
\qquad f(u)=\int_0^u h(r)\,dr.
\end{equation}
The pieces in \eqref{eq:confidence-lower-ramp} agree at their
endpoints. Since \(\delta T\ge1\), the scales in
\eqref{eq:confidence-lower-scales} satisfy
\(\chi_\star\le1\) and \(L\ge16\bar\alpha\sqrt8\).
Thus \(f\in C^1(\mathbb R)\), its derivative is globally Lipschitz
with constant at most \(8\chi_\star/\bar\alpha\le8/\bar\alpha\),
and it satisfies the smoothness assumption (Assumption~\ref{ass:smooth}) with
\(p=0\) and \(L_0=L_p=4/\bar\alpha\).
Moreover, \(h\) in \eqref{eq:confidence-lower-ramp} is nonnegative
and vanishes outside \([-2L,0]\). Using \(L\chi_\star\) from
\eqref{eq:confidence-lower-scales} gives
\[
f^*=-\int_{-2L}^{0}h(u)\,du> -\infty,\qquad
f(0)-f^*\le2L\chi_\star=32\sqrt8\,\bar\alpha.
\]

Set \(\w_1=0\), \(\mom_0=0\), and \(\acc_0=\binit>0\) in Algorithm~\ref{alg:adam}. Let the oracle be
\[
\dir_t=h(\w_t)+\omega_t,
\]
where the independent noise variables, with spike height \(H\) from
\eqref{eq:confidence-lower-scales}, satisfy
\begin{equation}
\label{eq:confidence-lower-noise}
\omega_t
=
\begin{cases}
H, & \text{with probability }\delta/T,\\
-H, & \text{with probability }\delta/T,\\
0, & \text{with probability }1-2\delta/T.
\end{cases}
\end{equation}
The noise law \eqref{eq:confidence-lower-noise} and the spike height
in \eqref{eq:confidence-lower-scales} give
\(\mathbb E[\omega_t]=0\) and \(\mathbb E[\omega_t^2]=1\), so
\[
\mathbb E[\dir_t\mid\mathscr F_{t-1}]=h(\w_t)=\nabla f(\w_t),
\qquad
\mathbb E[\dir_t^2\mid\mathscr F_{t-1}]
=h(\w_t)^2+1.
\]
Thus Assumption~\ref{ass:abc} holds with \(A=0\), \(B=1\),
\(\rho_1=0\), \(\rho_2=2\), and \(C=1\), independently of \(T,\delta\).

For the noise in \eqref{eq:confidence-lower-noise}, let
\begin{equation}
\label{eq:confidence-lower-spike-event}
\mathcal S:=\bigcup_{k=1}^{\lfloor T/2\rfloor}
\left\{\omega_k=H,\quad
\omega_t=0\ \text{for }1\le t<T,\ t\ne k\right\}.
\end{equation}
Thus exactly one positive spike occurs in the first
\(\lfloor T/2\rfloor\) iterations, and all other noise values before
\(T\) vanish. Since \(\delta\le1/8\),
\[
\mathbb P(\mathcal S)
=
\left\lfloor\frac{T}{2}\right\rfloor
\frac{\delta}{T}
\left(1-\frac{2\delta}{T}\right)^{T-2}
\ge
\frac\delta4.
\]
Indeed, \(\lfloor T/2\rfloor/T\ge1/3\), and Bernoulli's inequality
bounds the survival factor below by \(1-2\delta\ge3/4\).
Figure~\ref{fig:confidence-dynamics} illustrates the entry into the
constant-gradient region and the ensuing slow motion.
\begin{figure}[H]
\centering
\begin{minipage}[t]{0.49\linewidth}
\centering
\textbf{\footnotesize (a) Entry and slow descent}\par\smallskip
\begin{tikzpicture}[x=0.94cm,y=0.68cm,font=\fontsize{8}{9}\selectfont,
  line cap=round,>=Stealth]
\path[use as bounding box] (-0.10,-1.24) rectangle (7.00,2.85);
\fill[blue!6] (2.15,0) rectangle (5.00,1.55);
\draw[->,black!65] (0.10,0) -- (6.70,0) node[below] {$\theta$};
\draw[->,black!65] (0.35,-0.03) -- (0.35,2.35) node[above] {$h(\theta)=f'(\theta)$};
\draw[very thick,blue!65!black] (0.40,0) -- (1.15,0) -- (2.15,1.55)
  -- (5.00,1.55) -- (5.85,0) -- (6.45,0);
\draw[black!40,dashed] (0.35,1.55) -- (2.15,1.55);
\node[left] at (0.35,1.55) {$\chi_\star$};
\foreach \x/\lab in {1.15/{-2L},2.15/{-L},5.00/{-\bar\alpha/8},5.85/{0}}
  {\draw[black!55] (\x,0.06) -- (\x,-0.06);
   \node[below] at (\x,-0.07) {$\lab$};}
\node[align=center] at (3.55,2.27) {$\theta_{k+1}\in[-\bar\alpha,-\bar\alpha/4]$};
\draw[black!40,dashed] (4.25,0.12) -- (4.25,1.88);
\fill[orange!85!black] (5.85,0) circle (1.5pt);
\draw[->,thick,orange!85!black] (5.83,0.25)
  to[out=125,in=20] (4.25,0.58);
\node[orange!85!black] at (5.05,0.30) {$g_k=H$};
\draw[->,thick,orange!85!black] (4.25,0.54) -- (3.02,0.54);
\foreach \x in {4.25,4.02,3.83,3.67}
  \fill[orange!85!black] (\x,0.54) circle (1.15pt);
\node at (3.60,-0.95)
  {after entry: total additional travel $\le L/16$};
\end{tikzpicture}
\end{minipage}\hfill
\begin{minipage}[t]{0.49\linewidth}
\centering
\textbf{\footnotesize (b) Persistent denominator}\par\smallskip
\begin{tikzpicture}[x=0.94cm,y=0.68cm,font=\fontsize{8}{9}\selectfont,
  line cap=round,>=Stealth]
\path[use as bounding box] (-0.10,-1.24) rectangle (7.00,2.85);
\draw[->,black!65] (0.45,0) -- (6.60,0) node[right] {$t$};
\draw[->,black!65] (0.45,-0.03) -- (0.45,2.48) node[above] {$v_t$};
\draw[black!40,dotted] (0.45,0.58) -- (6.15,0.58);
\node[below right] at (4.70,0.57) {$1/(8\delta)$};
\draw[->,very thick,orange!85!black] (1.70,0.15) -- (1.70,1.83);
\node[above left] at (1.66,1.84) {$1/(2\delta)$};
\draw[very thick,blue!65!black,dashed,domain=1.70:6.10,samples=50]
  plot (\x,{1.8*exp(-0.75*(\x-1.70)/4.40)});
\node[align=center,fill=white,inner sep=2pt] at (4.50,1.84)
  {$v_t\ge\beta_2^{t-k}/(2\delta)$};
\draw[black!45,->] (4.50,1.62) -- (4.50,1.20);
\draw[black!45] (1.70,0.06) -- (1.70,-0.06);
\node[below] at (1.70,-0.07) {$k$};
\draw[black!45] (6.10,0.06) -- (6.10,-0.06);
\node[below] at (6.10,-0.07) {$T-1$};
\node at (3.75,-0.95)
  {$|\theta_{t+1}-\theta_t|\le
    \bar\alpha\sqrt8\,\chi_\star\sqrt{\delta/T}$ for $k+1\le t<T$};
\end{tikzpicture}
\end{minipage}
\caption{Confidence lower-bound dynamics on the single-spike event
\(\mathcal S\) from \eqref{eq:confidence-lower-spike-event}, with
\(\beta_1=0\). The scales \(\chi_\star,H,L\) are defined in
\eqref{eq:confidence-lower-scales}, and (a) shows the derivative
in \eqref{eq:confidence-lower-ramp}. At the spike time \(k\), the positive
oracle output sends the iterate left into the shaded plateau.
Subsequent steps remain leftward but cannot leave it before the horizon.
Panel (b) shows a lower envelope of the accumulator, not an exact
accumulator trajectory: the spike contribution decays only by
\(\beta_2^{t-k}\) and stays above \(1/(8\delta)\).
Both panels are schematic and not to scale.}
\label{fig:confidence-dynamics}
\end{figure}

On the event \(\mathcal S\) in
\eqref{eq:confidence-lower-spike-event}, let \(k\le T/2\) denote the
spike time. Before time \(k\), the iterate remains at \(0\), because
\(h(0)=0\) in \eqref{eq:confidence-lower-ramp} and the noise is zero.
At time \(k\), \(\dir_k=H\), with \(H\) from
\eqref{eq:confidence-lower-scales}, and
\[
\acc_k
\le
\binit+\frac{H^2}{T}
=
\binit+\frac{1}{2\delta},
\qquad
\acc_k
\ge
\frac{H^2}{T}
=
\frac{1}{2\delta}.
\]
The confidence range on \(\delta\) in Proposition~\ref{prop:delta_sharpness} ensures
\[
\sqrt{1+2\delta \binit}+\epsilon\sqrt{2\delta}
\le\frac{\sqrt5+1}{2}<4.
\]
Substituting \(H\) from \eqref{eq:confidence-lower-scales}, the
resulting Adam step satisfies
\[
\w_{k+1}
=
-\frac{\bar\alpha}{\sqrt T}
\frac{H}{\sqrt{\acc_k}+\epsilon}
=
-\frac{\bar\alpha}{\sqrt{2\delta\acc_k}+\epsilon\sqrt{2\delta}}.
\]
Because \(\acc_k\ge1/(2\delta)\), we have
\(|\w_{k+1}|\le\bar\alpha\). The upper accumulator bound gives
\[
\sqrt{2\delta\acc_k}+\epsilon\sqrt{2\delta}
\le\sqrt{1+2\delta\binit}+\epsilon\sqrt{2\delta}\le4,
\]
and hence \(|\w_{k+1}|\ge\bar\alpha/4\). Therefore
\[
\w_{k+1}\in[-\bar\alpha,-\bar\alpha/4]\subset[-L,-\bar\alpha/8],
\]
so the iterate enters the constant-gradient part of
\eqref{eq:confidence-lower-ramp}.

We claim that \(\w_t\in[-L,-\bar\alpha/8]\) for all
\(t=k+1,\ldots,T-1\), where \(L\) is defined in
\eqref{eq:confidence-lower-scales}. We prove this by induction.
On \(\mathcal S\) in \eqref{eq:confidence-lower-spike-event}, there
are no further noise spikes. As long as the iterates remain in the
constant part of \eqref{eq:confidence-lower-ramp},
\(\dir_t=h(\w_t)=\chi_\star\), the updates move monotonically to the
left, and the accumulator remains large:
\[
\acc_t
\ge
\beta_2^{t-k}\acc_k
\ge
\frac{1}{8\delta},
\qquad
k\le t\le T-1.
\]
Here we used \((1-1/T)^T\ge1/4\) for \(T\ge2\).
Therefore each later displacement in the ramp
\eqref{eq:confidence-lower-ramp}, whose height \(\chi_\star\) is
given in \eqref{eq:confidence-lower-scales}, is bounded by
\[
|\w_{t+1}-\w_t|
=
\frac{\bar\alpha}{\sqrt T}
\frac{\chi_\star}{\sqrt{\acc_t}+\epsilon}
\le
\bar\alpha\sqrt{8}\,
\frac{\chi_\star\sqrt\delta}{\sqrt T}.
\]
Summing over at most \(T\) steps and using the definitions of
\(\chi_\star,L\) in \eqref{eq:confidence-lower-scales} gives
\[
\sum_{t=k+1}^{T-1}|\w_{t+1}-\w_t|
\le
\bar\alpha\sqrt{8}\,\chi_\star\sqrt{\delta T}
=
\bar\alpha\sqrt{8}\,(\delta T)^{1/4}
\le
\frac{L}{16}.
\]
With \(L\) from \eqref{eq:confidence-lower-scales}, for every later
time covered by the induction,
\[
\w_t\ge \w_{k+1}-\frac{L}{16}\ge-\bar\alpha-\frac{L}{16}\ge -L,
\qquad
\w_t\le \w_{k+1}\le-\frac{\bar\alpha}{4}\le-\frac{\bar\alpha}{8}.
\]
This closes the induction and proves \(\w_t\in[-L,-\bar\alpha/8]\).
The definition \eqref{eq:confidence-lower-ramp} then gives
\[
|\nabla f(\w_t)|=\chi_\star,
\qquad
t=k+1,\ldots,T-1.
\]
Because \(k\le T/2\), substituting \(\chi_\star\) from
\eqref{eq:confidence-lower-scales} yields on the event
\eqref{eq:confidence-lower-spike-event}
\[
\frac1T\sum_{t=1}^{T-1}|\nabla f(\w_t)|^2
\ge
\frac{T/2-1}{T}\chi_\star^2
\ge
\frac1{4\sqrt{\delta T}},
\]
where \(T\ge4\) was used in the last step. Together with
\(\mathbb P(\mathcal S)\ge\delta/4\), this proves the probability
and stationarity assertions.

It remains to compare the construction on a common parameter class with
the calibrated upper bound. If
\(\delta T\ge(8/\bar\alpha)^4\), then the height in
\eqref{eq:confidence-lower-scales} satisfies
\(\chi_\star\le\bar\alpha/8\), so the Lipschitz constant
\(8\chi_\star/\bar\alpha\) of \(h\) from \eqref{eq:confidence-lower-ramp} is at most one.
Thus all these instances satisfy the common constants
\[
p=0,\quad L_0=L_p=1,\quad A=0,\quad B=1,\quad
\rho_1=0,\quad\rho_2=2,\quad C=1.
\]
Their initial gradient is zero because \(h(0)=0\) in
\eqref{eq:confidence-lower-ramp}, and their initial gap is at most
\(32\sqrt8\,\bar\alpha\le91\). In this specialization and with
\(d=1,\beta_1=0\), the baseline parameter sum in
\eqref{eq:coefficientwise-stepsize} is therefore bounded by
\[
100+\epsilon+\binit^{-1}+\binit^{-3/2}.
\]
Every fixed positive prefactor at most
\(2^{-100}(100+\epsilon+\binit^{-1}+\binit^{-3/2})^{-48}\)
is consequently admitted by Theorem~\ref{thm:coefficientwise-convergence} for the entire constructed
class, independently of \(T\) and \(\delta\).

Now use \(\delta=8\eta\). For each sufficiently small \(\eta>0\),
choose any integer \(T\ge10\) large enough that
\(8\eta T\ge(8/\bar\alpha)^4\). The already proved estimates on the event
\(\mathcal S\) from \eqref{eq:confidence-lower-spike-event} give
\[
\mathbb P\!\left(
\frac1T\sum_{t=1}^{T-1}|f'(\theta_t)|^2
\ge\frac1{8\sqrt2\sqrt{\eta T}}
\right)\ge2\eta.
\]
For every fixed \(k\ge0\),
\(\eta^{-1/2}/\log^k(e/\eta)\to\infty\) as \(\eta\downarrow0\).
Hence the lower threshold eventually exceeds any uniform bound of
order \(\log^k(e/\eta)/\sqrt T\), while the probability of exceeding
that bound is strictly larger than \(\eta\). Any additional term of
order \(\operatorname{polylog}(1/\eta)/T\) can also be made smaller
by increasing \(T\). This establishes the asserted confidence
obstruction for the same stationarity statistic and an admissible fixed
prefactor.
\end{proof}

\section{Proof of Proposition~\ref{prop:expected_stationarity_lower_bound}}
\label{app:expectation-lower-proof}
In the one-dimensional constructions below, \(\theta_t,m_t,v_t\)
denote the scalar iterate, momentum and accumulator of Algorithm~\ref{alg:adam}.
\begin{proof}
Let \(\mathcal C_p\) consist of the deterministic quadratic instance
\begin{equation}
\label{eq:expected-lower-quadratic-instance}
f_{\mathrm q}(x)=x^2/2,\qquad
\mathsf G_{\mathrm q}(x)=x,\qquad\theta_1=1,
\end{equation}
and the following countable family, indexed by integers \(n\ge1\):
\begin{equation}
\label{eq:expected-lower-staircase-gradient}
h_n(x)=
\begin{cases}
0,&x\le-2,\\
x+2,&-2\le x\le0,\\
2^{j+1},&2j\le x\le2j+1,\quad 0\le j<n,\\
2^{j+1}(x-2j),&2j+1\le x\le2j+2,\quad0\le j<n,\\
2^{n+1},&x\ge2n,
\end{cases}
\end{equation}
\begin{equation}
\label{eq:expected-lower-staircase-objective}
f_n(x)=\int_{-2}^{x}h_n(y)\,dy,\qquad\theta_1=0.
\end{equation}
The pieces in \eqref{eq:expected-lower-staircase-gradient} agree at
their common endpoints. In particular, \(h_n\) is
continuous and nondecreasing, \(f_n\in C^1(\mathbb R)\),
\(f_n^*=0\), \(f_n(0)=2\), and \(f_n'(0)=2\).

We first verify the exact smoothness condition for
\eqref{eq:expected-lower-staircase-gradient}--\eqref{eq:expected-lower-staircase-objective}.
Suppose \(u<v\) and
\(v-u\le1\). The interval intersects the interior of at most one ramp
\((2j+1,2j+2)\). If it intersects that ramp, then \(u>2j\), so
\(h_n(u)\ge2^{j+1}\). The slope on that ramp is \(2^{j+1}\), which is at
most \(h_n(u)^p\) because \(p\ge1\). If the interval instead intersects
\((-2,0)\), the slope there is one; all remaining slopes on the interval
are zero. Thus
\[
0\le h_n(v)-h_n(u)
\le \bigl(1+h_n(u)^p\bigr)(v-u).
\]
Monotonicity gives the same bound with \(h_n(v)^p\) on the right, proving
Assumption~\ref{ass:smooth} in both orders. The quadratic satisfies the
same assumption since \(|f_{\mathrm q}'(u)-f_{\mathrm q}'(v)|=|u-v|\)
by \eqref{eq:expected-lower-quadratic-instance}.

For the staircase gradient \(h_n\) in
\eqref{eq:expected-lower-staircase-gradient}, write \(h=h_n(x)\)
and define the oracle by
\begin{equation}
\label{eq:expected-lower-staircase-oracle}
\mathsf G_n(x)=
\begin{cases}
h,&0\le h<2,\\
-h\quad\text{with probability }1-h^{-1},&h\ge2,\\
2h^2-h\quad\text{with probability }h^{-1},&h\ge2.
\end{cases}
\end{equation}
At different calls use independent uniform random variables to select the
branch. For \(h=h_n(x)\ge2\), the law in
\eqref{eq:expected-lower-staircase-oracle} gives
\[
\mathbb E\mathsf G_n(x)=h,\qquad
\mathbb E\mathsf G_n(x)^2=4h^3-3h^2\le4h^3.
\]
For \(0\le h<2\), the deterministic branch of
\eqref{eq:expected-lower-staircase-oracle} has second moment
\(h^2\le4h^3+1\): when \(h\le1\), use \(h^2\le1\), and when
\(h\ge1\), use \(h^2\le h^3\). The same two cases, with \(|x|\), show
\(x^2\le4|x|^3+1\) for the quadratic in
\eqref{eq:expected-lower-quadratic-instance}. Hence all members of
\(\mathcal C_p\), given by
\eqref{eq:expected-lower-quadratic-instance},
\eqref{eq:expected-lower-staircase-objective}, and
\eqref{eq:expected-lower-staircase-oracle}, satisfy the stated
unbiasedness and ABC conditions.

For the quadratic in \eqref{eq:expected-lower-quadratic-instance},
as long as the preceding iterates lie in \([1/2,1]\),
the momentum recursion of Algorithm~\ref{alg:adam} gives \(0\le m_t\le1\). Therefore
\[
0\le\theta_t-\theta_{t+1}
=\frac{\alpha m_t}{\sqrt{v_t}+\epsilon}
\le\frac{\alpha}{\epsilon}.
\]
By induction, the first
\(\min\{T-1,\lfloor\epsilon/(2\alpha)\rfloor+1\}\) iterates lie in
\([1/2,1]\). For \(T\ge2\), with \(\alpha=\bar\alpha_T/\sqrt T\) as in
Proposition~\ref{prop:expected_stationarity_lower_bound}, this proves
\begin{equation}
\label{eq:expected_lower_quadratic}
\frac1T\sum_{t=1}^{T-1}|f_{\mathrm q}'(\theta_t)|^2
\ge\min\left\{\frac18,
\frac{\epsilon}{8\bar\alpha_T\sqrt T}\right\}.
\end{equation}
In particular, when
\begin{equation}
\label{eq:expected_lower_small_step}
\bar\alpha_T\le
\frac{2^{16}(1+\sqrt{\binit}+\epsilon)^2}{(1-\beta_1)^2}\,T^{-1/6},
\end{equation}
the bound in \eqref{eq:expected_stationarity_lower_bound} follows from
\eqref{eq:expected_lower_quadratic}. Indeed,
\(\epsilon(1-\beta_1)^2/[2^{19}(1+\sqrt{\binit}+\epsilon)^2]<1/8\).

For larger stepsizes, namely those violating
\eqref{eq:expected_lower_small_step}, consider the deterministic path obtained by always
selecting the negative branch \(-h_n(\theta_t)\) of
\eqref{eq:expected-lower-staircase-oracle} until the path reaches
\(2n\). Call the intervals
\begin{equation}
\label{eq:expected-lower-blocks}
[2j,2j+2),\qquad 0\le j<n,
\end{equation}
the staircase blocks. Extend the
same deterministic recursion for as many steps as needed to define its
hitting times
\begin{equation}
\label{eq:expected-lower-hitting-times}
\widehat\tau_j:=\inf\{t\ge1:\theta_t\ge2j\},\qquad 0\le j\le n.
\end{equation}
The infimum of the empty set is interpreted as infinity. We next prove that these times are finite and that
\begin{equation}
\label{eq:expected_lower_residence}
\widehat\tau_{j+1}-\widehat\tau_j
\le
\frac{2^{16}(1+\sqrt{\binit}+\epsilon)^2}{(1-\beta_1)^2}
\left(
\frac{1}{\min\{1,\bar\alpha_T\}^2}
+\frac{\sqrt T}{2^{j+1}\min\{1,\bar\alpha_T\}}
\right),\quad 0\le j<n.
\end{equation}
Along the negative-branch path of
\eqref{eq:expected-lower-staircase-oracle}, \(m_t<0\),
\(|m_t|\ge(1-\beta_1)h_n(\theta_t)\), and \(\theta_{t+1}>\theta_t\).
The bound for the hitting times in
\eqref{eq:expected-lower-hitting-times} is proved inductively.
A block in \eqref{eq:expected-lower-blocks} that is skipped has
zero residence time, so suppose \(\widehat\tau_j<\widehat\tau_{j+1}\).
In each preceding occupied block \(k\), the definition
\eqref{eq:expected-lower-staircase-gradient} gives
\(h_n(\theta_s)<2^{k+2}\). The induction hypothesis, namely
\eqref{eq:expected_lower_residence} for the earlier blocks, and geometric sums give
\begin{align}
\frac{1}{4^{j+1}}\sum_{s<\widehat\tau_j}h_n(\theta_s)^2
&\le
\frac{2^{18}(1+\sqrt{\binit}+\epsilon)^2}{(1-\beta_1)^2}
\left(
\frac{1}{3\min\{1,\bar\alpha_T\}^2}
+\frac{\sqrt T}{2^{j+1}\min\{1,\bar\alpha_T\}}
\right).
\label{eq:expected_lower_previous_blocks}
\end{align}
For \(j=0\), the sum on the left is empty, which starts the induction.

Let
\begin{equation}
\label{eq:expected-lower-block-update-count}
N_j:=\left\lfloor
\frac{2^{16}(1+\sqrt{\binit}+\epsilon)^2}{(1-\beta_1)^2}
\left(
\frac{1}{\min\{1,\bar\alpha_T\}^2}
+\frac{\sqrt T}{2^{j+1}\min\{1,\bar\alpha_T\}}
\right)\right\rfloor,
\end{equation}
the integer part of the right side of
\eqref{eq:expected_lower_residence}. Suppose the path has not left block
\(j\) after \(N_j\) further updates. For all those updates,
\(2^{j+1}\le h_n(\theta_t)<2^{j+2}\), and the accumulator recursion of
Algorithm~\ref{alg:adam} and
\eqref{eq:expected_lower_previous_blocks} imply
\begin{equation*}
v_t\le\binit+
\frac{2^{19}(1+\sqrt{\binit}+\epsilon)^2\,4^{j+1}}
     {(1-\beta_1)^2T}
\left(
\frac{1}{\min\{1,\bar\alpha_T\}^2}
+\frac{\sqrt T}{2^{j+1}\min\{1,\bar\alpha_T\}}
\right).
\end{equation*}
Here we used \(v_t\le\binit+T^{-1}\sum_{s\le t}h_n(\theta_s)^2\).

To verify that the \(N_j\) updates in
\eqref{eq:expected-lower-block-update-count} traverse a block of
\eqref{eq:expected-lower-blocks}, consider their total displacement. Since the
quantity whose integer part defines \(N_j\) is at least \(2^{16}\), its
integer part is at least half that quantity. Also
\(\sqrt{\binit}+\epsilon\le1+\sqrt{\binit}+\epsilon\). Thus the total
displacement is at least
\begin{equation}
\label{eq:expected_lower_displacement}
\frac{
\dfrac{2^{15}(1+\sqrt{\binit}+\epsilon)^2}{1-\beta_1}
\left(\dfrac{1}{\min\{1,\bar\alpha_T\}}
      +\dfrac{\sqrt T}{2^{j+1}}\right)
}{
\dfrac{(1+\sqrt{\binit}+\epsilon)\sqrt T}{2^{j+1}}
+\dfrac{2^8\sqrt8(1+\sqrt{\binit}+\epsilon)}{1-\beta_1}
\sqrt{\dfrac{1}{\min\{1,\bar\alpha_T\}^2}
      +\dfrac{\sqrt T}{2^{j+1}\min\{1,\bar\alpha_T\}}}
}.
\end{equation}
The square root in the denominator satisfies
\[
\sqrt{\frac{1}{\min\{1,\bar\alpha_T\}^2}
      +\frac{\sqrt T}{2^{j+1}\min\{1,\bar\alpha_T\}}}
\le
\frac{1}{\min\{1,\bar\alpha_T\}}+\frac{\sqrt T}{2^{j+1}}.
\]
Indeed, the square of the right side exceeds the radicand by a nonnegative
quantity. The other term in the denominator is bounded using
\(\sqrt T/2^{j+1}\le1/\min\{1,\bar\alpha_T\}+\sqrt T/2^{j+1}\).
Consequently \eqref{eq:expected_lower_displacement} is at least
\[
\frac{2^{15}(1+\sqrt{\binit}+\epsilon)}
     {(1-\beta_1)+2^8\sqrt8}
\ge\frac{2^{15}}{1+2^{10}}>2.
\]
This contradicts residence in a block of width two (see
\eqref{eq:expected-lower-blocks}) and proves
\eqref{eq:expected_lower_residence}, including finiteness of the next
hitting time. This induction also allows arbitrary overshoots and skipped
blocks.

Summing \eqref{eq:expected_lower_residence} over the blocks in
\eqref{eq:expected-lower-blocks} and using
\(\sum_{j\ge0}2^{-j-1}=1\) gives, for the hitting time defined in
\eqref{eq:expected-lower-hitting-times},
\begin{equation}
\label{eq:expected_lower_hitting}
\widehat\tau_n-1\le
\frac{2^{16}(1+\sqrt{\binit}+\epsilon)^2}{(1-\beta_1)^2}
\left(
\frac{n}{\min\{1,\bar\alpha_T\}^2}
+\frac{\sqrt T}{\min\{1,\bar\alpha_T\}}
\right).
\end{equation}
The adverse path and its hitting times in
\eqref{eq:expected-lower-hitting-times} are deterministic. Let
\begin{equation}
\label{eq:expected-lower-adverse-event}
E_n:=\left\{\dir_t=-h_n(\theta_t)\ \text{for every }1\le t<\widehat\tau_n\right\},
\end{equation}
the event that the oracle in \eqref{eq:expected-lower-staircase-oracle}
selects its negative branch up to that hitting time. Since
\(\log(1-x)\ge-2x\) for \(0\le x\le1/2\), the residence bound
\eqref{eq:expected_lower_residence} gives
\begin{align}
\log\mathbb P(E_n)
&\ge-2\sum_{j=0}^{n-1}
\frac{\widehat\tau_{j+1}-\widehat\tau_j}{2^{j+1}}\notag\\
&\ge-
\frac{2^{18}(1+\sqrt{\binit}+\epsilon)^2}{(1-\beta_1)^2}
\left(
\frac{1}{\min\{1,\bar\alpha_T\}^2}
+\frac{\sqrt T}{\min\{1,\bar\alpha_T\}}
\right).
\label{eq:expected_lower_probability}
\end{align}

Choose the objective in \eqref{eq:expected-lower-staircase-objective}
with
\begin{equation}
\label{eq:expected-lower-staircase-height-index}
n=\left\lfloor
\frac{(1-\beta_1)^2T\min\{1,\bar\alpha_T\}^2}
     {2^{18}(1+\sqrt{\binit}+\epsilon)^2}
\right\rfloor.
\end{equation}
When \eqref{eq:expected_lower_small_step} fails and \(T\) is sufficiently
large, the integer in \eqref{eq:expected-lower-staircase-height-index}
is positive,
\begin{equation}
\label{eq:expected-lower-large-prefactor}
\min\{1,\bar\alpha_T\}
\ge\frac{2^{16}(1+\sqrt{\binit}+\epsilon)^2}
         {(1-\beta_1)^2}T^{-1/6},
\end{equation}
and \eqref{eq:expected_lower_hitting} yields
\(\widehat\tau_n\le T/2+1\le T-1\).
On \(E_n\) from \eqref{eq:expected-lower-adverse-event},
\eqref{eq:expected-lower-staircase-gradient} gives true gradient
\(2^{n+1}\) at the deterministic index \(\widehat\tau_n\).
Therefore
\[
\frac1T\sum_{t=1}^{T-1}\mathbb E|f_n'(\theta_t)|^2
\ge\frac{4^{n+1}}{T}\mathbb P(E_n).
\]
Figure~\ref{fig:expectation-dynamics} depicts the initial slow descent
on the quadratic and the conditional climb to this arrival gradient.
\begin{figure}[H]
\centering
\begin{minipage}[t]{0.485\linewidth}
\centering
{\small\bfseries (a) Small prefactor: slow descent}\par\smallskip
\begin{tikzpicture}[
  x=1cm,y=1cm,
  font=\fontsize{8.2}{10}\selectfont,
  >={Stealth[length=1.7mm]},
  line cap=round,line join=round]
  \path[use as bounding box] (-0.55,-1.03) rectangle (5.85,3.0);
  \fill[teal!7] (2,0) rectangle (4,2.43);
  \draw[->,black!55] (0,0) -- (5.25,0) node[right] {$x$};
  \draw[->,black!55] (0,0) -- (0,2.7);
  \node[anchor=west] at (0.13,2.68) {$f_{\mathrm q}(x)=x^2/2$};
  \draw[thick,teal!65!black,domain=0:1.1,samples=45]
    plot ({4*\x},{2*\x*\x});
  \draw[densely dashed,black!35] (2,0) -- (2,0.5);
  \draw[densely dashed,black!35] (4,0) -- (4,2);
  \node[below] at (0,0) {$0$};
  \node[below] at (2,0) {$1/2$};
  \node[below] at (4,0) {$1$};
  \fill[teal!65!black] (4,2) circle (1.5pt);
  \node[anchor=west] at (4.1,2.06) {$\theta_1=1$};
  \draw[->,very thick,teal!65!black]
    (3.67,1.684) .. controls (3.37,1.409) and (3.06,1.173) .. (2.77,0.959);
  \node[align=center,anchor=west] at (3.43,0.81)
    {slow leftward\\motion};
  \draw[<->,teal!65!black] (2,-0.57) -- (4,-0.57);
  \node[below,teal!65!black] at (3,-0.57) {initial iterates in $[1/2,1]$};
\end{tikzpicture}\par
{\fontsize{8.2}{10.5}\selectfont
\(0\le\theta_t-\theta_{t+1}\le\alpha/\epsilon\).\par
Each such iterate has \(|f'_{\mathrm q}(\theta_t)|^2\ge1/4\).\par}
\end{minipage}\hfill
\begin{minipage}[t]{0.485\linewidth}
\centering
{\small\bfseries (b) Large prefactor: conditional climb}\par\smallskip
\begin{tikzpicture}[
  x=1cm,y=1cm,
  font=\fontsize{8.2}{10}\selectfont,
  >={Stealth[length=1.7mm]},
  line cap=round,line join=round]
  \path[use as bounding box] (-0.55,-1.03) rectangle (5.85,3.0);
  \draw[->,black!55] (0,0) -- (5.35,0) node[right] {$x$};
  \draw[->,black!55] (0,0) -- (0,2.7);
  \node[anchor=west] at (0.13,2.68) {$h_n(x)=f'_n(x)$};
  \draw[thick,blue!55!black]
    (0,0.48) -- (0.86,0.48) -- (1.72,0.96) -- (2.58,0.96)
    -- (2.96,1.17);
  \draw[densely dotted,thick,blue!55!black] (2.96,1.17) -- (3.98,1.76);
  \node[fill=white,inner sep=1pt] at (3.45,1.47) {$\cdots$};
  \draw[thick,blue!55!black] (3.98,1.76) -- (4.44,2.27) -- (5.3,2.27);
  \node[left] at (0,0.48) {$2$};
  \node[left] at (0,0.96) {$4$};
  \node[above] at (4.82,2.28) {$2^{n+1}$};
  \draw[densely dashed,black!30] (0.86,0) -- (0.86,0.48);
  \draw[densely dashed,black!30] (1.72,0) -- (1.72,0.96);
  \draw[densely dashed,black!35] (4.44,0) -- (4.44,2.27);
  \node[below] at (0,0) {$0$};
  \node[below] at (0.86,0) {$1$};
  \node[below] at (1.72,0) {$2$};
  \node[below] at (4.44,0) {$2n$};
  \fill[blue!55!black] (0,0.48) circle (1.4pt);
  \fill[orange!70!black] (4.93,2.27) circle (1.6pt);
  \draw[densely dashed,orange!70!black] (4.93,0) -- (4.93,2.27);
  \draw[->,thick,orange!70!black] (0.22,0.17) -- (1.5,0.17);
  \draw[->,thick,dashed,orange!70!black]
    (1.85,0.17) .. controls (2.58,0.54) and (3.74,0.54) .. (4.93,0.17);
  \node[align=center,orange!70!black] at (2.68,-0.62)
    {rightward motion on $E_n$;\\blocks may be skipped};
  \node[anchor=west,align=left] at (4.42,-0.63)
    {arrival at\\$\theta_{\widehat\tau_n}$};
\end{tikzpicture}\par
{\fontsize{8.2}{10.5}\selectfont
On \(E_n\), for \(t<\widehat\tau_n\):\par
\(\dir_t=-h_n(\theta_t)\ \Rightarrow\ m_t<0\ \Rightarrow\ \theta_{t+1}>\theta_t\).\par}
\end{minipage}
\par\medskip
\begin{minipage}{0.97\linewidth}
\centering\fontsize{8.2}{10.5}\selectfont
For \(h=h_n(x)\ge2\), the oracle selects \(-h\) with probability
\(1-h^{-1}\), and \(2h^2-h\) with probability \(h^{-1}\), so its mean is \(h\).
\end{minipage}
\caption{Dynamics of the two instances in
Proposition~\ref{prop:expected_stationarity_lower_bound}.
The quadratic \eqref{eq:expected-lower-quadratic-instance} retains an
initial segment of iterates in \([1/2,1]\), as quantified by
\eqref{eq:expected_lower_quadratic}.
The staircase is the derivative \(h_n\) in
\eqref{eq:expected-lower-staircase-gradient}; its plateaus are joined
continuously by linear ramps. The rightward arrows are conditional on
the negative-oracle event \(E_n\) in
\eqref{eq:expected-lower-adverse-event}, with the two branches given by
\eqref{eq:expected-lower-staircase-oracle}.
They end at the first arrival \(\widehat\tau_n\) from
\eqref{eq:expected-lower-hitting-times}, where
\(h_n(\theta_{\widehat\tau_n})=2^{n+1}\).
Choosing \(n\) as in \eqref{eq:expected-lower-staircase-height-index}
places this index before \(T\); its single squared-gradient contribution
gives the lower bound \(4^{n+1}\mathbb P(E_n)/T\).
Arrows indicate directions schematically; intermediate staircase blocks
are omitted.}
\label{fig:expectation-dynamics}
\end{figure}

Using the choice \eqref{eq:expected-lower-staircase-height-index},
which gives \(n+1\ge
(1-\beta_1)^2T\min\{1,\bar\alpha_T\}^2/
[2^{18}(1+\sqrt{\binit}+\epsilon)^2]\), together with the event-probability
bound \eqref{eq:expected_lower_probability}, the logarithm of the
right side is bounded below by
\begin{align*}
&\frac{(1-\beta_1)^2\log4}
      {2^{18}(1+\sqrt{\binit}+\epsilon)^2}
  T\min\{1,\bar\alpha_T\}^2\\
&\qquad-
\frac{2^{18}(1+\sqrt{\binit}+\epsilon)^2}{(1-\beta_1)^2}
\left(
\frac{\sqrt T}{\min\{1,\bar\alpha_T\}}
+\frac{1}{\min\{1,\bar\alpha_T\}^2}
\right)-\log T.
\end{align*}
This expression is increasing in \(\min\{1,\bar\alpha_T\}>0\).
Substituting \eqref{eq:expected-lower-large-prefactor} gives
\[
\left(
\frac{2^{14}(1+\sqrt{\binit}+\epsilon)^2\log4}{(1-\beta_1)^2}-4
\right)T^{2/3}
-\frac{(1-\beta_1)^2}{2^{14}(1+\sqrt{\binit}+\epsilon)^2}T^{1/3}
-\log T.
\]
The coefficient of \(T^{2/3}\) is positive, so this is nonnegative for all
sufficiently large \(T\), uniformly over stepsizes in this case. Hence
the expected stationarity average is at least one for this member of
\(\mathcal C_p\), the objective \(f_n\) from
\eqref{eq:expected-lower-staircase-objective} with \(n\) from
\eqref{eq:expected-lower-staircase-height-index}. Together with the
quadratic case \eqref{eq:expected_lower_quadratic} and the fact that
\(\epsilon(1-\beta_1)^2/[2^{19}(1+\sqrt{\binit}+\epsilon)^2]<1\), this
proves \eqref{eq:expected_stationarity_lower_bound}.
\end{proof}

\end{document}